\newcommand\nnfootnote[2]{%
  \begin{NoHyper}
  \renewcommand\thefootnote{#1}\footnotetext{#2}%
  \end{NoHyper}
}
\title{Ramsey Theorems for Trees and a
General\\ `Private Learning Implies Online Learning' Theorem}
\author{Simone Fioravanti$^{*}$ 
\and 
Steve Hanneke$^{\dag}$
\and 
Shay Moran$^{\ddag,\S, \P}$
\and
Hilla Schefler$^{\ddag}$
\and 
Iska Tsubari$^{\ddag}$
}
\date{}
\begin{document}
\maketitle
\thispagestyle{empty}
\begin{abstract}
This work continues to investigate the link between differentially private (DP) and online learning.~\citet*{AlonLMM19} showed that for binary concept classes, DP learnability of a given class implies that it has a finite Littlestone dimension (equivalently, that it is online learnable). 
Their proof relies on a model-theoretic result by \citet*{HodgesModelTheory}, which demonstrates that any binary concept class with a large Littlestone dimension contains a large subclass of thresholds. In a follow-up work, \citet*{JungKT20} extended this proof to multiclass PAC learning with a bounded number of labels. Unfortunately, Hodges's result does not apply in other natural settings such as multiclass PAC learning with an unbounded label space, and PAC learning of partial concept classes. 

This naturally raises the question of whether DP learnability continues to imply online learnability in more general scenarios: indeed,~\citet*{AlonHHM21} explicitly leave it as an open question in the context of partial concept classes, and the same question is open in the general multiclass setting.
In this work, we give a positive answer to these questions showing that for general classification tasks, DP learnability implies online learnability. Our proof reasons directly about Littlestone trees, without relying on thresholds. We achieve this by establishing several Ramsey-type theorems for trees, which might be of independent interest.
\end{abstract}

\newpage
\setcounter{page}{1}
\section{Introduction}\label{sec:intro}
\paragraph{Ramsey Theory.}
In theoretical research, it often happens that a general result, born to address a specific problem, proves to be of great interest by itself and inspires new original research. Indeed, Frank P. Ramsey clearly set the primary goal of his seminal work~\cite{Ramsey1930} to be a problem of formal logic and described his fundamental theorem in the abstract as a result of ``independent interest" useful in that specific setting. It was only a few years later, thanks to~\citet{ErdosSzekeres35}, that Ramsey's theorem, in its most well-known graph-theoretic variant, got the attention of the mathematical community, marking the beginning of what is now a well-established branch of mathematical research: Ramsey theory.

In a broad sense, Ramsey theory refers to a growing literature of results showing that, given a partition of a mathematical structure into classes, if it is large enough, then it will admit a sub-structure that is regular in some sense. The problem of quantifying what ``large enough" means is usually greatly challenging, and many existence results are not accompanied by quantitative bounds on the so-called \emph{Ramsey numbers}. As Erd\"os eloquently put it
\begin{quote}
    Suppose aliens invade the earth and threaten to obliterate it in a year's time unless human beings can find the Ramsey number for red five and blue five. \textelp{} within a year we could probably calculate the value. If the aliens demanded the Ramsey number for red six and blue six, however, we would have no choice but to launch a preemptive attack.
\end{quote}
Open problems in Ramsey theory are of great independent interest, but are also significant in combinatorics as a whole, since, as noted in~\cite{ConlonFS15}, they were crucial in the development of different theories (like random graphs and the probabilistic method~\cite{AlonBook}). 

Applications of Ramsey theory, however, are not limited to combinatorics and formal logic but extend to many areas of both mathematics and theoretical computer science: to name but a few, number theory~\cite{Szemeredi1975}, convex geometry~\cite{Barany2000}, ergodic theory~\cite{FurstenbergBook}, and lower bounds for computing boolean functions~\cite{AlonM86}.
A special mention goes to the quest for finding explicit constructions for graphs witnessing the tightness of Ramsey theorems: these results raise great interest in theoretical computer science because of their link with the theory of pseudorandomness. Some relevant works on the subject are~\cite{Eshan16,BarakKSSW10,Frankl1981IntersectionTW}, while we refer to the book by~\citet{AviWigBook} for an overview.
We refer the interested reader to the book~\cite{graham1991ramsey} for a detailed introduction to Ramsey theory and to~\cite{Rosta2004,WebRamseyCS,Roberts84} for a broader survey of possible applications.

\paragraph{Ramsey Theory and Differential Privacy.}
Recently, Ramsey theory has been employed to prove impossibility results in the differentially private PAC learning (or DP learning) model~\cite{KasiLNRS11}, merging the requirements of \emph{differential privacy}~\cite{DworkMNS06} with the ones of classical PAC learning~\cite{Valiant84}. Informally, a randomized learning algorithm, denoted by $\cA$, differentially-privately learns a concept class $\cH$ if (1) it is able to learn a target concept $c\in \cH$ from a limited number of input data, and also (2) given two training samples that differ by at most one example, the posterior distributions over the outputs of $\cA$ on the two of them are approximately the same.

In this context, it is natural to ask whether known PAC learnable classes can be learned by a DP algorithm (i.e.\ assessing the ``cost of privacy"). One of the most basic and best studied classes is the one of linear classifiers (a.k.a.\ threshold functions) in $\mathbb{R}^d$. 
While it is well known that this class is PAC learnable, a seminal result by~\citet*{BunNSV15} shows that it is indeed impossible to \underline{properly} DP learn it, by proving a lower bound on the sample complexity for one-dimensional thresholds.
Later, in his thesis~\citet{BunThesis} gave an alternative proof for this bound using a Ramsey-theoretic argument. 

A more recent work by~\citet*{AlonBLMM22} extends the same result to improper PAC learners, again using a Ramsey-theoretic argument. Specifically, their result applies more broadly to any class~$\cH$ of binary functions with unbounded \emph{Littlestone dimension} $\LD{\cH}$~\cite{Lit87,Ben-DavidPS09}. This combinatorial parameter, defined as the maximal depth of a so-called Littlestone tree, is known to control the online learnability of a class $\cH$. In particular, $\cH$ is online learnable if and only if $\LD{\cH} < \infty$.

Interestingly, \citet*{BunLM20} demonstrated that a finite Littlestone dimension is not only necessary but also sufficient for differentially private PAC learnability. This establishes an equivalence between online learning and differentially private PAC learning for binary concept classes.
It is only natural to ask if similar methodologies could be employed to extend these results to more general learning problems.

\paragraph{Online Learnability and Privacy: Beyond Binary Classes.}
While the application of Ramsey theory seems indeed promising, the proofs shown in~\cite{AlonBLMM22} significantly rely on the equivalence between $\LD{\cH}$ and the so-called \emph{Threshold Dimension} $\TD{\cH}$ which is very specific to binary classes. In other domains, in fact, even when it is possible to define a concept analogous to thresholds (which is often not the case), such an equivalence is not guaranteed to exist. 

An illuminating example comes from the domain of \emph{partial concept classes}.
These represent a generalization of the usual notion of concept classes $\cH$
(i.e., sets of binary functions $\cX \to \{0,1\}$)
to allow for \emph{partial} functions
$\cX \to \{0,1,*\}$
(where ``$*$'' denotes an \emph{undefined} value);
these are important for modeling within the PAC model natural data-dependent assumptions, such as learning halfspaces with a \emph{margin}. They were first considered by \citet*{LongPartial} and later developed further by \citet*{AlonHHM21}.
Although some known principles holding for total concepts remain valid for learning partial concepts (such as the equivalence between PAC learnability and finite VC dimension), there are significant differences. In particular, \citet{AlonHHM21} prove, in fact, the existence of a binary partial concept class $\cH$ having infinite $\LD{\cH}$ but $\TD{\cH}=2$, as described in more detail in~\Cref{sec:technical:privacy}. Consequently, \citet{AlonHHM21} leave as an open question whether the relationship between online learning and private PAC learning extends to partial concept classes.

A first extension of the result of~\citet*{AlonBLMM22} showing equivalence of DP learnability to online learnability has been shown in~\cite{JungKT20,SivakumarBG21} for multi-valued concept classes (or multiclasses): i.e., classes of total concepts for which the label space~$\cY$ is not restricted to be $\{0,1\}$. In particular, \citet*{JungKT20} adapt the techniques used in~\cite{AlonBLMM22} to prove that the equivalence continues to hold in the case of a finite $\cY$.
However, such techniques break down in the case of infinite label spaces~$\cY$, where in general the connection between $\LD{\cH}$ and threshold-like notions no longer holds (see \cite{hanneke:23}).

Studying the relationship between private PAC learnability and online learnability in general settings, such as partial concept classes or multiclasses with infinite label spaces $\cY$, appears to require techniques and results allowing to work on objects (like Littlestone trees) that are more structured than simple thresholds.
This in turn requires us to extend quantitative results in Ramsey theory to such structures as well. 

\paragraph{Our Contribution.}
In this work, our contribution is twofold. 
First, we prove several Ramsey theorems in the context of trees. In particular, we introduce a new notion of \emph{type} for subsets of vertices of a tree, and show its importance to build a flexible Ramsey theory for trees. By differentiating subsets of vertices of a binary tree according to our definition of type, we are able to show general existence results as well as quantitative ones in specific cases.

Second, we extend the impossibility result holding for binary DP learning to the more general setting of \underline{partial} concept classes with \underline{infinite} label space~$\cY$. With the aid of our Ramsey theorems on trees, we are able to argue directly about Littlestone trees without passing through the threshold dimension. In particular, our results address an open question in~\cite{AlonHHM21}, by proving one direction of the qualitative equivalence between DP and online learning for (binary) partial concept classes. It remains open to show whether a finite Littlestone dimension implies DP-learnability in this case.

Moreover, we contribute to the understanding of this equivalence in the context of total multiclasses, previously studied in~\cite{JungKT20,SivakumarBG21}, by showing that DP-learnability implies online learnability also in the case of an infinite label space $\cY$. In this setting as well, it remains an open question whether the opposite direction of the equivalence continues to hold.

\paragraph{Organization.} 
The paper is organized as follows. The rest of this section provides a detailed exposition of our main results in both Ramsey theory and privacy and discusses related works. \Cref{sec:technical} highlights the technical aspects of our main proofs. \Cref{sec:preliminaries} offers preliminary information and relevant background. Lastly, \Cref{sec:privacy_vs_LD,sec:ramsey_theory_in_trees} present the full proofs of our results.

\subsection{Main Results}\label{sec:main_results}
This section presents the main technical contributions of the paper, focusing on the ideas and intuitions behind them. In particular,~\Cref{sec:main_results:ramsey} introduces our novel Ramsey-theoretic results in the context of trees, in increasing order of complexity.~\Cref{sec:main_results:privacy} in turn focuses on their application to private PAC learnability in the setting of partial concept classes and multiclass classification.

\subsubsection{Ramsey Theory for Trees}\label{sec:main_results:ramsey}
Ramsey stated his theorem in~\cite{Ramsey1930} for subsets of a given set. In its finite version, the theorem states the following: \emph{for all $d, m, k \in \bbN$, with $m \ge 2$, there always exists a large enough number $n=n(d,m,k) \in \bbN$ such that, given any set of size $n$, however we color its subsets of size $m$ with~$k$ colors, it admits a monochromatic subset of size $d$}.
Ramsey's theorem can be considered as a generalization of the pigeonhole principle, which corresponds to the case $m=1$ and constitutes a building block in the proof of the general theorem.

In our work, the main structure of interest is the complete binary tree $T$ of a certain depth $n$. Here, we consider to be a subtree of depth $d$ any collection of $2^{d+1}-1$ vertices that is isomorphic to the complete binary tree of depth $d$, in the sense that the left/right descendant relations are preserved. ~\Cref{subfig:subtree} illustrates this concept. As for the classical Ramsey theory, the first necessary step to build a solid Ramsey theory for trees is to prove the equivalent of a pigeonhole principle: i.e.,\ a result showing that, given $d\in \bbN$ and any coloring of a sufficiently deep tree $T$ with $k$ colors, there exists a monochromatic subtree $S$ of $T$ with depth $d$. 
Such results already exist in the literature. In particular, \citet*{HodgesModelTheory} show a version with our same definition of subtree, while both~\citet{FurstenbergWeiss2003} and~\citet*{PachTS12} prove a quantitatively weaker version holding for a more restrictive definition of subtree. See~\Cref{prop:php} for a formal statement of both versions.

\begin{figure}[htb]
\centering
\begin{subfigure}[l]{0.6\textwidth}
\centering
\begin{adjustbox}{valign=t}
\begin{forest}
rounded/.style={circle, draw, fill opacity=0.4}
    [{},fill=red, for tree=rounded
      [{},
        [{},]
        [{},fill=red]
      ]
      [{},fill=red
        [{},]
        [{},]
      ]
    ]
\end{forest} \hspace{0.2cm}
\end{adjustbox}
\begin{adjustbox}{valign=t}
\begin{forest}
rounded/.style={circle, draw, fill opacity=0.4}
    [{},fill=black, for tree=rounded
      [{},fill=black
        [{},fill=black]
        [{},]
      ]
      [{},
        [{},]
        [{},]
      ]
    ]
\end{forest} \hspace{0.2cm}
\end{adjustbox}
\begin{adjustbox}{valign=t}
\begin{forest}
rounded/.style={circle, draw, fill opacity=0.4}
    [{}, fill=black, for tree=rounded
      [{},[{},fill=black][{},fill=black]]
      [{},[{},][{},]]
    ]
\end{forest}
\end{adjustbox}
\caption{}
\label{subfig:subtree}
\end{subfigure}\hspace{1cm}
\begin{subfigure}[r]{0.2\textwidth}
    \centering
    \begin{adjustbox}{valign=t}
\begin{forest}
rounded/.style={circle, draw, fill opacity=0.4}
    [{}, name=root, for tree=rounded
      [{}, name=l,
      [{}, name=ll,]
      [{}, name=lr,]
      ]
      [{}, name=r 
      [{}, name=rl,]
      [{}, name=rr,]]
    ]
 \path let
         \p1 = ($(root.north west)-(r.south east)$),
         \n1 = {veclen(\p1)}
         in
         (root.south east) -- (r.north west) 
         node[midway, sloped, draw, ellipse, thick, blue, minimum width=\n1*1.04, minimum height=\n1/2.5, rotate=5] {};
\path let
         \p2 = ($(root.north east)-(lr.south west)$),
         \n2 = {veclen(\p2)}
         in
         (root.south west) -- (lr.north east) 
         node[midway, sloped, draw, ellipse, thick, red, minimum width=\n2*1.05, minimum height=\n2/3.5, rotate=-7] {};
         \node[draw,green, ellipse, thick, xscale=0.55, yscale=0.7, fit = {(lr) (rl)}]{};
\end{forest}
\end{adjustbox}
\caption{}
\label{subfig:counterexample}
\end{subfigure}
\caption{In (\protect\subref{subfig:subtree}) the colored vertices denote a candidate $S$ for a subtree. According to our definition, the $S$ colored in red is the only legal subtree. 
The example in (\protect\subref{subfig:counterexample}) shows a coloring of pairs (coloring \emph{left} relations, \emph{right} relations, and \emph{incomparable} relations in different colors) for which no monochromatic subtree can exist.
}
\end{figure}
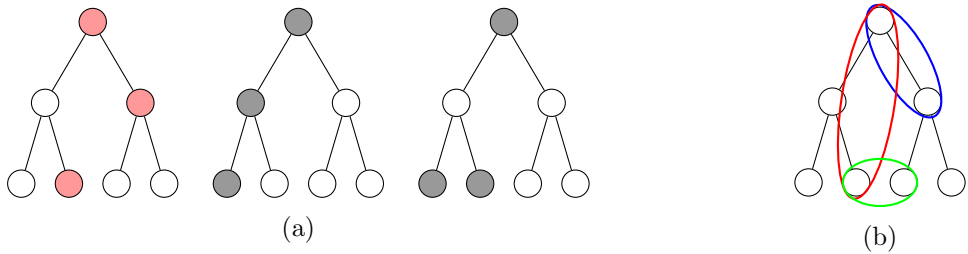

\paragraph{Case of Pairs ($m=2$).}
It is natural to ask if the existence of monochromatic subtrees is possible if one considers colorings of subsets of vertices of a given size, as in the classical Ramsey's theorem. We argue that a na\"{i}ve extension like this one is not possible in the context of trees.
Let us consider the simplest possible case of coloring pairs of vertices of a complete binary tree $T$ of arbitrary depth. Consider coloring the pairs following the pattern depicted in~\Cref{subfig:counterexample}; this example admits neither a finite nor an infinite Ramsey theorem. The example works as follows.
Observe that, given any pair of vertices $\{u,v\}$ of $T$, either $u$ and $v$ are \emph{comparable} in the partial order induced by the tree structure or they are \emph{incomparable}.
Define a coloring $\chi$ on pairs of vertices as follows: $\chi(\{u,v\}) =\mathsf{red}$ if one of $u$ or $v$ is a left descendant of the other, $\chi(\{u,v\}) =\mathsf{blue}$ if one of $u$ or $v$ is a right descendant of the other, and $\chi(\{u,v\}) =\mathsf{green}$ if $u,v$ are incomparable.
It is straightforward to see that, if we color all pairs in a complete tree (finite or infinite) in this way, any subtree of depth $\ge 1$ cannot be monochromatic (in the sense of the original Ramsey theorem), or even dichromatic (i.e.\ admitting at most $2$ colors), since it will contain pairs colored $\mathsf{red}$, $\mathsf{blue}$, and $\mathsf{green}$.
Let us say that a pair has \emph{type} left/right, if one of the vertices is a left/right descendant of the other, and incomparable otherwise.

Our first theorem shows that, if we consider subtrees admitting pairs colored with at most $3$ colors (which we call \emph{trichromatic}) we are able to prove existence:

\begin{boxH}
\begin{maintheorem}[Ramsey for pairs]\label{thm:ramsey_pairs}
    For all $d,k\in \bbN$ there exists $n=n(d,k)$ such that for every coloring of all pairs of vertices in the complete binary tree of depth $n$ with $k$ colors, there exists a trichromatic complete subtree of depth $d$.

    Furthermore, the obtained subtree is \emph{type-monochromatic}, in the sense that if two pairs are of the same type then they are colored with the same color.
\end{maintheorem}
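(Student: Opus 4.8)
The plan is to construct, inside a sufficiently deep complete binary tree, an auxiliary complete subtree $S_0$ of some large depth $D$ on which the colouring $\chi$ has a rigid ``coordinate'' structure, and then to cut out of $S_0$ a depth-$d$ subtree by making the relevant coordinates constant, using the tree pigeonhole principle (\Cref{prop:php}). Index the vertices of a complete subtree by binary strings $\rho$ with $\lvert\rho\rvert\le D$, and write $\phi(\rho)$ for the corresponding vertex of the ambient tree; note that for incomparable $\sigma,\tau$ the image $\phi(\rho)$ of their longest common prefix $\rho$ is exactly the least common ancestor of $\phi(\sigma)$ and $\phi(\tau)$, so the ``incomparable'' type of such a pair is witnessed by the single vertex $\phi(\rho)$. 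The intermediate property I aim for on $S_0$ is: (i) for comparable $\rho\prec\tau$, the colour $\chi(\{\phi(\rho),\phi(\tau)\})$ depends only on $\rho$ and on the bit $\tau_{\lvert\rho\rvert}$ recording the left/right type, say it equals $c_{\rho,0}$ or $c_{\rho,1}$ accordingly; and (ii) for incomparable $\sigma,\tau$ with longest common prefix $\rho$, the colour depends only on $\rho$, say it equals $c^{\mathrm{inc}}_\rho$. Given such an $S_0$, I finish by colouring each vertex $\rho$ of $S_0$ with the triple $(c_{\rho,0},c_{\rho,1},c^{\mathrm{inc}}_\rho)\in[k]^3$ when $\rho$ is internal, and arbitrarily otherwise, and applying \Cref{prop:php} with $k^3$ colours to obtain a complete subtree $S\subseteq S_0$ of depth $d$ on which this triple is a single constant $(c_L,c_R,c_I)$. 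On $S$ every left pair then has colour $c_L$, every right pair $c_R$, and every incomparable pair $c_I$, so $S$ is type-monochromatic and uses at most three colours, as required.

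To build $S_0$ I proceed top-down, one level at a time, maintaining for each already-chosen vertex $\phi(\sigma)$ two ``candidate regions'' $R_{\sigma 0}$ and $R_{\sigma 1}$, complete subtrees of the ambient tree lying respectively in the left and right subtree of $\phi(\sigma)$. To advance one step I set $\phi(\sigma b)$ to be the root of $R_{\sigma b}$, take the left and right subtrees of this root inside $R_{\sigma b}$, shrink this pair --- via a \emph{bipartite} tree pigeonhole (described next) --- to a pair of complete subtrees on whose Cartesian product $\chi$ is constant (fixing $c^{\mathrm{inc}}_{\sigma b}$), and then shrink each of the two resulting subtrees further so that $\chi(\{\phi(\sigma b),\cdot\})$ is constant on it (fixing $c_{\sigma b,0}$ and $c_{\sigma b,1}$); these two subtrees become the candidate regions for the next level. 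Truncating all regions to a common depth after each level, each level replaces the current depth $m$ by some $g_k(m)$ that still tends to infinity with $m$, so all $D$ levels can be carried out provided the ambient depth $n$ is at least a suitable finite $n(d,k)$. The nesting of the regions guarantees that $\phi(\tau)$ falls inside $R_{\rho\,\tau_{\lvert\rho\rvert}}$ for every ancestor string $\rho\prec\tau$, and inside opposite subtrees of $\phi(\rho)$ whenever $\rho$ is a longest common prefix; combined with the constancy arranged at each step, this yields exactly (i) and (ii) on $S_0$.

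The one ingredient not directly given by \Cref{prop:php} is the bipartite tree pigeonhole: if $A,B$ are complete binary trees in incomparable parts of the ambient tree and $\chi$ is any $k$-colouring of $A\times B$, then there are complete subtrees $A'\subseteq A$, $B'\subseteq B$ of any prescribed depth $d$ on whose product $\chi$ is constant, provided $A$ and $B$ are deep enough. To prove it, let $A_1\subseteq A$ be a complete subtree whose depth is large enough that every $k$-colouring of $A_1$ admits a monochromatic depth-$d$ subtree (possible by \Cref{prop:php}), and observe that $\lvert A_1\rvert$ is then bounded in terms of $d$ and $k$ only; colour each $b\in B$ by the tuple $\bigl(\chi(\{a,b\})\bigr)_{a\in A_1}\in[k]^{A_1}$ and apply \Cref{prop:php} with $k^{\lvert A_1\rvert}$ colours to get a complete $B'\subseteq B$ of depth $d$ on which this tuple is constant, so that $\chi(\{a,b\})$ depends only on $a\in A_1$ as long as $b\in B'$; finally, for a fixed $b_0\in B'$, colour $A_1$ by $a\mapsto\chi(\{a,b_0\})$ and apply \Cref{prop:php} once more to extract a monochromatic complete $A'\subseteq A_1$ of depth $d$. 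Then $\chi$ is constant on $A'\times B'$.

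I expect the conceptual obstacle to be the fact that one cannot simply force all comparable pairs of a given type to share a colour directly: when $\rho$ is a ``fresh'' ancestor, the colour of $\{\phi(\rho),\phi(\tau)\}$ is whatever $\chi$ dictates and may disagree with colours already committed higher up the tree. The way around this is to separate a \emph{local} phase --- the top-down construction, which only makes colours depend on a single witnessing vertex together with the type --- from a single \emph{global} synchronisation --- the final application of \Cref{prop:php} with $k^3$ colours, which makes these per-vertex colours uniform over the whole subtree. The remaining work is bookkeeping: one must verify that, as the (ordinary and bipartite) pigeonhole is iterated $D$ times, the candidate regions stay complete subtrees sitting in the correct left/right positions, so that $S_0$ is genuinely a subtree in the sense of this paper. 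The bound on $n(d,k)$ produced this way is of tower type or worse, which is already unavoidable for \Cref{prop:php} itself.
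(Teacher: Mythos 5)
Your proposal is correct, and it rests on the same machinery as the paper — the tree pigeonhole principle (\Cref{prop:php}), a bipartite pigeonhole for incomparable pairs, and a top-down recursive shrinking that defines an auxiliary vertex colouring which a final pigeonhole application makes constant — but it organizes the argument differently. The paper decomposes by type: it proves two standalone results, one for comparable pairs of a fixed type (\Cref{prop:ramsey_pairs_comp}) and one for incomparable pairs (\Cref{prop:ramsey_pairs_incomp}, via the bipartite theorem, \Cref{theorem:bipartite}), each yielding a \emph{mono}chromatic subtree, and then composes them sequentially, $n=n(n(n(d,\mathsf{L},k),\mathsf{R},k),\mathsf{U},k)$. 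You instead handle all three types in a single pass: at each vertex of the auxiliary subtree you fix a triple (left colour, right colour, incomparable-LCA colour) by one bipartite shrink plus two ordinary shrinks, and a single final application of \Cref{prop:php} with $k^3$ colours makes the triple constant; your verification that the LCA of two incomparable vertices of the final subtree is itself a vertex of that subtree (so the incomparable colour is indeed the constant $c_I$) is the one delicate point, and it goes through. This ``in parallel'' treatment is closer in spirit to what the paper does for chains in \Cref{thm:ramsey_chains} than to its proof of the pairs theorem. Your bipartite step also differs from the paper's: you fix one small test subtree $A_1$ of depth $dk$ and colour each $b\in B$ by the full restriction tuple $\bigl(\chi(\{a,b\})\bigr)_{a\in A_1}$, then apply the pigeonhole twice; the paper instead extracts, for each $r\in R$, a monochromatic level-aligned subtree $T_r$ of $L$ and colours $r$ by $(T_r,c_r)$, which requires the counting lemma for level-aligned subtrees (\Cref{lem:counting_subtrees}) but lets it track the depth trade-off between $L'$ and $R'$ explicitly (\Cref{lem:bipartite_antichains}, \Cref{theorem:bipartite}). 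The trade-off is what you would expect: the paper's sequential, per-type route yields the separate quantitative bounds reported for each type (e.g.\ $2^{dk\log(2k)}$ for comparable pairs versus a tower bound for incomparable ones), whereas your unified construction reaches the trichromatic, type-monochromatic conclusion more directly and with a slightly simpler bipartite argument, but, as you note, with only an unspecified tower-type bound on $n(d,k)$.
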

\end{boxH}
The proof of \Cref{thm:ramsey_pairs} is given in \Cref{sec:ramsey_pairs_proof}.

\begin{remark}[Quantitative Bounds for Pairs] 
  \Cref{thm:ramsey_pairs} states the existence of a trichromatic subtree with the additional property of being type-monochromatic, i.e., pairs of the same type are colored with the same color. As a direct consequence, if we color only pairs of a specific type (say only pairs of incomparable vertices), we obtain the existence of monochromatic subtrees. Moreover, our proof implies upper bounds on the minimal $n(d,k)$ witnessing \Cref{thm:ramsey_pairs}: in particular, if we color all pairs, we obtain a bound of $n(d,k) \le \twr_{(8dk)}(1)$, where $\twr_{(m)}$ denotes the \emph{tower} function (i.e., iterated exponentiation $m$ times; see \Cref{sec:additional_defs}). 

  If we color only pairs of a specific type, however, we get better bounds that differ greatly based on the type. Specifically, if the pairs are comparable (i.e., the chosen type is left or right), we obtain an upper bound of $2^{dk\log(2k)}$ (see \cref{prop:ramsey_pairs_comp}). This bound is comparable to the known bounds for classical hypergraph Ramsey numbers (see \Cref{thm:ramsey_chains}). 

  On the other hand, if we consider only incomparable pairs, we obtain a bound of $n(d,k) \le \twr_{(7dk)}(1)$, which is comparable to the bound holding for colorings of all pairs. The quantitative results described are summed up in \Cref{table:pairs}.
\end{remark} 

\begin{table}
\begin{center}
\begin{tabular}{ |c|c|c|c| }
\hline
All pairs & Left & Right & Incomparable \\
\hline
$\left(\twr_{(3dk+\log^{\star} 4k +4)}(1)\right)^{5k^2\log k}\le \twr_{(8dk)}(1)$& $2^{dk\log(2k)}$ & $2^{dk\log(2k)}$ & $\twr_{(3dk+\log^{\star} 4k +3)}(1)\le \twr_{(7dk)}(1)$\\
\hline
\end{tabular}
\end{center}
\caption{The quantitative upper bounds described in Remark 1 for pairs. Each column reports the bound holding when coloring only pairs of the according type(s).}
\label{table:pairs}
\end{table}

It is an interesting open question to assess whether the bounds shown in the remark could be improved.

\paragraph{General Case ($m \ge 2$).}
Next, we treat the more general case where the coloring is of subsets of vertices of size $m\ge 2$ (or $m$-subsets).

The first necessary step, as the case of pairs well illustrates, is to define a notion encoding the partial order inherited from $T$ as well as differentiate between left and right descendants and incomparable vertices. A first natural attempt at defining such a concept could be to say that two $m$-subsets are of the same type if there is a bijection from one set to the other that preserves the left/right descendant relations between any pair of vertices.
However, this approach immediately fails as demonstrated in~\Cref{fig:general_type}. This example shows that one must consider also left/right relations between vertices that are not comparable with respect to the partial order of descendancy.

\begin{figure}[!htb]
\centering
\begin{subfigure}[l]{0.14\textwidth}
\centering
\begin{adjustbox}{valign=t}
\begin{forest}
rounded/.style={circle, draw, fill opacity=0.4}
    [{}, for tree=rounded
      [{}, fill
        [{},fill]
        [{},]
      ]
      [{},fill
        [{},]
        [{},]
      ]
    ]
\end{forest} \hspace{1cm}
\end{adjustbox}
\caption{\textcolor{red}{red} triplet}
\label{subfig:general_type_a}
\end{subfigure}\hspace{1cm}
\begin{subfigure}[r]{0.14\textwidth}
\centering
\begin{adjustbox}{valign=t}
\begin{forest}
rounded/.style={circle, draw, fill opacity=0.4}
    [{}, for tree=rounded
      [{},fill
        [{},]
        [{},]
      ]
      [{},fill
        [{},fill]
        [{},]
      ]
    ]
\end{forest} \hspace{1cm}
\end{adjustbox}
\caption{\textcolor{blue}{blue} triplet}
\label{subfig:general_type_b}
\end{subfigure}
\caption{This example shows two triplets of vertices that admit the same relations of left/right descendant for each pair of vertices, for appropriate ordering of the vertices. However, consider coloring \textcolor{red}{red} all triplets of the form as in (\protect\subref{subfig:general_type_a})
(the single vertex is to the right of the chain of size two), and coloring \textcolor{blue}{blue} all triplets of the form as in (\protect\subref{subfig:general_type_b}) (the single vertex is to the left of the chain of size two). Then within any tree (finite or infinite) with every triplet of these two kinds colored in this way, there is no monochromatic subtree of depth $2$, as these two patterns must appear in any such subtree.
}
\label{fig:general_type}
\end{figure}
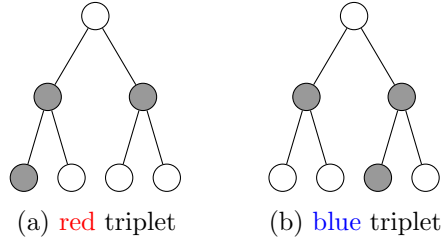

Our first contribution for the case of general $m$ is an appropriate formal definition of ``type'' for sets of vertices of size $m$ which avoids the above problem. 
Before stating this definition though, we need to introduce the concept of \emph{closure}.
\begin{definition} [Closure]\label{def:closure}
    Let $T$ be a complete binary tree and $A$ a set of vertices of $T$. We say that~$A$ is closed if for each $u,v\in A$, their lowest-common-ancestor $\LCA(u,v)$ is also in $A$. The \emph{closure}~$\bar{A}$ of $A$ in $T$ is the minimal\footnote{Note that the intersection of closed sets is closed, and hence any set $A$ has a unique closure.} closed set that contains $A$ (minimal with respect to set containment).
\end{definition}

\begin{figure}[thb]
\centering
\begin{adjustbox}{valign=t}
\begin{forest}
rounded/.style={circle, draw, fill opacity=0.5}
    [{}, red, for tree=rounded
      [{}, red, [{},fill=red] [{},fill=red]]
      [{}, [{},][{},fill=red]]
    ]
\end{forest}\hspace{1cm}
\end{adjustbox}
\begin{adjustbox}{valign=t}
\begin{forest}
rounded/.style={circle, draw, fill opacity=0.4}
    [{}, red, for tree=rounded
      [{},fill=red [{},fill=red] [{},]]
      [{},fill=red, [{},] [{},]]
    ]
\end{forest}\hspace{1cm}
\end{adjustbox}
\begin{adjustbox}{valign=t}
\begin{forest}
rounded/.style={circle, draw, fill opacity=0.4}
    [{}, red, for tree=rounded
      [{},fill=red [{},] [{},]]
      [{},fill=red,fill=red, [{},fill=red][{},]]
    ]
\end{forest}
\end{adjustbox}
\caption{In each of the examples, the vertices colored red represent $A$, while the vertices circled in red are the ones in $\bar{A}\setminus A$. According to~\Cref{def:type_subset}, each example represents a subset of $3$ vertices of a different type.}
\label{fig:closures}
\end{figure}
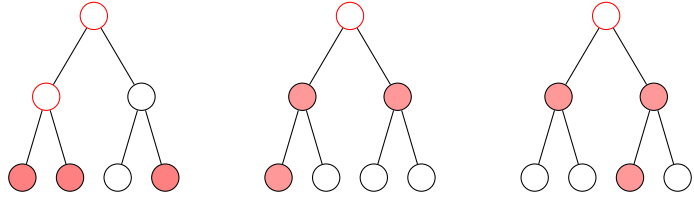

Different examples of sets of vertices along with their closures, are shown in~\Cref{fig:closures}. Intuitively, considering~$\bar{A}$ instead of $A$ allows us to differentiate between sets like the middle and rightmost in~\Cref{fig:closures}. In fact, these two sets of vertices are symmetrical and the relations among each pair are the same; however, if we consider their closures, the left (right) descendant relations are clearly different.
The following definition formalizes this exact reasoning:
\begin{definition} [Types]\label{def:type_subset}
    Let $T$ be a complete binary tree and $A_1, A_2$ two distinct subsets of vertices of $T$. We say that $A_1$ is \emph{isomorphic} to $A_2$, and denote it by $A_1 \simeq A_2$, if there is a bijection $\varphi$ from $\bar{A_1}$ to $\bar{A_2}$ such that $\varphi(A_1)=A_2$ and $\varphi$ preserves the left (right) descendant relations between pairs of vertices.
    The \emph{type} of each set of vertices is then its induced equivalence class, and we say that sets are of the same type if they are isomorphic to one another.
\end{definition}

Our main result in this context employs the notion of type to effectively generalize \Cref{thm:ramsey_pairs} to $m$-subsets, as summarized in the following:

\begin{boxH}
\begin{maintheorem}[Ramsey for $m$-subsets]\label{thm:ramsey_general_finite}
    Let us denote by $\tau(m)$ the number of possible types of $m$-subsets.
    Then, for all $d, m, k\in \bbN$ there exists $n=n(d,m,k)$ such that, for every coloring of all $m$-subsets of the complete binary tree of depth $n$ with $k$ colors, there exists a $\tau(m)$-chromatic subtree of depth $d$ (i.e.\ its $m$-subsets are colored with at most $\tau(m)$ colors).

    Furthermore, the obtained subtree is \emph{type-monochromatic}, {in the sense that if two $m$-subsets are of the same type then they are colored with the same color.}
\end{maintheorem}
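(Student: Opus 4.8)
The plan is to reduce the claim to producing a single \emph{type-monochromatic} subtree of depth $d$: once the restriction of $\chi$ to such a subtree factors through the type, it uses at most $\tau(m)$ colors, which is the asserted $\tau(m)$-chromaticity. Two preliminary remarks make this manageable. First, $\tau(m)$ is genuinely finite: by \Cref{def:closure} and \Cref{def:type_subset} the type of an $m$-subset $A$ is pinned down by the isomorphism type of its closure $\bar A$ --- a closed set of at most $2m-1$ vertices, of which there are only finitely many shapes --- together with the choice of which $m$ of those vertices make up $A$. Second, ``a subtree of a subtree is a subtree,'' and the type of an $m$-subset is computed intrinsically inside any subtree containing it; this lets me pass to the abstract complete binary tree and freely compose successive extractions. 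With these in hand I would induct on $m$. The base case $m=1$ is exactly the tree pigeonhole principle (\Cref{prop:php}), since $\tau(1)=1$ and there monochromatic $=$ type-monochromatic; alternatively one can start the induction at $m=2$ and invoke \Cref{thm:ramsey_pairs}, where $\tau(2)=3$ corresponds to the types \textsf{left}, \textsf{right}, \textsf{incomparable}.

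For the inductive step I would assume the statement for all arities below $m$ and build the target depth-$d$ subtree \emph{level by level}, maintaining a large ``reservoir'' that, after $\ell$ levels have been committed, consists of the $2^{\ell}$ complete subtrees of the host tree hanging below the current leaves. When committing level $\ell+1$, I first choose the new vertices (two inside each reservoir subtree), which completes exactly those $m$-subsets of the eventual subtree whose deepest vertices lie at level $\ell+1$; I then need the colors of all such $m$-subsets to depend only on their type. The essential move is: for each way such an $m$-subset can split between the already-committed levels and the new level --- equivalently, for each relative-type pattern of the new portion against the committed portion --- the color $\chi(A)$ is, once the committed portion is fixed, a $k$-coloring of tuples of the new vertices of size $<m$; applying the inductive hypothesis for that smaller arity (in a suitable product form, since those new vertices are distributed among several reservoir subtrees) shrinks the reservoir to one on which each such pattern has a single color. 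Iterating over the boundedly-many (in $m$) patterns, then over the $d$ levels, yields a subtree $S_0$ on which $\chi(A)$ depends only on the type of $A$ and on a bounded amount of positional data (intuitively, the location within $S_0$ of the ``top'' vertex of $\bar A$).

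It then remains to erase the residual positional dependence. For each of the $\le \tau(m)$ type-shapes, the leftover data is a bounded coloring of positions in $S_0$, which I would bundle into a single coloring of the \emph{vertices} of $S_0$ with $\le k^{\tau(m)}$ colors and feed into the tree pigeonhole principle (\Cref{prop:php}) --- using \Cref{thm:ramsey_pairs} instead if the residue genuinely depends on a pair rather than a single vertex --- to extract the final depth-$d$ subtree $S$ on which $\chi(A)$ is a function of the type of $A$ alone, i.e.\ $S$ is type-monochromatic. Propagating the required depths backwards through the (roughly $dm$) nested invocations of the lower-arity cases and of \Cref{prop:php} yields an explicit $n(d,m,k)$; for $m=2$ this reproduces the tower-type bounds announced in \Cref{table:pairs}, and in general the tower height grows (roughly linearly) with $m$.

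The step I expect to be the real obstacle --- and the reason the notion of type from \Cref{def:type_subset} is built via closures --- is the bookkeeping of \emph{relative types} across the product of reservoir subtrees. Unlike in $\bbN$, the ``new part'' of an $m$-subset can be scattered across several incomparable subtrees at once, and a newly committed vertex can stand in different left/right/incomparable relations to different committed vertices; the examples in \Cref{fig:general_type} show that ignoring such cross-relations (e.g.\ tracking only pairwise relations within $A$ and not within $\bar A$) makes the statement false. So one must (i) set up a precise product/relative version of the type formalism, (ii) enumerate all patterns arising at a level-commitment step, and (iii) bound their number by a function of $m$ alone so the final estimate stays tower-type, together with establishing a product Ramsey statement for the lower arities. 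Getting (i)--(iii) right is the crux; the rest is iteration and bookkeeping.
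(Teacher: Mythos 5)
Your high-level strategy (homogenize type-by-type, induct on $m$) matches the paper's, but the two load-bearing steps of your level-by-level, Erd\H{o}s--Rado-style scheme are left unproved, and as stated they would fail. First, the claim that, once the committed portion is fixed, what remains to homogenize is a coloring of tuples of new vertices ``of size $<m$'' is false: an $m$-subset of the eventual subtree can lie \emph{entirely} in the newly committed level (e.g.\ an antichain), in which case the total arity is $m$ and the plain inductive hypothesis does not apply. What saves the step is only that each reservoir subtree contributes at most two new vertices, so what you actually need is a multipartite (``product'') Ramsey statement across several incomparable trees with small \emph{per-part} arity --- and you name this as the crux without establishing it or explaining how to avoid the circularity in the induction on $m$. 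This is precisely where the paper does real work: it introduces a bipartite Ramsey number and derives it from lower-arity unipartite Ramsey by the subtree-counting trick (cf.\ \Cref{lem:bipartite_antichains} and relation~(3) in the proof of \Cref{thm:ramsey_general_finite}), and, crucially, it splits a type at its $\LCA$ into a left part and a right part, each of size strictly less than $m$, which is what makes its double induction well-founded. Your split by levels does not have this strict-decrease property.

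Second, your clean-up step is under-powered. After the per-level homogenization, the color of an $m$-subset $A$ whose deepest vertices sit at the newly committed level still depends on the entire committed portion of $A$ --- a set of up to $m-1$ vertices of $S_0$ --- not merely on ``the location within $S_0$ of the top vertex of $\bar A$'': for a chain $u<v<w$ with $w$ newest, no refinement of the reservoir can touch the dependence on the pair $\{u,v\}$, since those vertices were fixed before $w$'s level was chosen. So the residue is a genuine coloring of subsets of $S_0$ of size up to $m-1$, and erasing it requires invoking the inductive hypothesis at arity $m-1$ (exactly as Erd\H{o}s--Rado finishes with the $(m-1)$-ary theorem), not the vertex pigeonhole \Cref{prop:php} or the pairs result \Cref{thm:ramsey_pairs}. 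The plan looks repairable --- prove a multipartite lemma with bounded per-part arity and replace the final pigeonhole by an end-stage application of the inductive hypothesis --- but as written both the per-step homogenization and the final step have genuine gaps, which the paper's $\LCA$-based ``local Ramsey'' property, bipartite Ramsey numbers, and double induction are specifically designed to avoid.
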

\end{boxH}
The proof of ~\Cref{thm:ramsey_general_finite}
appears in \Cref{sec:ramsey_general_proof}.

\begin{remark}[Optimal Number of Colors]\label{remark:optimal_num_of_clrs}
    Note that $\tau(m)$ colors are optimal in the following sense: if we color $m$-subsets  of the complete infinite subtree with $\tau(m)$ colors according to their type, then every complete subtree with depth $ \ge m$ will contain at least one $m$-subset colored with each color. Moreover, observe that $\tau(m)$ depends exclusively on $m$, i.e.\ the size of the chosen subsets, and not on $k$ and $d$. 
    In~\Cref{app:types}, we show that $\tau(m)$ is upper bounded by {$\frac{2^{3m-2}}{\sqrt{\pi\cdot(m-1)}}$}.
\end{remark}

\begin{remark}[Infinite Case]
By compactness, \cref{thm:ramsey_general_finite} is equivalent to the following statement: for every coloring of all $m$-subsets of an infinite complete binary tree\footnote{An infinite complete binary tree is a tree such that every vertex has exactly two children.}, there exists a $\tau(m)$-chromatic subtree of arbitrarily large finite depth. However, it is not true that there exists an \underline{infinite} $\tau(m)$-chromatic subtree.

We discuss the infinite case in \cref{app:ramsey_infinite}. In particular, we show that the number of colors~$\tau_{\infty}(m)$ needed to guarantee the existence of an infinite $\tau_{\infty}(m)$-chromatic subtree is strictly larger than $\tau(m)$; for example, for pairs we have $\tau_{\infty}(2)=4 > 3=\tau(2)$. In the general case, we show that $\tau_{\infty}(m)=\omega(\tau(m))$, meaning the optimal number of colors in the infinite case is asymptotically larger than in the finite case.
\end{remark}
Our proof of \Cref{thm:ramsey_general_finite} is qualitative and does not provide explicit bounds on the Ramsey numbers. However, if we focus on $m$-chains (i.e., $m$-subsets totally ordered by the tree structure), we can derive quantitative bounds. These results, summarized below, are crucial to our main result in private PAC learning.
\begin{boxH}
\begin{maintheorem}[Ramsey for $m$-chains]\label{thm:ramsey_chains}
 For all $d,k,m$ there exists \[
 n \le \twr_{(m)}(5\cdot 2^{m-2}dk^{2^{m-1}}\log k)
 \]
 such that, for every coloring of $m$-chains in the complete binary tree of depth $n$, there exists a $2^{m-1}$-chromatic complete subtree of depth $d$ (i.e.\ its $m$-chains are colored with at most $2^{m-1}$ colors). 

 Furthermore, the obtained subtree is \emph{type-monochromatic}, in the sense that if two $m$-chains are of the same type then they are colored with the same color.
\end{maintheorem}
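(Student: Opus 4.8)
The plan is to prove \Cref{thm:ramsey_chains} by induction on $m$, following the spirit of the Erdős–Rado upper bound for hypergraph Ramsey numbers: each step of the induction eliminates one vertex from the chains, at the cost of one further exponential in the bound, which is exactly what makes the tower height equal $m$. Since the statement is already quantitative, there is no compactness step — we carry an explicit $n$ throughout. For the base case $m=1$, a $1$-chain is a single vertex and all of them have the same (unique) type, so ``type-monochromatic complete subtree of depth $d$'' simply means ``monochromatic complete subtree of depth $d$''; this is the tree pigeonhole principle (\Cref{prop:php}), whose quantitative form supplies $n\le\twr_{(1)}(5\cdot 2^{-1}dk\log k)$, matching the claimed bound at $m=1$.

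For the inductive step $m\to m+1$, assume the statement for $m$ with $N_m(d,k)\le\twr_{(m)}(5\cdot 2^{m-2}dk^{2^{m-1}}\log k)$. Given a $k$-coloring $\chi$ of the $(m{+}1)$-chains of a sufficiently deep complete binary tree $T$, the crux is a \emph{reduction lemma}: one can find a complete binary subtree $T'\subseteq T$ of depth $D+1$, with $D:=N_m(d,k^2)$, such that for every $(m{+}1)$-chain $v_0<v_1<\dots<v_m$ in $T'$ the color $\chi(v_0,\dots,v_m)$ depends only on the $m$-chain $v_0<\dots<v_{m-1}$ and on the direction $\delta(v_{m-1},v_m)\in\{L,R\}$ of the last step. (I eliminate the bottom vertex because ``below a fixed vertex'' is a subtree, which meshes with the tree pigeonhole; eliminating the top vertex instead is equally viable.) Granting the lemma, let $T''$ be the depth-$D$ ``interior'' of $T'$; for every $m$-chain $E$ in $T''$ and every $\delta\in\{L,R\}$ set $\Phi(E,\delta):=\chi(E\cup\{w\})$ for any $T'$-descendant $w$ of the bottom vertex of $E$ lying in its $\delta$-subtree — this is well-defined by the lemma, and such a $w$ always exists since the bottom of $E$ has depth $\le D<D+1$ in $T'$, so no missing-extension case arises. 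Bundle the directions into a $k^2$-coloring $\Phi'(E):=(\Phi(E,L),\Phi(E,R))$ of the $m$-chains of $T''$, and apply the induction hypothesis to $(T'',\Phi')$ to obtain a complete subtree $S$ of depth $d$ that is type-monochromatic for $\Phi'$. Since the type of an $(m{+}1)$-chain $v_0<\dots<v_m$ is exactly (type of $v_0<\dots<v_{m-1}$, $\delta(v_{m-1},v_m)$), and $\chi(v_0,\dots,v_m)=\Phi(v_0<\dots<v_{m-1},\,\delta(v_{m-1},v_m))$ is the corresponding coordinate of $\Phi'(v_0<\dots<v_{m-1})$, the subtree $S$ is type-monochromatic for $\chi$, using at most $2^m$ colors. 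For the bookkeeping, the reduction lemma costs a single exponential in depth, so $N_{m+1}(d,k)\le 2^{N_m(d,k^2)}$ (up to lower-order terms absorbed by the slack in the exponent); substituting $N_m(d,k^2)=\twr_{(m)}\!\big(5\cdot 2^{m-2}d(k^2)^{2^{m-1}}\log(k^2)\big)=\twr_{(m)}\!\big(5\cdot 2^{m-1}dk^{2^{m}}\log k\big)$ yields $N_{m+1}(d,k)\le\twr_{(m+1)}\!\big(5\cdot 2^{m-1}dk^{2^{m}}\log k\big)$, which is the claimed bound for $m+1$.

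The main obstacle is the reduction lemma, and specifically obtaining it with only one exponential blow-up in the depth (rather than one per level, which would make the tower height grow with $d$). The natural attack is Erdős–Rado-style: build $T'$ together with a nested family of ``reservoir'' complete subtrees of $T$, so that when a vertex $b$ is added to $T'$ one monochromatizes the reservoir below $b$ — applying pigeonhole/tree-pigeonhole to the finitely many colorings $w\mapsto\chi(E\cup\{w\})$ indexed by the $m$-chains $E$ already placed in $T'$ ending at $b$ — thereby forcing all later $T'$-descendants of $b$ in a fixed direction to induce the same value $\chi(E\cup\cdot)$. The delicate point, and where the argument must be genuinely clever rather than a naive vertex-by-vertex recursion, is to arrange the monochromatization so that the reservoir depths are not reduced by a full tree-pigeonhole at each of the $\Theta(D)$ levels (that would cost $\twr_{(\Theta(D))}$), but instead incur only one exponential in total; I expect this to require processing whole levels at once and a careful global choice of $T'$, together with the one-extra-level device above to dispose of the shallow-vertex edge cases. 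I also expect the verification that the product-over-directions bookkeeping interacts correctly with the intrinsic (subtree-independent) notion of type to be routine but needing care.
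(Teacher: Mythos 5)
Your overall route coincides with the paper's: induct on $m$, eliminate the deepest vertex of each chain, record the two possible extension directions as a $k^2$-coloring of the shorter chains, and invoke the induction hypothesis; your identification of the type of an $(m{+}1)$-chain with (type of its prefix, last direction), and the resulting $2^m$-color count, are exactly how the paper argues. The genuine gap is that your ``reduction lemma'' --- the construction of the subtree on which $\chi(v_0,\dots,v_m)$ depends only on $v_0<\dots<v_{m-1}$ and the last direction --- is the entire quantitative content of the theorem, and you leave it unproven, offering only a speculative attack and doubting that a vertex-by-vertex recursion can avoid a $\twr_{(\Theta(D))}$ loss. That doubt is misplaced, and it is precisely where your proposal stops short of a proof: the pigeonhole principle for trees (item (i) of \cref{prop:php}) loses only a \emph{multiplicative} factor (a $k$-coloring of a depth-$n$ complete tree contains a monochromatic subtree of depth $\lfloor n/k\rfloor$, not $\log n$), so the paper's construction simply processes the vertices of the intermediate tree $T^\star$ one at a time: at a vertex $u_\sigma$ at level $i$ it pigeonholes each subtree below with respect to the equivalence classes of $x\mapsto\bigl(\chi(A\cup\{u_\sigma,x\})\bigr)_A$, where $A$ ranges over the $(m-2)$-subsets of the ancestors already placed, i.e.\ with at most $k^{\binom{i}{m-2}}$ colors. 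The accumulated loss over all $t=\Ramsey(d,m-1,k^2)$ levels is the product $\prod_i 2k^{\binom{i}{m-2}}\approx 2^{t}k^{\binom{t}{m-1}}$, a single exponential, yielding the recursion $\Ramsey(d,m,k)\le k^{2\Ramsey(d,m-1,k^2)^{m-1}}$ and hence exactly one extra tower level per induction step; no ``processing whole levels at once'' or global choice of $T'$ is needed.

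A second, smaller inaccuracy: your recursion $N_{m+1}(d,k)\le 2^{N_m(d,k^2)}$ understates the blow-up, since the number of already-placed chains ending at a level-$i$ vertex grows like $\binom{i}{m-1}$, so the exponent is on the order of $N_m(d,k^2)^{m}\log k$ rather than $N_m(d,k^2)$. This does not change the tower height, but converting the correct recursion into the stated bound $\twr_{(m)}(5\cdot 2^{m-2}dk^{2^{m-1}}\log k)$ requires the explicit absorption argument the paper carries out in \cref{claim:bounding_with_twr} and \cref{cor:upper_bound_for_Ramsey_number}; ``absorbed by the slack in the exponent'' is the right intuition but is not automatic. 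Your treatment of the leaf edge case (working in the depth-$D$ interior of a depth-$(D{+}1)$ tree) and the well-definedness of $\Phi$ are fine and match the paper, which instead assigns an arbitrary color when the last vertex of a chain is a leaf of $T^\star$.
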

\end{boxH}

The proof of \Cref{thm:ramsey_chains} can be found in \Cref{sec:ramsey_chains_proof}.
Note that the number of possible types of $m$-chains is exactly $2^{m-1}$. Indeed, since the closure of an $m$-chain is exactly the $m$-chain itself, the type encodes the left/right descendant relations between all pairs of vertices in the chain. In particular, since the left/right relations among pairs of consecutive vertices in the chain are sufficient to fully describe all the others, we get a total of $2^{m-1}$ types.
As a consequence, Remark~\ref{remark:optimal_num_of_clrs} made above for \Cref{thm:ramsey_general_finite} continues to hold in this case.

\begin{remark}[Tightness of the Bound]
Regarding the upper bound on the Ramsey number $n$ provided above, we do not know whether it is tight. However, if we restrict the coloring to $m$-chains of a specific type, the bound significantly improves. Specifically, we obtain:
\begin{equation}\label{eq:boundchains}
n \le \twr_{(m)}(5 \cdot 2^{m-2} dk \log k).    
\end{equation}
This removes the $2^{m-1}$ from the exponent of $k$, making the bound comparable to the best known upper bounds on hypergraph Ramsey numbers by~\citet{ErdosRado52}. Conversely, any lower bound for classic Ramsey numbers on coloring $m$-subsets holds for $m$-chains of a specific type $\tau$: indeed, given a coloring $\chi$ of the $m$-subsets of $[n]$, define a coloring $\chi'$ of $m$-chains of type $\tau$ with the color~$\chi$ assigns to their set of levels. The obtained coloring $\chi'$ admits complete monochromatic subtrees of depth $d$ if and only if $\chi$ admits monochromatic subsets of size $d$.
\end{remark}

\subsubsection{Private PAC Learning Implies Finite Littlestone Dimension}\label{sec:main_results:privacy}
In this section, we present our results concerning differentially private PAC learnability (from now on DP-learnability) of partial multiclasses $\cH$: i.e.,\ partial concept classes where the label space $\cY$ is not constrained to be $\{0,1\}$. Throughout the discussion, we use some standard technical terms and notation: for formal definitions, we refer the reader to~\Cref{sec:preliminaries_learning}.
The following theorem shows that the sample complexity of DP learning a partial multiclass $\cH$ is lower bounded by a function of its Littlestone dimension:

\begin{boxH}
    
\begin{maintheorem}\label{thm:DP_implies_LD}
Let $\cH$ be a (possibly partial) concept class over an arbitrary label space $\cY$ with Littlestone dimension ${\LD{\cH}\geq d}$, 
and let $\cA$ be an $(10^{-4},10^{-4})$-accurate learning algorithm for $\cH$ with sample complexity $m$ satisfying $(\epsilon,\delta)$-differential privacy for $\epsilon=10^{-3}$ and ${\delta\leq \frac{1}{10^3m^2}}$.
Then the following bound holds:
\begin{equation*}
   m = \Omega(\log^\star d), 
\end{equation*}
where the $\Omega$ notation conceals a universal numerical multiplicative constant.
\end{maintheorem}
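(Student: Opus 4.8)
The high-level strategy is to reduce the general (partial, infinite-label) setting to the known binary case by extracting, from a class with large Littlestone dimension, a combinatorial structure that behaves like a large family of thresholds — but to do so *without* assuming the Threshold Dimension is large, which in general it is not. The bridge is the Ramsey theorem for chains, Theorem C. The plan is as follows.

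First I would recall the argument of \citet{AlonBLMM22} (and its refinement via \citet{BunLM20}) in the binary case: if a class has Threshold Dimension $\geq 2^d$, then DP-learning it with constant accuracy and privacy parameters forces sample complexity $\Omega(\log^\star d)$. The key point is that the lower-bound machinery really only needs an appropriate \emph{sequence} of concepts and points exhibiting a "staircase"/order pattern — concretely, a sequence $x_1,\dots,x_N$ of domain points and $h_1,\dots,h_N$ of concepts in $\cH$ such that $h_i(x_j)$ depends only on whether $j \le i$ (or some fixed order relation), with the labels drawn from a \emph{fixed} pair of values. So the task is to produce such a monochromatic-in-order structure inside a class of large Littlestone dimension.

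Second, the core new step: start from a Littlestone tree of depth $n$ witnessing $\LD{\cH} \ge d$ for $n = n(d', m, k)$ chosen according to Theorem C, where $m$ and $k$ are constants (I expect $m=2$ suffices, so this is really Theorem A / the chain version with $m=2$, giving a tower-free, merely exponential bound $n \le 2^{\poly(d')}$, which is exactly what is needed to land at $\log^\star d$ rather than something worse). A Littlestone tree of depth $n$ comes with, for each internal vertex $v$, a concept $h_v \in \cH$ and a point $x_v$, such that along any root-to-leaf path the concepts are consistent with the $0/1$ turns taken. Now color each pair of comparable vertices (each $2$-chain) $\{u,v\}$ with $u$ an ancestor of $v$ by the pair $(\text{turn-at-}v,\; h_{\text{leaf below }v}(x_u))$ — i.e. record how the concept sitting below $v$ labels the ancestor point $x_u$, together with the left/right relation. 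This is a coloring of $2$-chains with a bounded number of colors ($k = $ constant), so Theorem C yields a type-monochromatic complete subtree $S$ of depth $d'$: within $S$, the label that a descendant-path concept gives to an ancestor point depends only on the left/right type of the ancestor-descendant relation. Restricting to one root-to-leaf path of $S$ that always turns (say) left, we obtain concepts $h_1,\dots,h_{d'}$ and points $x_1,\dots,x_{d'}$ along the path with $h_i(x_j)$ equal to a fixed value $a$ whenever $j < i$ and a fixed value $b \ne a$ whenever $j = i$ (the $j=i$ value is forced to be the turn label, the $j<i$ value is forced by type-monochromaticity), and these are genuine members of $\cH$ labelling genuine domain points. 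That is precisely a threshold-like staircase of order $d'$ realized \emph{within $\cH$}, living on a two-element label set $\{a,b\}$.

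Third, feed this staircase into the binary DP lower bound. Since the staircase uses only labels $\{a,b\}$, a DP learner for $\cH$ restricted to these instances is, after relabelling $a\mapsto 0, b\mapsto 1$, a DP learner for the binary threshold class of size $d'$ on the point set $\{x_1,\dots,x_{d'}\}$; accuracy and privacy parameters are inherited (the relabelling and restriction are post-processing / pre-processing that preserve $(\eps,\delta)$-DP and accuracy). Invoking the binary result of \citet{AlonBLMM22}/\citet{BunNSV15} (in the form that works for $\eps,\delta$ at the stated constants) gives $m = \Omega(\log^\star d')$, and choosing $d'$ so that $n(d',2,k) \le$ (available tree depth) $\asymp d$ costs only one more logarithm-star level, i.e. $\log^\star d' = \Omega(\log^\star d)$ because $d'$ is at least, say, a fixed root of $\log d$ — anything of the form $d' \ge 2^{\sqrt{\log d}}$ already gives $\log^\star d' \ge \log^\star d - 1$. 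Tracking constants yields the claimed bound.

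The main obstacle, I expect, is the second step — designing the coloring so that type-monochromaticity actually delivers a \emph{consistent two-valued} labelling pattern along a path, rather than merely "few colors". The subtlety is that in the infinite-label setting the value $h_v(x_u)$ can a priori be any element of $\cY$, so naively the coloring has unboundedly many colors; one must argue that only boundedly many values can actually occur in the relevant positions (e.g. because along a Littlestone path the turn labels come from a $2$-set, and an ancestor point's label under descendant concepts is pinned down, up to the type, by the tree's consistency constraints), or else first apply a preliminary pigeonhole/Ramsey pass to collapse to a bounded palette. Getting this bookkeeping right — and making sure the "always turn left" path inside $S$ genuinely produces the strict staircase $h_i(x_j)=a$ for $j<i$, $=b$ for $j=i$ with $a\neq b$ fixed across the whole path — is where the real work lies; everything else is assembling known pieces.
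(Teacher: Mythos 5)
There is a genuine gap, and it sits exactly at the heart of your Step 2. The structure you extract from the Littlestone tree controls only the "upper triangle'' of the array: your $h_i$ satisfies $h_i(x_j)=a$ for $j<i$ and $h_i(x_i)=b$, but nothing constrains $h_i(x_j)$ for $j>i$ — in the partial setting these values are typically $\star$, and in the infinite-label setting they can be arbitrary, even fresh, elements of $\cY$. A genuine threshold family needs both triangles ($f_i(x_j)=\1[i\geq j]$ for \emph{all} $i,j$), and the binary lower-bound machinery of \cite{AlonBLMM22} needs this: without the lower triangle, the threshold-realizable distributions you would feed to the relabelled learner are not realizable by $\cH$, so the accuracy guarantee never applies and the reduction collapses. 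This is not a bookkeeping issue that more Ramsey can repair: the paper's own example (\Cref{fig:class_with_td=2_ld=infty}) is a class with $\LD{\cH}=\infty$ and $\TD{\cH}\leq 2$, so no threshold staircase of length $3$ exists inside it at all, and \Cref{cor:DP_vs_TD} states explicitly that finite threshold dimension does not preclude DP-hardness — i.e.\ any route through "find thresholds inside $\cH$ and invoke the binary threshold attack'' is blocked in principle. A second, independent problem is the palette of your coloring: the second coordinate $h_{\mathrm{leaf\ below\ }v}(x_u)$ ranges over $\cY$, which is unbounded, and your hoped-for collapse to boundedly many labels fails precisely in the infinite-label examples (each vertex may use a unique pair of edge labels), so even the "fixed $a,b$ along the path'' part of the staircase is unattainable; there is no pigeonhole over an infinite palette.

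The paper's proof is structured to avoid both obstacles. Its Ramsey coloring is not of labels of the class but of the \emph{algorithm's behaviour}: $(m+2)$-chains (where $m$ is the sample size, not $2$) are colored by the learner's prediction probabilities rounded to multiples of $\tfrac{1}{100m}$, a palette of size $(100m+1)^{m+1}$ that is bounded independently of $\cY$; \Cref{thm:ramsey_chains} then yields a subtree on which the algorithm is approximately comparison-based (\Cref{lemma:every_alg_is_comparison_based_somewhere}). The second step is likewise not the binary threshold attack but a new reduction from the interior point problem (\Cref{lemma:SC_of_CB_alg}): sample a uniformly random branch, embed the IPP inputs as depths on it, run the learner, and output the depth of the deepest long almost-correct interval; privacy follows by post-processing and utility from the comparison-based property plus the empirical-accuracy guarantee. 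If you want to salvage your plan, you would have to replace the appeal to the binary threshold lower bound with an argument that works for the one-sided staircase directly — which is essentially what forces the tree-based, comparison-based formulation the paper adopts.
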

\end{boxH}
\Cref{thm:DP_implies_LD} extends the result by~\cite{AlonLMM19}, who handled total concept classes in the binary case, and the result by~\cite{JungKT20}, who handled total concept classes in the multiclass setting with $k$ labels (deriving a lower bound of $\log^\star \frac{\log_k d}{k^2}$, which depends on $k$).
The proof of \Cref{thm:DP_implies_LD} can be found in \Cref{sec:privacy_vs_LD}. An immediate consequence of this lower bound is that it is impossible to privately PAC learn a class $\cH$ with an infinite Littlestone dimension.

\begin{boxH}
 \begin{maincorollary}
[DP-Learnability$\implies$Finite Littlestone Dimension]\label{cor:DP_implies_LD}
Let $\cH$ be a partial concept class over an arbitrary label space $\cY$. If $\cH$ is DP-learnable, then it has a finite Littlestone dimension, $\LD{\cH} < \infty$. Equivalently, if $\cH$ is DP-learnable, then it is also online learnable.
\end{maincorollary}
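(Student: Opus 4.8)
The plan is to deduce the corollary from \Cref{thm:DP_implies_LD} by a contrapositive argument, together with the classical fact that online learnability is equivalent to finiteness of the Littlestone dimension. First I would assume, towards a contradiction, that $\cH$ is DP-learnable while $\LD{\cH} = \infty$. Unpacking the definition of DP-learnability and fixing the accuracy constants $\alpha = \beta = 10^{-4}$ and privacy parameter $\epsilon = 10^{-3}$, there is a learning algorithm $\cA$ for $\cH$ with a \emph{finite} sample complexity $m$ that is $(10^{-4},10^{-4})$-accurate and $(\epsilon,\delta)$-differentially private; since we are free to take $\delta$ as small as we like and the sample complexity stays finite for each fixed $\delta$ (depending on $1/\delta$ only mildly under the standard definition, see \Cref{sec:preliminaries_learning}), we may arrange $\delta \le \frac{1}{10^3 m^2}$, so that $\cA$ meets all the hypotheses of \Cref{thm:DP_implies_LD}.

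Next I would exploit that $\LD{\cH}=\infty$ means $\LD{\cH}\ge d$ for \emph{every} $d\in\bbN$. Applying \Cref{thm:DP_implies_LD} to the \emph{same} algorithm $\cA$ with each such $d$ gives $m \ge c\log^\star d$ for a universal constant $c>0$ and all $d$. Since $\log^\star d \to \infty$ as $d \to \infty$ while $m$ is a fixed finite number, this is a contradiction; hence $\LD{\cH} < \infty$. Finally, for the ``equivalently'' clause I would invoke the Littlestone-dimension characterization of online learnability \cite{Lit87,Ben-DavidPS09}: a class is online learnable if and only if its Littlestone dimension is finite, a fact that extends to partial concept classes and arbitrary label spaces.

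I do not anticipate any real obstacle: the entire content is already packaged in \Cref{thm:DP_implies_LD}, and the corollary is essentially a limiting statement. The only point demanding a bit of care is the bookkeeping around the privacy parameter — one must confirm that the specific coupling $\delta \le \frac{1}{10^3 m^2}$ required by \Cref{thm:DP_implies_LD} is compatible with whichever formulation of DP-learnability is adopted in the preliminaries (equivalently, that the sample complexity of a DP learner does not blow up too fast as $\delta\to 0$ with the other parameters held fixed), which holds for the usual definition.
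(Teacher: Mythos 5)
Your proof is correct and takes essentially the same route as the paper, which presents the corollary as an immediate consequence of \Cref{thm:DP_implies_LD}: a fixed finite sample complexity $m$ cannot satisfy $m=\Omega(\log^\star d)$ for every $d$ once $\LD{\cH}=\infty$, and the ``equivalently'' clause is the standard Littlestone characterization of online learnability. The constant bookkeeping you flag (driving $\alpha,\beta,\delta$ below the thresholds of \Cref{thm:DP_implies_LD}, and taking $\epsilon\le 10^{-3}$, which under the paper's definition of DP-learnability with $\epsilon(m)=O(1)$ would invoke standard privacy amplification) is treated just as briefly by the paper itself, so your handling is on par with its argument.
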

\end{boxH}

\citet*{AlonHHM21} studied PAC learnability of partial concept classes in the binary case. They asked whether the characterization of differentially private PAC learnability by the Littlestone and threshold dimensions extends to this setting. They further exhibited a class with an unbounded Littlestone dimension and a threshold dimension of 2, which shows that the equivalence between threshold and Littlestone dimensions breaks for partial concept classes. Combined with the above result, this implies that the threshold dimension does not characterize private PAC learnability:

\begin{boxH}
 \begin{maincorollary}
[DP-Learnability $\neq$ Finite Threshold Dimension]\label{cor:DP_vs_TD}
There exists a partial concept class $\cH \subseteq \{0,1,\star\}^\cX$ such that (i) $\cH$ has a finite threshold dimension, and (ii) $\cH$ is not differentially privately PAC learnable.
\end{maincorollary}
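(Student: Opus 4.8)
The plan is to obtain \Cref{cor:DP_vs_TD} by feeding a known combinatorial construction into \Cref{cor:DP_implies_LD}, so the proof is essentially a one-line juxtaposition of two facts. Concretely, I would invoke the separating example of \citet{AlonHHM21}: they exhibit a binary partial concept class $\cH_0 \subseteq \{0,1,\star\}^{\cX}$ over a countable domain $\cX$ with $\TD{\cH_0} = 2$ but $\LD{\cH_0} = \infty$. (This is exactly the class they use to show that, in contrast to the total binary setting, finiteness of the threshold dimension and finiteness of the Littlestone dimension are no longer equivalent for partial classes.) Set $\cH := \cH_0$.

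Property~(i) is then immediate from the cited construction: $\TD{\cH} = 2 < \infty$. For property~(ii), I would argue by contraposition using \Cref{cor:DP_implies_LD}: that corollary asserts that every DP-learnable (possibly partial) concept class over an arbitrary label space has finite Littlestone dimension; since $\LD{\cH} = \infty$, it follows that $\cH$ is not differentially privately PAC learnable. Unwinding \Cref{cor:DP_implies_LD}, this is because \Cref{thm:DP_implies_LD} gives, for every finite $d \le \LD{\cH}$, a sample-complexity lower bound of $\Omega(\log^\star d)$ for any learner meeting the fixed accuracy and privacy parameters; as $\LD{\cH}$ is infinite, $d$ may be taken arbitrarily large, so no finite sample complexity suffices, i.e.\ $\cH$ admits no DP learner.

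There is no real computation here, so the only thing I would check carefully is a matter of definitions — that the notion of threshold dimension for partial concept classes used by \citet{AlonHHM21} agrees with the one in \Cref{sec:preliminaries_learning} (it does), and that their separating class is a partial concept class in the sense of our statement, i.e.\ a map $\cX \to \{0,1,\star\}$ for a suitable domain $\cX$ (it is). The substantive mathematical work underlying this corollary lives elsewhere: in \Cref{thm:DP_implies_LD} (hence in the Ramsey theorems for trees, and \Cref{thm:ramsey_chains} in particular) on one side, and in the construction of \citet{AlonHHM21} on the other; the corollary itself is just their combination, so I do not anticipate any genuine obstacle beyond this bookkeeping.
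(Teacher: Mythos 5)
Your proposal is correct and matches the paper's own argument: the paper derives \Cref{cor:DP_vs_TD} precisely by combining the class of \citet{AlonHHM21} (with threshold dimension $2$ and infinite Littlestone dimension, as also illustrated in \Cref{fig:class_with_td=2_ld=infty}) with \Cref{cor:DP_implies_LD}, which itself follows from \Cref{thm:DP_implies_LD}. No gaps; your definitional checks are the only bookkeeping needed.
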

\end{boxH}

It remains open whether a finite Littlestone dimension characterizes private PAC learnability (\Cref{cor:DP_implies_LD} provides one direction; the other direction is yet to be decided).

Moreover, our results close a gap regarding the equivalence between online and DP-learnability of (total) multi-valued concept classes. In fact, as already mentioned previously, the results in~\cite{JungKT20,SivakumarBG21} showed that the equivalence continues to hold in the case of a finite label space~$\cY$. 
However, their proof techniques do not extend 
to infinite label spaces (where the connection between Littlestone dimension and threshold-like structures no longer holds \cite{hanneke:23}).
Our results show that DP-learnability continues to imply online learnability in the case of an infinite~$\cY$, leaving as an open question whether the converse holds as well.

\subsection{Additional Related Work}\label{sec:related_work}
\paragraph{Ramsey Theory on Trees.} 
In general, a Ramsey theorem for trees is a statement of the following form: for any coloring of $m$-subsets of a sufficiently deep tree, there exists a deep subtree that is monochromatic (or has only a few colors). Various forms of these statements have been explored, differing mainly in how a subtree is defined. In particular, the following variations have been considered (see Figure~\ref{fig:subtree_defs}):

\begin{enumerate}
    \item[(i)] The simplest definition of a subtree preserves only the ``descendant of'' relation.  That is, a complete subtree of depth $d$ of a binary tree $T$ consists of any collection of $2^{d+1}-1$ vertices in $T$ that maintain the same ``descendant of'' relationships as in a complete binary tree of depth $d$.

    \item[(ii)] The definition we consider in this paper preserves also the relationships of left/right descendants. Thus, every subtree in this case is also a subtree as defined in the first case, but not vice versa.

    \item[(iii)] The most studied definition also accounts for levels: on top of (ii), a subtree here must ensure that any pair of vertices on the same level in the subtree is also on the same level in $T$.
\end{enumerate}

Milliken's theorem~\cite{Milliken79} is arguably the most celebrated Ramsey-type result for trees and is stated with respect to Item~(iii) above. We call subtrees as the ones in Milliken's theorem \emph{level-aligned} and refer the reader to~\Cref{sec:preliminaries_combi} for more details. The theorem states the following:
if $T$ is a tree of infinite depth, where each vertex has a finite non-zero number of children, and we color level-aligned subtrees of depth~$m$ with $k$ colors, then there exists a monochromatic level-aligned subtree~$S$ of infinite depth. Milliken also proved a finite version of the theorem, but without providing any quantitative results; the strategy used in his proof makes it challenging to derive non-trivial bounds for the associated Ramsey numbers. Moreover, Ramsey's original result on sets can be derived as a corollary of Milliken's theorem.
In their recent monograph, \citet*{2024milliken} explore applications of Milliken’s tree theorem from the point of view of computability theory.
They focus on infinite level-aligned subtrees.
In particular, they prove that for every coloring of $m$-subsets in an infinite complete binary tree, there exists an infinite level-aligned subtree colored with at most $\tau_{\infty}^{lvl}(m)$ colors, and that the number of colors $\tau_{\infty}^{lvl}(m)$ is optimal.
A Ramsey result regarding infinite trees within our setting (i.e. Item~(ii) above) appears in \cref{app:ramsey_infinite}.

A recent line of research furthered the study of the pigeonhole principle concerning level-aligned subtrees, initiated by~\citet*{FurstenbergWeiss2003}. In their work, the authors extend Furstenberg's ergodic-theoretic proof of Szemerédi's theorem on arithmetic progressions~\cite{Furstenberg1977, Szemeredi1975} to prove a pigeonhole principle for binary trees. Specifically, they show that, given a depth $d$ and number of colors $k$, there exists an $n = n(d, k)$ such that in any coloring of the vertices of a complete binary tree of depth $n$ using $k$ colors, there exists a level-aligned monochromatic subtree $S$ whose levels form an arithmetic progression in $T$. Notably, their argument did not provide any quantitative bounds on $n$. In a follow-up work, \citet*{PachTS12} gave a constructive proof of the pigeonhole principle within Milliken's setting (Item~(iii) above), where the levels of $S$ are not required to form an arithmetic progression in $T$. Specifically, they showed that, in this case, $n = \Theta(dk \log k)$ is sufficient and necessary. We use this result, which is formally stated in~\Cref{prop:php}.

\begin{figure}[!htb]
\centering
\begin{subfigure}{0.14\textwidth}
\centering
\begin{adjustbox}{valign=t}
\begin{forest}
rounded/.style={circle, draw, fill opacity=0.4}
    [{}, fill, for tree=rounded
      [{},
        [{},fill]
        [{},fill]
      ]
      [{},
        [{},]
        [{},]
      ]
    ]
\end{forest} 
\end{adjustbox}
\caption{}
\label{subfig:subtree_def_1}
\end{subfigure} \hspace{1cm}
\begin{subfigure}{0.14\textwidth}
\centering
\begin{adjustbox}{valign=t}
\begin{forest}
rounded/.style={circle, draw, fill opacity=0.4}
    [{}, fill, for tree=rounded
      [{},
        [{},fill]
        [{},]
      ]
      [{},fill
        [{},]
        [{},]
      ]
    ]
\end{forest} \hspace{1cm}
\end{adjustbox}
\caption{}
\label{subfig:subtree_def_2}
\end{subfigure}\hspace{1cm}
\begin{subfigure}{0.14\textwidth}
\centering
\begin{adjustbox}{valign=t}
\begin{forest}
rounded/.style={circle, draw, fill opacity=0.4}
    [{}, fill, for tree=rounded
      [{},
        [{},fill]
        [{},]
      ]
      [{},
        [{},fill]
        [{},]
      ]
    ]
\end{forest} \hspace{1cm}
\end{adjustbox}
\caption{}
\label{subfig:subtree_def_3}
\end{subfigure}\hspace{1cm}

\caption{These three examples summarize the definitions of a subtree discussed in Section \ref{sec:related_work}, ordered by their restrictiveness. sub-Figure (\protect\subref{subfig:subtree_def_1}) shows a valid subtree only according to the most relaxed definition, while sub-Figure (\protect\subref{subfig:subtree_def_3}) shows a valid subtree according to the most restrictive definition (and thus according to all other definitions as well).
}
\label{fig:subtree_defs}
\end{figure}
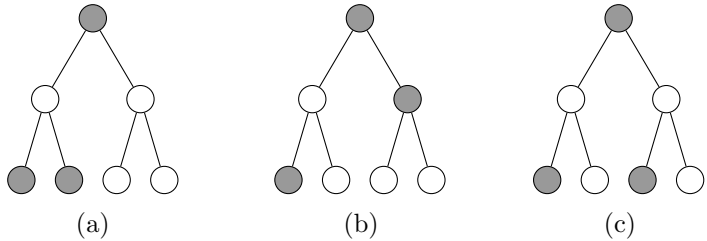

\paragraph{PAC Learning and Differential Privacy.}
PAC learnability of partial concept classes was initially studied in the binary setting in \cite{LongPartial} and \cite{AlonHHM21} and further explored in~\cite{Ali23}. In particular, ~\citet*{AlonHHM21} give a thorough overview of their PAC learnability in general, highlighting the differences with total concept classes as well as relevant open questions. Subsequent works have extended portions of that work to the online learning framework~\cite{CheungHHH23} and the multiclass setting \cite{brukhim2022characterization,KalavasisVK22,chirag:24}.

As mentioned in the Introduction, \citet*{JungKT20} show that the equivalence of DP learnability and online learnability continues to hold for multi-valued concept classes. In particular, they define a concept of thresholds allowing them to adapt the techniques used in~\cite{AlonBLMM22}. \citet*{SivakumarBG21} significantly improve the lower bound shown in~\cite{JungKT20} to prove one direction of the equivalence i.e.\ online learnability implies DP learnability.

Furthermore, extensive research has also been devoted to studying the question of which learning tasks can be performed subject to \emph{pure} differential privacy. Beimel et al. \citep{beimel2013characterizing,Beimel19Pure} introduced a quantity called \emph{representation dimension} that characterizes pure DP learnability. In a follow-up work, Feldman and Xiao \citep{FeldmanX15} found an interesting connection with communication complexity by associating every concept class $\cH$ with a communication task whose complexity characterizes whether $\cH$ is pure DP learnable. Additionally, \citet*{alon2023unified} showed a unified characterization for both pure and approximate differential privacy using cliques and fractional cliques of a certain graph corresponding to $\cH$.

\section{Proof Overview and Technical Highlights} \label{sec:technical}
In this section, we provide an outline of the proof ideas of the main theorems in this paper.

\subsection{Privacy}\label{sec:technical:privacy}

The proof of \Cref{thm:DP_implies_LD} consists of two parts: consider a DP algorithm as in the premise of \Cref{thm:DP_implies_LD}; the first part of the proof applies our developed Ramsey theorem for trees to identify a large subtree where the algorithm behaves in a `regular' manner. The second part shows that such algorithms can privately solve the interior point problem. This allows us to apply lower bounds for the interior point problem and conclude the stated lower bound.

The first part is similar to the first step in the proof by \citet*{AlonBLMM22} for the binary case, 
with the crucial difference that we use the Ramsey theorem for trees that we developed.
In contrast, the proof by \citet{AlonBLMM22} relies on the relationship between the Littlestone dimension and the threshold dimension of binary classes.
When considering partial concept classes and multiclasses over infinite label spaces, this connection fails to hold, as demonstrated in \Cref{fig:class_with_td=2_ld=infty}.
As a consequence, we needed to develop techniques allowing us to work directly with Littlestone trees, which are more structured than simple thresholds.

\begin{figure}[!htb]
\centering
\begin{subfigure}[l]{0.45\textwidth}
\centering
\begin{adjustbox}{valign=t}
\begin{forest}
rounded/.style={circle, draw, fill opacity=0.2}
    [{$x_1$}, for tree=rounded
      [{$x_2$}
            [{$x_4$}
                [{$h_0$},rectangle,rounded corners,fill,draw]
                [{$h_1$},rectangle,rounded corners,fill,draw]
            ]
            [{$x_5$}
                [{$h_2$},rectangle,rounded corners,fill,draw]
                [{$h_3$},rectangle,rounded corners,fill,draw]
            ]  
      ]
      [{$x_3$}
        [{$x_6$},                                   [{$h_4$},rectangle,rounded               corners,fill,draw]                          [{$h_5$},rectangle,rounded               corners,fill,draw] 
        ]
        [{$x_7$}
            [{$h_6$},rectangle,rounded               corners,fill,draw]                  [{$h_7$},rectangle,rounded               corners,fill,draw] 
        ]
      ]
    ]
\end{forest} \hspace{0.1\textwidth}
\end{adjustbox}
\end{subfigure}
\begin{subfigure}[r]{0.45\textwidth}
\centering
\begin{adjustbox}{valign=t}
\small
\begin{tabular}{|cccccccc|}
    \hline
   & $x_1$ & $x_2$ & $x_3$   & $x_4$ & $x_5$   & $x_6$   & $x_7$   \\
$h_0=$& $0$   & $0$   & $\star$ & $0$   & $\star$ & $\star$ & $\star$ \\
$h_1=$& $0$   & $0$   & $\star$ & $1$   & $\star$ & $\star$ & $\star$ \\
$h_2=$& $0$   & $1$   & $\star$ & $\star$   & $0$ & $\star$ & $\star$ \\
$h_3=$& $0$   & $1$   & $\star$ & $\star$   & $1$ & $\star$ & $\star$ \\
$h_4=$& $1$   & $\star$   & $0$ & $\star$   & $\star$ & $0$ & $\star$ \\
$h_5=$& $1$   & $\star$   & $0$ & $\star$   & $\star$ & $1$ & $\star$ \\
$h_6=$& $1$   & $\star$   & $1$ & $\star$   & $\star$ & $\star$ & $0$ \\
$h_7=$& $1$   & $\star$   & $1$ & $\star$   & $\star$ & $\star$ & $1$ \\
\hline
\end{tabular}
\end{adjustbox}
\normalsize
\end{subfigure}
\caption{Consider $\cX$ as vertices of an infinite complete binary decision tree such that every internal vertex is labeled with a unique point $x\in\cX$, and define a partial concept class~${\cH_n\subset\{0,1,\star\}^\cX}$ which consists of all the partial concepts that realize exactly one branch to depth $n$, and label every point $x\in\cX$ outside of those $n$ vertices with $\star$. The Littlestone dimension of $\cH_n$ is~$n$, while the threshold dimension is $\leq 2$. Define $\cH=\bigcup \cH_n$, and it holds that $\LD{\cH}=\infty$, while $\TD{\cH}\leq 2$. This example can be easily modified to a multiclass $\cH$ over an infinite label domain $\cY$: for each hypothesis $h$, instead of labeling off-branch examples with $\star$, label them by a label $y_h\in\cY$ unique to $h$.}
\label{fig:class_with_td=2_ld=infty}
\end{figure}
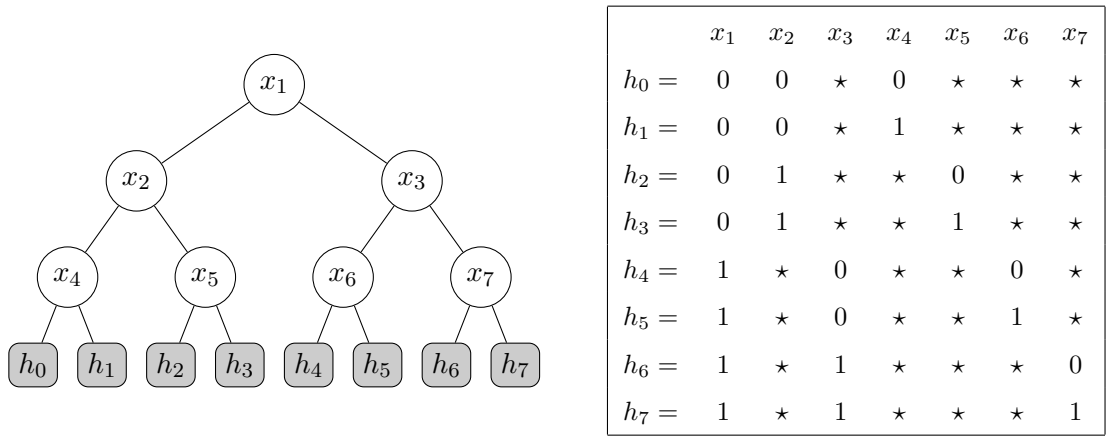

\paragraph{Step 1: Reduction to Comparison-based Predictions.}
The reduction to algorithms that learn thresholds
allows \citet{AlonBLMM22} to exploit the structure of one dimensional thresholds and use (the classic) Ramsey theorem to show the existence of what they refer to as an \emph{homogeneous set} of large size. Basically, a subset of an ordered domain $\cX$ is homogeneous with respect to an algorithm~$\cA$, if the prediction of $\cA$ on a test point $x$ depends only on the labels of the sorted input sample $S$ and the position of $x$ inside of $S$. 
This property amounts to the algorithm being \emph{comparison-based}, i.e.\ the algorithm makes all its decisions only based on how the elements of the input sample and the test point compare to each other, and not on their absolute values/locations.
We adapt this notion to trees in a natural way, and define \emph{comparison-based predictions} with respect to a binary decision tree. First, consider deterministic algorithms. Given a binary decision tree $T$, a deterministic algorithm is comparison-based with respect to $T$ if its prediction on a test point $x$ depends solely on the labels of the ordered input sample $S$, and the comparisons of the test point with points in $S$, where comparisons are based on the partial order on $T$. A result of comparing two examples $x'$ and $x''$ can be one of five outcomes: (i)+(ii) $x'$ is a left (right) descendant of $x''$, (iii)+(iv) $x''$ is a left (right) descendant of $x'$, or (v) $x'$ and $x''$ are incomparable.
For a visual example please refer to \Cref{fig:comparison_based}.
We extend this notion naturally to randomized algorithms; for more details and formal definitions please refer to~\Cref{sec:privacy_vs_LD}.

\begin{figure}[htb]
\centering
\begin{adjustbox}{valign=t}
\begin{forest}
rounded/.style={circle, draw, fill opacity=0.4}
    [{},fill=blue, for tree=rounded, edge=blue
      [{$u$},edge=blue,inner sep=0.8mm
        [{},
            [{},
                [{},]
                [{},]
            ]
            [{},
                [{},]
                [{},]
            ]
        ]
        [{$w$},inner sep=0.8mm
            [{},fill=blue
                [{},]
                [{},edge=blue]
            ]
            [{},
                [{},]
                [{},]
            ]
        ]
      ]
      [{},fill=red
        [{$v$},edge=red, ,inner sep=0.8mm
            [{},
                [{},]
                [{},]
            ]
            [{},fill=red
                [{},]
                [{},edge=red]
            ]
        ]
        [{},
            [{},
                [{},]
                [{},]
            ]
            [{},
                [{},]
                [{},]
            ]
        ]
      ]
    ]
\end{forest} \hspace{0.2cm}
\end{adjustbox}
\caption{The blue and the red labeled examples represent $T$-realizable (ordered) input samples  $S_{\textcolor{blue}{blue}}$ and $S_{\textcolor{red}{red}}$. The labels of $S_{\textcolor{blue}{blue}}$ and $S_{\textcolor{red}{red}}$ are the same, and the comparisons of the point $u$ and $S_{\textcolor{blue}{blue}}$ are identical to the comparisons of $v$ and~$S_{\textcolor{red}{red}}$. Therefore a comparison-based deterministic algorithm satisfies $\cA(S_{\textcolor{blue}{blue}})(u)=\cA(S_{\textcolor{red}{red}})(v)$. Note that the comparisons of $u$ and $S_{\textcolor{blue}{blue}}$ are not the same as the comparisons of $w$ and $S_{\textcolor{blue}{blue}}$, since the second point of $S_{\textcolor{blue}{blue}}$ is a right descendant of $u$ while it is a left descendant of $w$.
}
\label{fig:comparison_based}
\end{figure}

We developed our Ramsey result for chains in order to show that given a tree $T$, any algorithm is approximately comparison based on a large subtree. See \Cref{lemma:every_alg_is_comparison_based_somewhere} for more details.

\paragraph{Step 2: Lower Bound for Sample Complexity of Private Comparison-based Algorithms.}\label{par:bla} The second step of the proof consists of establishing a lower bound on the sample complexity of private comparison-based algorithms.
We do so by giving a reduction from the \emph{interior point problem}, which was introduced by \citet*{BunNSV15} in the context of properly learning thresholds. A randomized algorithm solves the interior point problem on~$[n]$ if for every input dataset $X\in[n]^m$, with high probability it returns a point that lies between $\min X$ and $\max X$ (see \Cref{sec:preliminaries_learning}). \citet{BunNSV15} showed that solving the interior point problem in a private manner requires a dataset size of $m\geq\Omega(\log^\star n)$. We use this result to derive a lower bound on the sample complexity. See \Cref{lemma:SC_of_CB_alg} for more details.

The idea that stands behind our reduction is that the predictions of empirical
comparison-based algorithms have high correlation with the branch on which the input sample $S$ lies on. To give some intuition about this idea, we will consider the following simple case. Let $T$ be a binary decision tree of depth $n$, and let $\cA$ be a deterministic algorithm that operates as follows. Given a $T$-realizable input sample $S$, $\cA$ finds the left most branch $B$ that realizes $S$ and outputs an hypothesis that labels every point on the branch according to the branch, and any point outside of it by~$0$. It might be convenient to imagine that $\cA$ outputs the branch $B$. Note that indeed~$\cA$ is comparison-based.
Now, given $m$ points $d_1,\ldots,d_m\in\left[\frac{n}{2}\right]$, representing an input dataset to the interior point problem, pick uniformly at random a branch $B$ in $T$, and associate each point $d_i$ with the point $x_i$ on $B$ of depth $d_i$. The first $m/2$ points define an input sequence $S=\left((x_1,y_1),\ldots, (x_{m/2},y_{m/2})\right)$, where the labels $y_i$'s are determined by~$B$. Next, run~$\cA$ on $S$ and denote its output by~$B_\cA$. Since~$B$ was chosen randomly, and $\cA$ can only see up to first  $\frac{n}{2}$ turns of~$B$, the depth of the first point on which $B$ and $B_\cA$ deviates is (w.h.p.) an interior point of $d_1,\ldots,d_m$.  

We note that in contrast with the first part of our proof (reduction to comparison based algorithms), which can be seen as an adaptation of the first part of the proof by \citet{AlonBLMM22} (with Ramsey for trees replacing the classical Ramsey applied on thresholds), the second part in our proof (involving a reduction from the interior point problem) is not an adaptation of the proof for the binary case. Instead, \citet{AlonBLMM22} provided a direct privacy attack on homogeneous algorithms that learn thresholds. Their attack is constructive and relies on specific properties of homogeneous algorithms and the structure of one-dimensional thresholds. It remains unclear whether this attack can be adapted to comparison-based predictions within the context of trees. Despite our best efforts to construct a similar attack, we were unable to do so successfully.

\medskip

Quantitatively, the two steps in our proof combine a bound on Ramsey numbers (Step 1) and a lower bound for the interior point problem (Step 2). At first sight, it might seem that composing these yields an $\Omega(\log^\star\log^\star d)$ lower bound rather than the single $\Omega(\log^\star  d)$ which is stated in \Cref{thm:DP_implies_LD}. This initial impression arises because each of these arguments independently involves a $\log^\star$ cost. The Ramsey bound in the first step is of the same magnitude as the one in the proof by \citet{AlonBLMM22}, which yields a $\log^\star$ cost. Additionally, the interior point problem also has a $\log^\star$ cost~\cite{BunNSV15}. Hence, composing them sequentially might intuitively suggest that the resulting bound would be $\Omega(\log^\star\log^\star(d))$. However, a more careful handling of the implied calculations yields a single $\log^\star$, which is optimal~\cite{KaplanLMNS20}.

\medskip
Finally, it is worth noting that our proof not only generalizes the proof for the binary case to broader settings, but also, by directly addressing Littlestone trees instead of taking an indirect path through thresholds, illustrates that the Littlestone dimension serves as an inherent barrier to privacy. This highlights a robust connection between the two, showcasing them as two forms of~stability.

\subsection{Ramsey Theorems} \label{sec:technical:Ramsey}
In this section, we outline the main steps and ideas in the proofs of our Ramsey results on binary trees.
The counterexamples shown in~\Cref{sec:main_results:ramsey} indicate that, if we color subsets of vertices without differentiating according to a notion of type, no positive existence result is attainable.
The approach we take is to consider colorings of subsets of vertices of a \emph{specific type}, rather than colorings of all possible subsets of $m$-vertices. We then demonstrate that for any such coloring, there exists a deep monochromatic subtree. Then, given an arbitrary coloring of all $m$-subsets, we apply our argument to each of the $\tau(m)$ different types and obtain a subtree that contains at most $\tau(m)$ ($=$ number of types) many colors.\footnote{This approach yields loose quantitative upper bounds on the Ramsey numbers. In~\Cref{thm:ramsey_chains}, we obtain better bounds by deriving the type-monochromatic subtree while considering all types 'in parallel', rather than applying the type-specific bound sequentially 'type-by-type'.} The obtained subtree is in fact type-monochromatic:  whenever two $m$-subsets are of the same type, they are colored with the same color.

The distinction between incomparable and comparable pairs plays a key role in our results. The quantitative bounds we obtain in these two cases are very different, and in fact, in the general case $m\geq 2$, we provide quantitative results only for $m$-chains: i.e.\ subsets of vertices that do not contain incomparable pairs.
This distinction is reflected in the infinite case as well.
Indeed, we claim that for any finite coloring of all subsets from a given type $\tau$ of an infinite binary tree, there exists an infinite monochromatic subtree if and only if $\tau$ is a chain. If $\tau$ is not a chain then one can only guarantee monochromatic subtrees of (arbitrarily large) finite depth (see \cref{app:ramsey_infinite}).
In the proof overview we focus on the case of pairs ($m=2$), and, as noted above, consider colorings of pairs of a specific type. This case is not only simpler to describe but also effectively captures the central ideas present in the proofs of the general results for $m$-subsets and $m$-chains. 

\paragraph{Comparable Pairs ($\equiv$ Chains).}
The proof for comparable pairs (as well as for chains) follows the same inductive procedure as described in the proof by \citet{ErdosRado52} for the classic Ramsey theorem, with this procedure being applied in parallel across the branches.
The building block of the proof is the pigeonhole principle for trees: for every vertex-coloring of a complete binary tree of depth~$n$ with~$k$ colors, there exists a monochromatic subtree of depth $\left\lfloor\frac{n}{k}\right\rfloor$ (see \Cref{prop:php}).

Assume without loss of generality that the given coloring is a coloring of \emph{left}-pairs\footnote{A pair of vertices is a \emph{left}-pair if one of the vertices is a left descendant of the other.}. We inductively define an appropriate vertex-coloring such that the guaranteed monochromatic subtree from the pigeonhole principle is also monochromatic with respect to the original coloring. 
The desired vertex-coloring is defined as follows. We start from the root of~$T$, and color every vertex in its left subtree by the color of the pair consisting of itself and the root. By applying the pigeonhole principle, there exists a monochromatic subtree~$S$ of the left subtree. The color of the root is set to be exactly the color of $S$. We repeat the exact same process and color in the same manner the roots of~$S$ and of the right subtree of~$T$, and inductively proceed in this way. Observe that the subset of vertices that we color via this procedure is a subtree of $T$. Moreover, the following property holds: if $v$ is colored with the color~$c$, then for every colored vertex $u$ that is a left descendant of $v$, the pair $\{v,u\}$ is also colored in the color~$c$ in the original coloring. Therefore, a monochromatic subtree with respect to this vertex-coloring is indeed also monochromatic with respect to the original \emph{left}-pairs coloring.

It is interesting to note that the existence of Ramsey numbers for $m$-chains can be derived as a corollary from Milliken's theorem~\cite{Milliken79}.
The finite version of Milliken's theorem states that for every positive integers $d,m$ and $k$, there exists a number $n=n(d,m,k)$ such that the following holds: for every complete binary tree $T$ of depth~$n$ and every coloring of level-aligned subtrees of~$T$ of depth~$m$ with~$k$ colors, there exists a monochromatic level-aligned subtree of~$T$ of depth~$d$.
Indeed, a coloring of $m$-chains of a fixed type $\tau$ induces a coloring of level-aligned subtrees of depth $m-1$, since every level-aligned subtree contains exactly one $m$-chain of type $\tau$. Therefore, a weaker, qualitative version of our result for $m$-chains follows by Milliken.
However, as we mentioned above, Milliken's proof does not provide quantitative bounds and the methodology used to prove it does not make it easy to obtain them as well. Our proof, instead, follows the line of the proof of the quantitative version of Ramsey theorem by~\citet{ErdosRado52} and thus allows to obtain non-trivial upper bounds for the Ramsey numbers.

\paragraph{Incomparable Pairs ($\equiv$ Non-Chains).}

The case of incomparable pairs (as well as the one of general non-chain $m$-subsets) is more intricate, as indicated by the great difference between the lower bound obtained in this case and the one for comparable pairs.
Our proof is based on an intermediate bipartite Ramsey result, which serves as a key component of it and might be of independent interest.
Informally, the bipartite Ramsey theorem says the following. Let $L$ and $R$ be complete binary trees. Then, for every coloring of pairs of vertices $\{\ell,r\}$, where $\ell\in L$ and $r\in R$ there exist deep\footnote{The depth of the obtained subtrees is a function of the depths of $L$ and $R$ and the number of colors.} subtrees $L'$ and $R'$
of $L$ and $R$, such that all pairs $\{\ell',r'\}, \ell'\in L', r'\in R'$ have the same color. For a formal statement see \Cref{theorem:bipartite}.

We begin by outlining the proof for incomparable pairs under the assumption of the bipartite theorem, followed by an explanation of how we establish it.
Given a coloring of incomparable pairs of a tree $T$, we define a vertex-coloring as follows. First consider the root of $T$, and its left and right subtrees. 
By applying the bipartite theorem on the left and right subtrees we obtain subtrees~$L'$ and~$R'$ such that all incomparable pairs of vertices from $L'$ and $R'$ are colored with the same color. Set the color of the root to be exactly this color. 

Repeat the same process and color the roots of~$L'$ and $R'$ in the same manner. This procedure yields a vertex-coloring of a subtree~$S$ of~$T$, with the property that the color of every incomparable pair $\{u,v\}$ in~$S$ with respect to the original pair-coloring is identical to the color of $\LCA(u,v)$ in~$S$. Finally, by applying the pigeonhole principle on $S$, we obtain a subtree that is monochromatic with respect to original pair-coloring, as desired. 

We proceed by outlining the proof of the bipartite theorem.
Let $L$ and $R$ be (deep enough) complete binary trees and consider a coloring of pairs $\{l,r\}$, where $l\in L, r\in R$. We start these interleaved colorings of vertices in $L$ and vertices in $R$:
\begin{enumerate}
    \item Given a vertex $v\in R$, color every vertex $u\in L$ by the color of the pair $\{u,v\}$. Apply the pigeonhole principle on $L$ and obtain a monochromatic subtree $T_v$ of depth $d$ colored with $c_v$. 
    \item Color $v\in R$ by $(T_v,c_v)$. 
    \item Do the same for all vertices in $R$. Apply the pigeonhole principle on $R$ and obtain a monochromatic subtree $R'$ colored with $(L',c)$, where $L'$ is a subtree of $L$.

    where its color is a subtree of $L$, which we call $L'$, together with a color~$c$.
\end{enumerate}
By construction, all pairs $\{l',r'\}$ where $l'\in L', r'\in R'$ are colored with the same color which is exactly~$c$.
Note that the number of colors used in the vertex coloring of $R$ is 
\[\lvert\{\text{subtrees of $L$ of depth $d$}\}\rvert\cdot (\text{number of colors in original coloring}).\]
Therefore, in order for $R'$ to be deep, we need to control the number of colors, and hence $L$ needs to be much shallower than $R$. On the other hand, in order for $L'$ to be deep, $L$ must be sufficiently deep.

\section{Preliminaries}\label{sec:preliminaries}

\subsection{Combinatorics}\label{sec:preliminaries_combi}

\begin{definition}[Subtree]\label{def:subtree}
Let $T$ be a binary tree. Define a \emph{subtree} of $T$ by induction on its depth~$d$. All vertices of $T$ are subtrees of $T$ of depth $d=0$. For $d \ge 1$ a subtree
of depth $d$ is obtained from an internal vertex of $T$ and a subtree of depth $d-1$ of the tree rooted at its left child, and a subtree of depth $d-1$ of the tree rooted at its right child. A subtree $S$ of $T$ is \emph{level-aligned} if any two vertices in the same level in $S$ are also in the same level in $T$.
\end{definition}

\begin{proposition}[Pigeonhole Principle for Trees, \cite{AlonBLMM22,PachTS12}]\label{prop:php}
  Let $d\in\bbN$ and let $T$ be a complete binary tree of depth $n$. Then, for every coloring of its vertices with $k$ colors, the following hold:
  \begin{enumerate}[label=(\roman*)]
  \item If $T$ has depth $n \ge dk$, it admits a subtree $S$ of depth $d$;\label{item:php}
   \item If $T$ has depth $n \ge 5dk\log k$, it admits a level-aligned subtree $S$ of depth $d$.
  \end{enumerate}
\end{proposition}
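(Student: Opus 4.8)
The plan is to prove part (i) directly by a greedy/inductive construction, and then note that part (ii) is cited from prior work (Pach–Tardos–Toth and Alon–Bun–Livni–Malliaris–Moran) so I would only sketch how (i) bootstraps toward it. For (i), I want to show: if a complete binary tree $T$ of depth $n \ge dk$ is vertex-colored with $k$ colors, then some color class contains a complete subtree (in the sense of Definition~\ref{def:subtree}, preserving left/right descendant relations) of depth $d$.

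\textbf{Main argument for (i).} I would induct on $d$. The base case $d=0$ is trivial: any single vertex is a depth-$0$ subtree, and $n \ge 0$ always holds. For the inductive step, suppose the claim holds for $d-1$ (with threshold $(d-1)k$), and let $T$ have depth $n \ge dk$. Define, for each vertex $v$ of $T$ at depth $\le n - (d-1)k - 1$, a "reachability" predicate: say $v$ is \emph{$c$-good at level $j$} if the subtree of $T$ rooted at $v$ (which has depth $n - \mathrm{depth}(v) \ge$ the relevant bound) contains a monochromatic-in-color-$c$ complete subtree of depth $j$ whose root is $v$ itself. The key recursive observation is: $v$ is $c$-good at level $j$ iff $v$ has color $c$ and \emph{both} children of $v$ have a descendant that is $c$-good at level $j-1$ — more precisely, in each of the two subtrees rooted at the children of $v$ there is some vertex that is $c$-good at level $j-1$. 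I would then argue by a counting/pigeonhole on depth: walking down any root-to-leaf path of length $n \ge dk$, the "best monochromatic depth achievable rooted somewhere along the way" can be tracked, and since there are only $k$ colors, over $dk$ levels one color must "make progress" $d$ times. Concretely, the cleanest route: show by induction on $i$ that among any complete binary subtree of depth $ik$, there is a color $c$ and a complete monochromatic-$c$ subtree of depth $i$; the step passes from $ik$ to $(i+1)k$ by first using the $k$-fold pigeonhole within the top $k$ levels to find many roots of the same color, then splicing together depth-$i$ monochromatic subtrees obtained from the induction hypothesis below them, checking that left/right relations are preserved because we always take one piece from the left child's subtree and one from the right child's subtree.

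\textbf{Expected obstacle.} The delicate point is bookkeeping the subtree structure: a monochromatic subtree of depth $d$ is built from a root plus a monochromatic subtree of depth $d-1$ in the left child's subtree and one in the right child's subtree, all of the \emph{same} color as the root. So the induction hypothesis cannot just be "there exists a monochromatic subtree of depth $i$ somewhere"; it must be strong enough to let me glue at a prescribed root of a prescribed color. The fix is to prove the stronger statement: \emph{for every color $c$, and every complete binary tree of depth $ik$ whose root has color $c$, if "enough" of the $k$-color budget permits, there is a monochromatic-$c$ subtree of depth $i$ rooted at that root} — or, more robustly, to phrase the invariant in terms of the function $f(v) = \max_c (\text{depth of the deepest monochromatic-}c\text{ subtree rooted at }v)$ and show $f(\text{root}) \ge \lfloor n/k\rfloor$ by a clean recursion $f(v) \ge 1 + \min(\max_{u \text{ in left subtree}} g_c(u), \dots)$ restricted to $c = \mathrm{color}(v)$. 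Getting this recursion to close with exactly the constant $1/k$ (rather than something weaker) is where care is needed; it should come out because each unit of depth-progress "costs" at most $k$ levels — one to find a same-colored vertex on each side via pigeonhole is not quite it, rather it is that along a single branch the color can stall for at most $k-1$ consecutive improvements-worth before repeating. For part (ii), I would simply state that it is the result of \citet{PachTS12} (with the explicit $5dk\log k$ bound) and not reprove it, since the level-alignment constraint requires their constructive argument rather than the naive greedy one.
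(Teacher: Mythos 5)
Your plan for part (i) has a genuine gap, and it is exactly the one you flag but do not resolve. In the step from $ik$ to $(i+1)k$, the induction hypothesis only gives, inside each child subtree, a monochromatic subtree of \emph{some} color, which need not match the color of the root you selected by pigeonhole in the top $k$ levels; so the splicing does not go through. Your first proposed fix --- ``for every complete binary tree of depth $ik$ whose root has color $c$ there is a monochromatic-$c$ subtree of depth $i$ rooted at that root'' --- is simply false (color the root $c$ and all other vertices with other colors), and the hedge ``if enough of the $k$-color budget permits'' is never made precise. Your second fix, the recursion for $f(v)=\max_c(\cdot)$, is the right kind of object but is left unclosed: you concede that getting the constant $1/k$ is ``where care is needed,'' and the single-path ``one color must make progress $d$ times over $dk$ levels'' heuristic does not work as stated, because progress at a vertex requires deep same-colored subtrees in \emph{both} children's subtrees, while a root-to-leaf walk controls only one side.

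The way the paper closes this is to strengthen the statement in a different direction: a budgeted version (\Cref{prop:php_general}) asserting that for any $a_1+\dots+a_k=n$ some color $c_i$ admits a monochromatic subtree of depth $a_i$. For $k=2$ this is proved by induction on the depth of the tree via a dichotomy that controls both children simultaneously: if the root has color $c_1$ and neither child subtree (of depth $(a_1-1)+a_2$) contains a $c_2$-monochromatic subtree of depth $a_2$, then the induction hypothesis forces \emph{both} child subtrees to contain $c_1$-monochromatic subtrees of depth $a_1-1$, which glue under the root; general $k$ then reduces to $k-1$ by merging two colors into one and applying the two-color case inside the merged-color subtree. This budget formulation is precisely the invariant your $f(v)$ recursion is groping for, and without it (or an equivalent potential such as $\sum_c h_c(v)$ with the right conventions) the inductive step does not close. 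Your treatment of part (ii) --- citing \citet{PachTS12} for the $5dk\log k$ bound rather than reproving it --- matches the paper.
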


We remark that \citet{AlonBLMM22} gave a proof for \Cref{item:php} only for $k=2$. 
For completeness, we provide a full proof for \Cref{item:php} in \Cref{app:additional_proofs:php}.

\subsection{Learning}\label{sec:preliminaries_learning}

\paragraph{PAC Learning.}
We use standard notations from PAC learning; for more details see e.g.\ \cite{shalev2014understanding}.
Let $\cX$ be a domain and~$\cY$ a label space.
A partial concept is a partial function $h$ from~$\cX$ to~$\cY$. It is convenient to represent a partial concept as a total function $h\in(\cY\cup \{\star\})^\cX$, where $\star\notin \cY$, and $h(x)=\star$ means that $h$ is undefined on $x$. The support of a partial concept $h$ is ${\mathsf{supp}(h)=\{x\in\cX|h(x)\neq \star\}}$.
Given an hypothesis $h:\cX\to \cY$, the \emph{empirical loss} of $h$ with respect to a sample $S=\bigl((x_1,y_1),\ldots,(x_m,y_m)\bigr)\in (\cX \times \cY)^m$ is defined as $\loss{S}{h}\coloneqq\frac{1}{m}\sum_{i=1}^m\1[h(x_i)\neq y_i]$.
A sample $S=\bigl((x_1,y_1),\ldots,(x_m,y_m)\bigr)\in (\cX \times \cY)^m$ is \emph{realizable} by a (possibly partial) concept class~$\cH$ if there is $h\in\cH$ such that $\{x_1,\ldots,x_m\}\subset \mathsf{supp}(h)$ and $\loss{S}{h}=0$.
The \emph{population loss} of $h$ with respect to a distribution $\cD$ over $\cX\times\cY$ is defined as $\loss{\cD}{h}\coloneqq\Pr_{(x,y)\sim \cD}[h(x)\neq y]$.
A distribution $\cD$ over labeled examples is \emph{realizable} with respect to~$\cH$ if $\inf_{h\in\cH}\loss{\cD}{h}=0$.

For a set $Z$, let $Z^\star= \cup_{n=0}^\infty Z^n$.
A \emph{learning rule} $\cA$ is a (possibly randomized) algorithm that takes as input a sample $S \in (\cX \times \cY)^\star$ and outputs an hypothesis~${h=\cA(S)\in \cY^\cX}$. 
In the realizable PAC learning model, the input $S$ is sampled i.i.d. from a realizable distribution~$\cD$, and the learner's goal is to output an hypothesis with small population loss with respect to $\cD$. 
More precisely, let $m, \alpha, \beta >0$. 
We say that an algorithm $\cA$ is an \emph{$(\alpha, \beta)$-accurate PAC learner} for $\cH$, with sample complexity $m$, if for every realizable distribution $\cD$, $\Pr_{S \sim \cD^m}\left[ \loss{\cD}{\cA(S)}\geq\alpha\right]\leq\beta$. Here, $\alpha$ is called the \emph{error} and~$\beta$ is called the \emph{confidence parameter}.
A class $\cH$ is \emph{PAC learnable} if there exist vanishing $\alpha(m),\beta(m)\to 0$ and an algorithm $\cA$ such that for all $m$, $\cA$ is a $(\alpha(m),\beta(m))$-learner for~$\cH$ with sample complexity $m$.

\paragraph{Littlestone Dimension.}

The Littlestone dimension is a combinatorial parameter which captures mistake and regret bounds in online learning \cite{Lit88,Ben-DavidPS09}. The definition of the Littlestone dimension uses the notion of \emph{mistake trees}. 
A mistake tree is a binary decision tree whose internal vertices are labeled with instances from~$\cX$, and whose edges are labeled by instances from a label space~$\cY$ such that each internal vertex has different labels on its two outgoing edges. A root-to-leaf path in a mistake tree is a sequence of labeled examples $(x_1,y_1),\dots,(x_d,y_d)$. 
The point $x_i$ is the label of the $i$'th internal vertex in the path, and $y_i$ is the label of its outgoing edge to the next vertex in the path.
We say that a class~$\cH$ \emph{shatters} a mistake tree if every root-to-leaf path is realizable by $\cH$.
The \emph{Littlestone dimension} of~$\cH$, denoted $\LD{\cH}$, is the largest number $d$ such that there exists a complete mistake tree of depth $d$ shattered by~$\cH$.
If $\cH$ shatters arbitrarily deep mistake trees then we write $\LD{\cH}=\infty$.

\paragraph{Threshold Dimension.} The Threshold dimension $\TD{\cH}$ of a binary concept class $\cH \subset \{0,1\}^\cX$ is the maximal number $d$ such that there exist $x_1,\ldots,x_d\in \cX$ and $f_1,\ldots, f_d \in \cH$ such that 
$f_i(x_j)=\1[i\geq j]$ for all $i,j\in [d]$.
If there are such $x_1,\ldots,x_d,f_1,\ldots, f_d$ for every arbitrarily large~$d$ then we write $\TD{\cH}=\infty$.

\paragraph{Differential Privacy.}
We use standard notations from differential privacy literature; for more details see e.g. \cite{DR14, Vadhan17survey}.
    Let $\epsilon,\delta \geq 0$. 
    For two numbers $a$ and $b$, denote $a\overset{\epsilon,\delta}{\approx}b$ if $a\leq e^\epsilon\cdot b +\delta$, and $b\leq e^\epsilon\cdot a +\delta$.

\begin{definition}[Differential Privacy]\label{definition:dp-stable}
    Let $\epsilon,\delta\geq 0$. A learning rule $\cA$ is $(\epsilon, \delta)$-differentially private if for
    every pair of training samples $S, S' \in (\cX\times\cY)^m$ 
    that differ on a single example, and every event~$E\subseteq \cY^\cX$, 
    \[\Pr[\cA(S)\in E]\overset{\epsilon,\delta}{\approx}\Pr[\cA(S')\in E].\]
\end{definition}
Typically, $\epsilon$ is chosen to be a small constant (e.g. $\epsilon \leq 0.1$) and $\delta$ is negligible (i.e.\ ${\delta(m) \leq m^{-\omega(1)}}$).
An hypothesis class $\cH$ is \emph{privately learnable} (abbreviated \emph{DP learnable}) if it is PAC learnable by an algorithm $\cA$ which is $(\epsilon(m),\delta(m))$-differentially private, where $\epsilon(m)=O(1)$ is a numerical constant and $\delta(m)=m^{-\omega(1)}$.

A basic property of differential privacy is that privacy is preserved under post-processing; it enables arbitrary data-independent transformations to differentially private outputs without affecting their privacy guarantees. 

\begin{proposition}[Post-Processing]\label{prop:dp_post_processing}
   Let $\cM:\cZ^m\to\cR$ be any $(\epsilon,\delta)$-differentially private algorithm, and let $f:\cR\to\cR'$ be an arbitrary randomized mapping. Then $f\circ\cM:\cZ^m\to\cR'$ is $(\epsilon,\delta)$-differentially private.
\end{proposition}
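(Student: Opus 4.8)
The plan is to handle deterministic post-processing first and then bootstrap to randomized maps by conditioning on the internal randomness.

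\textbf{Step 1 (deterministic $f$).} First I would assume $f:\cR\to\cR'$ is a measurable deterministic map. Fix neighbouring datasets $S,S'\in\cZ^m$ (differing in one coordinate) and an arbitrary event $E'\subseteq\cR'$. The key observation is that $\{\,f(\cM(S))\in E'\,\}$ is literally the same event as $\{\,\cM(S)\in f^{-1}(E')\,\}$, and $E\coloneqq f^{-1}(E')$ is a legitimate event of $\cR$ by measurability of $f$. Hence
\[\PP{f(\cM(S))\in E'}=\PP{\cM(S)\in E}\overset{\epsilon,\delta}{\approx}\PP{\cM(S')\in E}=\PP{f(\cM(S'))\in E'},\]
where the middle relation is exactly the $(\epsilon,\delta)$-differential privacy of $\cM$ (\Cref{definition:dp-stable}) applied to the event $E$. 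This already proves the claim for deterministic $f$.

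\textbf{Step 2 (randomized $f$).} Next I would represent a general randomized mapping in the form $f(r)=g(r,\rho)$, where $\rho\sim\cP$ is a random seed independent of the dataset and of $\cM$'s internal coins, and $g(\cdot,\rho)$ is deterministic for each fixed $\rho$; this is the standard description of a randomized algorithm as a mixture of deterministic ones. For every fixed $\rho$, Step 1 gives that $g(\cdot,\rho)\circ\cM$ is $(\epsilon,\delta)$-differentially private. Then for any event $E'\subseteq\cR'$, independence of $\rho$ from $\cM(S)$ yields
\[\PP{f(\cM(S))\in E'}=\EEEunder{\rho\sim\cP}{\PP{g(\cM(S),\rho)\in E'}},\]
and likewise for $S'$. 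Applying the per-$\rho$ inequality inside the expectation, together with the elementary fact that $a_\rho\le e^\epsilon b_\rho+\delta$ for all $\rho$ implies $\EE{a_\rho}\le e^\epsilon\EE{b_\rho}+\delta$ (monotonicity and linearity of expectation), gives $\PP{f(\cM(S))\in E'}\overset{\epsilon,\delta}{\approx}\PP{f(\cM(S'))\in E'}$, which is precisely $(\epsilon,\delta)$-differential privacy of $f\circ\cM$.

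\textbf{Main difficulty.} This is a classical fact, so I do not expect a genuine obstacle; the one point that would require care is that \Cref{definition:dp-stable} is phrased in terms of events rather than pointwise likelihood ratios, so one cannot simply ``integrate $f$ against a density-ratio bound'' --- the preimage trick in Step 1 is exactly what circumvents this. The only other technical ingredients are measure-theoretic housekeeping (measurability of $f$ so that $f^{-1}(E')$ is an event, and the mixture representation $f(r)=g(r,\rho)$), which are the standard conventions under which randomized learning rules are defined and carry no real cost.
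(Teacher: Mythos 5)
Your proof is correct. The paper itself states this proposition without proof, treating it as a standard fact from the differential privacy literature (citing the usual references), and your argument — reducing to deterministic post-processing via the preimage event $f^{-1}(E')$ and then handling randomized $f$ as a mixture of deterministic maps with an independent seed, using monotonicity and linearity of expectation to preserve the $\overset{\epsilon,\delta}{\approx}$ relation — is exactly the classical proof one finds there, so there is nothing to reconcile with the paper.
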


\paragraph{Empirical Learners.}
    Let $\cH$ be a class. An algorithm $\cA$ is $(\alpha,\beta)$-accurate empirical learner with sample complexity $m$ if for every realizable sample $S$ of size $m$, 
    \[\Pr_{h\sim\cA(S)}[\loss{S}{h}\geq \alpha]\leq \beta.\]

\citet*{BunNSV15} proved that any private PAC learner can be transformed into a private empirical learner, while the sample complexity is increased only by a multiplicative constant factor. 

\begin{lemma}[Lemma 5.9 in \cite{BunNSV15}]\label{lemma:reducrion_to_empirical_learner}
    Suppose $\cA$ is an $(\epsilon, \delta)$-differentially private $(\alpha, \beta)$-accurate PAC learner for an hypothesis class $\cH$ with sample complexity $m$. Then there is an $(\epsilon, \delta)$-differentially private $(\alpha, \beta)$-accurate empirical learner for $\cH$ with sample complexity $n = 9m$.
\end{lemma}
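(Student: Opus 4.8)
The plan is to prove this by a standard \emph{sub-sampling} reduction. Given the $(\epsilon,\delta)$-private, $(\alpha,\beta)$-accurate PAC learner $\cA$ of sample complexity $m$, I would define a new learner $\cA'$ as follows: on input a realizable sample $S=\bigl((x_1,y_1),\dots,(x_n,y_n)\bigr)$ of size $n=9m$, let $\hat\cD_S$ be the uniform distribution over the $n$ labeled examples of $S$, draw $T$ as $m$ i.i.d.\ samples from $\hat\cD_S$ (equivalently, a uniform size-$m$ sub-sample of $S$ with replacement), and output $\cA(T)$.

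For accuracy, observe that $\hat\cD_S$ is a realizable distribution: if $h^\star\in\cH$ witnesses realizability of $S$, then $\loss{\hat\cD_S}{h^\star}=0$, so $\inf_{h\in\cH}\loss{\hat\cD_S}{h}=0$. Moreover, for every hypothesis $h$ the population loss under $\hat\cD_S$ is exactly the empirical loss on $S$, i.e.\ $\loss{\hat\cD_S}{h}=\loss{S}{h}$. Hence, applying the PAC guarantee of $\cA$ to the realizable distribution $\hat\cD_S$ and the i.i.d.\ sample $T\sim\hat\cD_S^{\,m}$ gives
\[\Pr_{T,\cA}\bigl[\loss{S}{\cA(T)}\ge\alpha\bigr]=\Pr_{T,\cA}\bigl[\loss{\hat\cD_S}{\cA(T)}\ge\alpha\bigr]\le\beta,\]
which is precisely the desired $(\alpha,\beta)$-empirical-accuracy guarantee; this step works regardless of $n$.

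The main obstacle is privacy. The sub-sampling step is \emph{not} a data-independent post-processing of $\cA$, so \Cref{prop:dp_post_processing} does not apply directly; instead one needs \emph{privacy amplification by sub-sampling}. I would argue this by coupling the executions of $\cA'$ on two neighbouring inputs $S\sim S'$ (differing only in the $j$-th example) through a common choice of sampled indices $i_1,\dots,i_m\in[n]$: if index $j$ is never drawn, the two inputs handed to $\cA$ are identical; and conditioned on the remaining draws, if index $j$ is drawn $t\ge1$ times, the two inputs to $\cA$ differ in exactly $t$ coordinates, so the $t$-fold group-privacy guarantee of $\cA$ applies at multiplicative cost $e^{t\epsilon}$ (plus an additive cost in $\delta$). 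Since the number of times index $j$ is drawn is distributed as $\mathrm{Bin}(m,1/n)$ with mean $m/n=\tfrac{1}{9}$, carrying this accounting through—this is exactly the content of the standard sub-sampling lemma, which simultaneously improves $\delta$ to $\tfrac{m}{n}\delta$—shows that $\cA'$ remains $(\epsilon,\delta)$-differentially private once $n\ge 9m$; the constant $9$ is calibrated precisely so that the amplified parameters stay below $(\epsilon,\delta)$. The one delicate point to verify with care is the contribution of an example being sub-sampled two or more times, which is the sole reason the sample size must be inflated from $m$ to $9m$ rather than left unchanged.
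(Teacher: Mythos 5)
The paper does not actually prove this lemma: it imports it verbatim as Lemma~5.9 of \cite{BunNSV15} and only remarks that the original proof carries over to partial concept classes with arbitrary label spaces. Your sub-sampling construction is essentially that original argument — run $\cA$ on $m$ points drawn i.i.d.\ from the uniform distribution $\hat\cD_S$ over the $9m$ input examples, get empirical accuracy from $\loss{\hat\cD_S}{h}=\loss{S}{h}$ together with realizability of $\hat\cD_S$, and get privacy from a ``secrecy of the sample'' analysis — so in approach you match the cited source, and your accuracy half is complete and, importantly, works unchanged for partial concepts and infinite $\cY$, which is exactly what this paper needs. The one place where your write-up is a sketch rather than a proof is the privacy half: the ``standard sub-sampling lemma'' in its usual form is stated for Poisson sampling or sampling \emph{without} replacement, where the differing record appears at most once, and does not literally cover your with-replacement draws. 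The actual content of the lemma is the accounting you gesture at — conditioning on the multiplicity $t\sim\mathrm{Bin}(m,\tfrac{1}{9m})$ of the differing index, invoking $t$-fold group privacy, and then showing that the resulting mixture still satisfies a clean $(\epsilon,\delta)$ guarantee (this also requires some care, since $e^{t\epsilon}$ and the conditional acceptance probability are correlated, and it is where the constant $9$ and a mild smallness condition on $\epsilon$ enter). You correctly flag this as the delicate point, but it is asserted rather than carried out; writing out that calculation (or quoting a with-replacement amplification statement, as \cite{BunNSV15} effectively do) is what would complete the proof.
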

We remark that \cref{lemma:reducrion_to_empirical_learner} was proved in \cite{BunNSV15} within the context of binary (total) classes. However, the proof also applies to partial concept classes with arbitrary label spaces.

\paragraph{Interior Point Problem.}
An algorithm $\cA : [n]^m \to [n]$ solves the interior point problem on $[n]$ with probability $1-\beta$ if for every input sequence $x_1\ldots x_m\in [n]$,
\[\Pr[\min x_i\leq \cA(x_1,\ldots x_m)\leq \max x_i]\geq 1-\beta\]
where the probability is taken over the randomness of $\cA$; the number of data points $m$ is called the sample complexity of $\cA$.

\begin{theorem}[Theorem 3.2 in \cite{BunNSV15}]
Let $0<\epsilon< 1/4$ be a fix number and let $\delta(m)\leq 1/50m^2$. Then for every positive integer~$m$, solving the interior point problem on $[n]$ with probability at least~$3/4$ and with $(\epsilon,\delta(m))$-differential privacy requires sample complexity $m\geq \Omega(\log^\star n)$.
\end{theorem}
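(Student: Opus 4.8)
I would prove the contrapositive by a recursion on the sample complexity, the route originally used to establish this bound. The engine is a single reduction lemma: \emph{if there is an $(\epsilon,\delta)$-differentially private algorithm $\cA$ solving the interior point problem on $[n]$ with sample complexity $m$ and failure probability $\beta$, then there is an $(\epsilon,\delta)$-differentially private algorithm solving the interior point problem on $[n']$ with sample complexity $m-1$ and failure probability $\beta+O(m\delta)$, where $n'=\Omega(\log n)$.} Granting the lemma, one starts from a hypothetical $(\epsilon,\delta)$-DP solver on $[n]$ with the optimal sample complexity $m$ and $\beta_0=1/4$, and applies the lemma $m-1$ times; after $t$ steps the domain has size $n_t$ with $n_{t+1}=\Omega(\log n_t)$, the sample complexity is $m-t$, and the failure probability is $\beta_t\le 1/4+O(tm\delta)$. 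Thus after $m-1$ steps we obtain a one-sample $(\epsilon,\delta)$-DP solver on $[n_{m-1}]$ with failure $\le 1/4+O(m^2\delta)<1/2$, where the last inequality uses precisely the hypothesis $\delta\le 1/(50m^2)$. But a one-sample $(\epsilon,\delta)$-DP algorithm cannot solve the interior point problem on a domain $[K]$ with $K\ge 3$ and failure below $1/2$: on each singleton $\{a\}$ it must return $a$ with probability $\ge 1/2$, while all singletons are neighboring datasets, so summing the privacy inequality $\Pr[\cA(\{a\})=a]\le e^{\epsilon}\Pr[\cA(\{b\})=a]+\delta$ over $a$ against a fixed $b$ forces $K/2\le e^{\epsilon}+K\delta$, which fails for $K\ge 3$ when $\epsilon<1/4$ and $\delta$ is small. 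Hence $n_{m-1}\le 2$, so $n$ is at most a tower of height $O(m)$, i.e.\ $m=\Omega(\log^\star n)$.

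It remains to build the reduction lemma, which is where essentially all the content sits. Identify $[n]$ with the leaves of the complete binary tree of depth $\ell=\lceil\log_2 n\rceil$, so that a dataset over $[n]$ is a set of leaves and the valid interior points of a dataset are exactly the leaves whose length-$j$ prefix agrees with the whole dataset, where $j$ is the length of the longest common prefix occurring in the dataset (equivalently, the depth of the $\LCA$ of the minimum and maximum leaf). The smaller domain is $[\ell]$, whose elements we read as \emph{branching levels}. Given a dataset $(z_1,\dots,z_{m-1})\in[\ell]$, the new algorithm draws a uniformly random seed leaf $r$ and maps each $z_i$, coordinate by coordinate, to a leaf $x_i$ that follows $r$ through level $z_i-1$ and then deviates in a prescribed way; it feeds $x_1,\dots,x_{m-1}$ together with one further, seed-determined leaf as the $m$ inputs to $\cA$, and post-processes $\cA$'s output $p$ into the level $z^{\star}:=1+(\text{length of the longest common prefix of }p\text{ and }r)$. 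The design goals are (i) the longest common prefix of the constructed dataset has length $\min_i z_i-1$, so that $z^{\star}\ge\min_i z_i$ whenever $p$ is a genuine interior point; and (ii) $z^{\star}\le\max_i z_i$ with high probability. Because the embedding is applied to each $z_i$ separately and the decoding is data-independent, \Cref{prop:dp_post_processing} gives that the composition is $(\epsilon,\delta)$-differentially private with no loss in the privacy parameters; one sample is consumed by the seed-dependent structure, and since $\ell=\Theta(\log n)$ the domain shrinks exponentially, as required.

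The main obstacle is engineering the embedding so that (i), (ii), and \emph{exact} coordinate-wise-ness hold simultaneously. Coordinate-wise-ness is non-negotiable: any appeal to group privacy would multiply $\epsilon$ at each of the $\Theta(m)$ recursion levels and destroy the bound, so the $m$-th, seed-derived input must be defined without reference to the individual $z_i$'s. Goal (ii) is the delicate point: an interior point $p$ automatically agrees with $r$ on at least $\min_i z_i-1$ levels, but one must rule out $p$ agreeing with $r$ far below level $\max_i z_i$; the standard device is to arrange that the constructed dataset depends on $r$ only through $r$'s top $\max_i z_i$ levels and then to invoke the \emph{privacy} of $\cA$ (not merely post-processing) to argue that $p$ cannot be correlated with the fresh, unused lower bits of the seed beyond a slack of $O(\delta)$ per data point. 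This is the step that spends privacy budget, produces the $O(m\delta)$ term in the lemma and hence the $O(m^2\delta)$ accumulated over the recursion, and is exactly why the theorem hypothesizes $\delta\le 1/(50m^2)$; the constant $\epsilon<1/4$ is needed only for the one-sample base case. Verifying the boundary cases (all $z_i$ equal, or $\min_i z_i$ near $1$ or $\ell$) and confirming that the $3/4$ success level survives all $m-1$ iterations is the remaining, routine bookkeeping.
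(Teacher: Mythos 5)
First, note that the paper itself does not prove this statement: it is imported verbatim from \citet{BunNSV15} and used as a black box (together with \Cref{lemma:rescaling_ipp}), so your proposal can only be judged as a reconstruction of the original argument. Your skeleton does match the known proof strategy: a one-step reduction that trades one sample for an exponential shrinkage of the domain, iterated down to a one-sample solver, a base case showing a one-sample $(\epsilon,\delta)$-DP algorithm cannot solve the interior point problem on a domain of size $3$ (your singleton/neighboring-datasets computation is correct), and the accounting that an $O(\delta)$-per-element loss accumulated over $\sim m$ levels is exactly what the hypothesis $\delta\leq 1/(50m^2)$ absorbs.

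The genuine gap is the reduction lemma itself, which you correctly identify as carrying essentially all the content but do not prove, and the one concrete mechanism you do sketch for its key utility claim does not work as stated. You decode the output $p$ into $z^{\star}=1+\mathrm{lcp}(p,r)$ and must rule out $z^{\star}>\max_i z_i$, i.e.\ rule out $p$ agreeing with the seed $r$ on its top $\max_i z_i$ levels. But your proposed device — make the dataset depend on $r$ only through those top levels and then use privacy/independence to argue $p$ cannot correlate with the \emph{unused lower} bits of $r$ — only controls agreement \emph{below} level $\max_i z_i$; it says nothing about the forbidden event, which concerns precisely the levels that \emph{are} used to build the dataset. Indeed, for the natural embeddings (``follow $r$ through level $z_i-1$, then deviate''), the dataset's own maximum agrees with $r$ on its top $\max_i z_i$ levels, so an interior point can too, and the claim $z^{\star}\leq\max_i z_i$ is simply false without additional structure: one needs the deviation bits to be arranged so that such agreement forces $p$ outside the interval, or a decode defined via the first disagreement pattern with a shifted target interval whose slack is then absorbed by a rescaling step (as in \Cref{lemma:rescaling_ipp}), and the $e^{\epsilon}$ factor from the replace-one-point privacy argument then enters every level, not only the base case as you assert. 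Until the embedding, the seed-determined $m$-th point, the decode, and this utility analysis are written down and verified (including the per-step non-$\delta$ error terms you defer to ``routine bookkeeping''), the proposal is a plausible plan rather than a proof.
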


\begin{lemma}\label{lemma:rescaling_ipp}
    Let $\cA:[n]^m\to[n]$ be an $(\epsilon,\delta)$-differentially private algorithm, and let $C(n)\leq \log ^2 n$. Assume that for every input sequence $x_1,\ldots,x_m$ such that $min_{i\neq j}|x_i-x_j|\geq C(n)$,
    \[\Pr[\min x_i\leq \cA(x_1,\ldots,x_m)\leq \max x_i]\geq\frac{3}{4}.\]
    Then, $m\geq\Omega(\log^\star n)$.
\end{lemma}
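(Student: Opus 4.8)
The plan is to reduce the ordinary interior point problem on a smaller domain $[n']$, with $n'=\Theta\!\left(n/(m\,C(n))\right)$, to the well-separated interior point problem that $\cA$ is assumed to solve on $[n]$, and then invoke the lower bound for the ordinary interior point problem (stated above as Theorem 3.2 of \cite{BunNSV15}). The reduction uses a coordinate-wise affine embedding carrying a built-in tie-breaking term: given an input $(x_1,\dots,x_m)\in[n']^m$ to the ordinary problem, set $y_i\coloneqq m\,C(n)\cdot x_i+C(n)\cdot(i-1)$, run $\cA$ on $(y_1,\dots,y_m)$ to obtain $z\in[n]$, and output $\hat x\coloneqq\min\{n',\max\{1,\lfloor z/(m\,C(n))\rfloor\}\}$. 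Call this composite algorithm $\cB$.

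I would then verify three properties of the embedding. \emph{(i) Range.} With $n'\coloneqq\lfloor n/(m\,C(n))\rfloor-1$ we get $y_i<m\,C(n)\,(n'+1)\le n$, so $(y_1,\dots,y_m)\in[n]^m$; here I may assume $n'\ge 1$, i.e.\ $n\ge 2m\,C(n)$, since otherwise $m\ge n/(2\log^2 n)=\Omega(\log^\star n)$ and the lemma is immediate. \emph{(ii) Separation.} For $i\ne j$: if $x_i\ne x_j$, the leading term contributes at least $m\,C(n)$ to $|y_i-y_j|$ while the tie-break term perturbs by at most $C(n)(m-1)<m\,C(n)$, so $|y_i-y_j|\ge C(n)$; if $x_i=x_j$ then $|y_i-y_j|=C(n)\,|i-j|\ge C(n)$. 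Hence $(y_1,\dots,y_m)$ satisfies the separation hypothesis, and $\cA$'s guarantee applies. \emph{(iii) Decoding.} Since $\min_i y_i\ge m\,C(n)\min_i x_i$ and $\max_i y_i<m\,C(n)\,(\max_i x_i+1)$, any $z$ with $\min_i y_i\le z\le\max_i y_i$ satisfies $\min_i x_i\le\lfloor z/(m\,C(n))\rfloor\le\max_i x_i$. Combining (i)--(iii), $\cB$ solves the ordinary interior point problem on $[n']$ with probability at least $3/4$.

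For privacy, the map $(x_1,\dots,x_m)\mapsto(y_1,\dots,y_m)$ is coordinate-wise (changing $x_i$ changes only $y_i$), and $\hat x$ is a fixed post-processing of $\cA$'s output; so by \Cref{prop:dp_post_processing}, $\cB$ is $(\epsilon,\delta)$-differentially private with the same parameters as $\cA$, which lie in the regime required by Theorem 3.2 of \cite{BunNSV15}. Applying that theorem to $\cB$ yields $m\ge\Omega(\log^\star n')$.

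Finally I would absorb the loss from $n$ to $n'$. If $m>\log^\star n$ we are already done, so assume $m\le\log^\star n$; then $m\,C(n)\le\log^\star n\cdot\log^2 n\le\log^3 n$ for large $n$, hence $n'\ge n/\log^3 n-1\ge\sqrt n$. Using the standard fact that $\log^\star$ decreases by only an additive constant under a square root (so $\log^\star\sqrt n\ge\log^\star n-O(1)$), we conclude $m\ge\Omega(\log^\star n')\ge\Omega(\log^\star n)$. The two points that require care are exactly these last two: (a) a na\"ive two-step composition would lose a second $\log^\star$ and give only $\Omega(\log^\star\log^\star n)$ --- this is avoided because the domain shrinks by only a polylogarithmic factor, which costs $O(1)$ in $\log^\star$; and (b) the tie-breaking term is genuinely needed, since ordinary interior-point instances may repeat values whereas $\cA$ is promised to work only on separated (hence distinct) inputs --- but it is cheap and coordinate-wise, so it does not disturb privacy.
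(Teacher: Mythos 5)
Your proof is correct and follows essentially the same route as the paper: a rescaling reduction to the standard interior point problem of \citet{BunNSV15}, privacy preserved by coordinate-wise preprocessing plus post-processing, and then the observation that shrinking the domain by a polylogarithmic (here, $m\,C(n)\le\log^3 n$) factor costs only an additive constant in $\log^\star$. Your coordinate-dependent tie-breaking term and the explicit factor-$m$ spacing are a more careful rendering of the paper's one-line ``by rescaling'' (they handle repeated input values, which the paper leaves implicit), and your $\sqrt{n}$-based absorption of the loss replaces the paper's direct tower-function calculation; both are sound.
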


\begin{proof}
    By rescaling, $\cA$ solves the interior point problem on a domain of size~$\frac{n}{C(n)}$, and therefore the sample complexity satisfies $m\geq \Omega\left(\log^\star\left(\frac{n}{C(n)}\right)\right)= \Omega\left(\log^\star\left( \frac{n}{\log^2 n}\right)\right)= \Omega(\log^\star n)$.
    Indeed, the second equality holds since by the definition of the $\log^\star$ function, $\twr_{(t+2)}(1)\geq2^{\frac{n}{\log^2 n}}$, where $t=\log^\star\left(\frac{n}{\log^2 n}\right)$.
    Furthermore, $2^{\frac{n}{\log^2 n}}\geq n$ for large enough $n$.
    Therefore, again by definition, $t+1\geq \log^\star n$ for large enough $n$, which implies that  $\log^\star\left(\frac{n}{\log^2 n}\right)\geq \frac{1}{2}\log^\star n$ for large enough $n$.
\end{proof}

\subsection{Additional Notations}
\label{sec:additional_defs}

\paragraph{Iterated Logarithm and Tower Function.}
    The tower function $\twr_{(t)}(x)$ is defined by the recursion
    \begin{equation*}
        \twr_{(i)}(x)=\begin{cases}
            x & i=1;\\
            2^{\twr_{(i-1)}(x)} & i>1.
        \end{cases}
    \end{equation*}
    The iterated logarithm $\log _{(t)}(x)$ is defined by the recursion
    \begin{equation*}
        \log_{(i)}(x)=\begin{cases}
            x & i=0;\\
            \log({\log_{(i-1)}(x)}) & i>0.
        \end{cases}
    \end{equation*}
    Note that for all $t$, \big($\twr_{(t)}(\cdot)\big)^{-1}=\log_{(t-1)}(\cdot)$. Finally,
    \[\log^\star(x)=\min\{t\mid \log_{(t)}(x)\leq 1\}.\]

\section{A Ramsey Theory for Trees}\label{sec:ramsey_theory_in_trees}

This section contains all the proofs of the Ramsey theorems for Trees by the order of appearance in \Cref{sec:main_results:ramsey}.

\subsection{Pairs: Proof of  \Cref{thm:ramsey_pairs} (Warm up)}\label{sec:ramsey_pairs_proof}
In this section, we prove \Cref{thm:ramsey_pairs}.
In order to prove the existence of a trichromatic subtree, we take the following approach. Instead of considering colorings of all possible pairs, we consider colorings of all pairs of a specific type and prove that, for any such coloring, there exists a \textbf{mono}chromatic subtree. Therefore, given an arbitrary coloring of all pairs, by applying our arguments sequentially we are able to obtain a subtree that is trichromatic and also has the property that each pair is colored according to its type.

Let us call a pair of vertices $\{u,v\}$ an $\mathsf{L}$-pair ($\mathsf{R}$-pair) if one of $u$ or $v$ is a left (right) descendant of the other and a $\mathsf{U}$-pair if $u$ and $v$ are incomparable.

\begin{proposition}[Comparable Pairs]\label{prop:ramsey_pairs_comp}
    Let $d,k\in \bbN$ and $\tau \in \{\mathsf{L},\mathsf{R}\}$ be fixed. Then there exists an $n=n(d,\tau,k)\in\bbN$ such that for every binary tree $T$ of depth $n$ and every coloring of $\tau$-pairs in $T$ with $k$ colors, $T$ admits a monochromatic subtree of depth $d$. Moreover,
    \[n(d,\tau,k)\leq 2^{dk\log (2k)}.\]
\end{proposition}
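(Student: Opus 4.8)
\textbf{Proof plan for \Cref{prop:ramsey_pairs_comp}.}
The plan is to reduce the coloring of $\tau$-pairs to a vertex-coloring of $T$ to which we can apply the pigeonhole principle for trees (\Cref{prop:php}\ref{item:php}). Without loss of generality assume $\tau = \mathsf{L}$, so that every pair we color consists of a vertex and one of its left-descendants; the case $\tau = \mathsf{R}$ is symmetric. The key structural observation is that we can build, inside $T$, a subtree $S^{\circ}$ together with a vertex-coloring $\psi$ of $S^{\circ}$ such that whenever $v \in S^{\circ}$ has color $\psi(v) = c$ and $u \in S^{\circ}$ is a left-descendant of $v$, the original coloring assigns $\chi(\{v,u\}) = c$. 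Once such an $(S^{\circ}, \psi)$ is in hand, any $\psi$-monochromatic subtree of $S^{\circ}$ is automatically $\chi$-monochromatic with respect to $\mathsf{L}$-pairs, and applying \Cref{prop:php}\ref{item:php} to $\psi$ finishes the argument.

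The construction of $(S^{\circ},\psi)$ proceeds recursively from the root, consuming depth of $T$ as we go. Suppose $T$ has depth $n$. Take the root $\rho$ of $T$; inside its \emph{left} subtree $T_\ell$ (of depth $n-1$), define a vertex-coloring by assigning to each vertex $w \in T_\ell$ the color $\chi(\{\rho, w\})$ — this is legal because $\{\rho,w\}$ is an $\mathsf{L}$-pair. By \Cref{prop:php}\ref{item:php}, $T_\ell$ contains a monochromatic subtree $S_\ell$ of depth $\lfloor (n-1)/k \rfloor$, say of color $c_\rho$; set $\psi(\rho) := c_\rho$. Now recurse: the right child of $\rho$ in $S^{\circ}$ is obtained by running the same procedure on the right subtree of $T$, and the left child of $\rho$ in $S^{\circ}$ is obtained by running the same procedure on $S_\ell$ (not on all of $T_\ell$ — we must restrict to $S_\ell$ so that the monochromaticity we just extracted is not destroyed). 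Continuing, the left branch of $S^{\circ}$ loses a factor of roughly $k$ in depth at each step, while the right branch only loses $1$; since depth along the left branch contracts fastest, after $d$ levels of recursion we need the left-most branch to survive, which requires roughly $n \geq k^d \cdot d$. Tracking the recursion carefully and using $\lfloor (n-1)/k\rfloor \geq n/(2k)$ for $n$ large enough yields that $n = 2^{dk\log(2k)}$ suffices to obtain $S^{\circ}$ of depth $d$; then the final application of the pigeonhole principle on $\psi$ — which only costs another factor of $k$ in depth, already absorbed in the bound — produces the monochromatic subtree of depth $d$. (One can fold the last pigeonhole step into the recursion by taking $S^{\circ}$ of depth $dk$ instead.)

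The correctness of the reduction rests on verifying the invariant: if $u,v \in S^{\circ}$ with $u$ a left-descendant of $v$, then $\chi(\{v,u\}) = \psi(v)$. This holds because at the stage when $\psi(v)$ was defined, $v$ played the role of the root, \emph{all} vertices subsequently placed into its left subtree in $S^{\circ}$ were selected from the monochromatic-of-color-$\psi(v)$ subtree $S_\ell$ — and every such vertex $u$ is a left-descendant of $v$ in $T$, with $\{v,u\}$ colored $\psi(v)$ by the defining property of $S_\ell$. Crucially this uses that descendant relations and left/right relations are inherited: a vertex in the left subtree of $v$ is a left-descendant of $v$, and this does not depend on the intermediate pruning. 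The main obstacle is bookkeeping the depth loss precisely — one must confirm that running the construction on the pruned tree $S_\ell$ rather than $T_\ell$ still leaves enough depth, and that the recursion on the two children can be done with the \emph{same} remaining budget (the left child inherits depth $\lfloor(n-1)/k\rfloor - 1$, the right child depth $n - 1$, and we take the min, which is the left one). Setting up the recurrence $f(d) = $ (smallest $n$ that works) and solving $f(d) \le k \cdot f(d-1) + 2$ or so gives $f(d) = O(k^d)$, and then $2^{dk\log(2k)}$ comfortably dominates this.
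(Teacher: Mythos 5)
Your plan is correct and is essentially the paper's own proof: the same recursive construction of a meta vertex-coloring (color each root by the color of the monochromatic subtree extracted from its left subtree via the pigeonhole principle, recurse on that subtree and on the right subtree), the same invariant, and a final application of \Cref{prop:php} with the same depth recursion yielding the $2^{dk\log(2k)}$ bound. No substantive differences to flag.
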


\begin{proposition}[Incomparable Pairs]\label{prop:ramsey_pairs_incomp}
    Let $d,k\in \bbN$. Then there exists an $n=n(d,\mathsf{U},k)\in\bbN$ such that for every binary tree $T$ of depth $n$ and every coloring of $\mathsf{U}$-pairs in $T$ with $k$ colors, $T$ admits a monochromatic subtree of depth $d$. Moreover,
    \[n(d,\mathsf{U},k)\leq \twr_{(3dk+\log^\star(2^{2^{4k}})+1)}(1).\]
\end{proposition}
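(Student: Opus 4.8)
The plan is to follow the strategy sketched in the technical overview: reduce the problem of coloring $\mathsf{U}$-pairs to a vertex-coloring problem via a recursive procedure, using at each step a \emph{bipartite} Ramsey statement for the left and right subtrees. First I would isolate and prove the bipartite lemma: given complete binary trees $L$ and $R$ and a coloring of all pairs $\{\ell,r\}$ with $\ell \in L$, $r \in R$ using $k$ colors, if $L$ and $R$ are deep enough then there are subtrees $L' \subseteq L$ and $R' \subseteq R$ of the required depth such that every pair $\{\ell',r'\}$ with $\ell' \in L'$, $r' \in R'$ gets one fixed color. As outlined, this is proved by two nested applications of the pigeonhole principle for trees (\Cref{prop:php}\ref{item:php}): for each vertex $v \in R$, color each $u \in L$ by the color of $\{u,v\}$, extract a monochromatic subtree $T_v \subseteq L$ of the target depth $d'$ with color $c_v$, then color $v$ by the pair $(T_v, c_v)$; the number of such composite colors is $N_{d'} \cdot k$, where $N_{d'}$ is the number of depth-$d'$ subtrees of $L$, so applying the pigeonhole principle on $R$ with $N_{d'} k$ colors yields the monochromatic $R'$ carrying a common $(L', c)$. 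To make the arithmetic close I would parametrize: to get $L'$ of depth $a$ and $R'$ of depth $b$, take $\mathrm{depth}(L) \ge a k$ and $\mathrm{depth}(R) \ge b \cdot (N_{ak} \cdot k)$, and bound $N_{ak}$ crudely by (number of vertices of $L$)$^{\,2^{ak+1}-1} \le 2^{O(\mathrm{depth}(L) \cdot 2^{ak})}$ — this is where the tower blows up by one level relative to $L$.

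Next I would run the main recursion. Starting from $T$ of depth $n$, look at the root $\rho$, its left subtree $T_{\mathsf L}$ and right subtree $T_{\mathsf R}$; apply the bipartite lemma to $(T_{\mathsf L}, T_{\mathsf R})$ to obtain $L_1 \subseteq T_{\mathsf L}$, $R_1 \subseteq T_{\mathsf R}$, both of depth $n_1$ (to be chosen), such that every $\mathsf U$-pair straddling the two sides has a single color $c_\rho$; set the color of $\rho$ to $c_\rho$, and recurse independently inside $L_1$ (with its own root and left/right subtrees) and inside $R_1$. Since at depth level $i$ of this construction the bipartite step must be applied to trees of depth $n_i$ and produces children of depth $n_{i+1} < n_i$, and the recursion runs for $d$ levels (to build a depth-$d$ skeleton), we get a recurrence $n_i \mapsto n_{i+1}$ that loses one tower level at each step; unrolling $d$ times — and accounting for the extra $\Theta(k)$ pigeonhole factors at the bottom level where we still need $k$ colors of room — gives the stated bound $n \le \twr_{(3dk + \log^\star(2^{2^{4k}}) + 1)}(1)$. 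The output of the recursion is a subtree $S$ of depth $d$ with a vertex-coloring $v \mapsto \chi(v)$ such that for every $\mathsf U$-pair $\{u,v\}$ in $S$ the original color of $\{u,v\}$ equals $\chi(\mathrm{LCA}(u,v))$; then applying \Cref{prop:php}\ref{item:php} once more to this $k$-coloring of the vertices of $S$ extracts a subtree of depth $d' = \lfloor d/k \rfloor$ which is genuinely monochromatic with respect to the original $\mathsf U$-pair coloring. (I would set up the depths so that after this final pigeonhole step the surviving depth is the $d$ requested in the statement; i.e. run the recursion to depth $dk$ first.)

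The main obstacle I expect is the \emph{quantitative bookkeeping of the depths through the double nesting} — one nesting inside the bipartite lemma (which already costs a tower level because $N_{ak}$ is exponential in the depth of $L$, hence a tower of height two in $ak$) and one in the outer $d$-fold recursion — so that the composition of these $d$ one-level-of-tower losses telescopes exactly to the claimed $\twr_{(3dk + \cdots)}(1)$ rather than to something like $\twr_{(2dk)}$ or worse. Getting the constant inside the tower height right (the $3dk$, the $\log^\star(2^{2^{4k}})$ correction coming from the $N \cdot k$ color count at the base, and the trailing $+1$) requires care but no new ideas; the structural content is entirely in the bipartite lemma plus the observation that the recursion's vertex-coloring faithfully records $\mathsf U$-pair colors at lowest common ancestors. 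A secondary point to handle cleanly is that in the recursion, subtrees $L_i$ and $R_i$ are genuine subtrees of $T$ in the sense of \Cref{def:subtree} (so that $\mathrm{LCA}$'s and left/right descendancy are inherited correctly), which follows because the bipartite lemma returns honest subtrees and subtrees-of-subtrees are subtrees.
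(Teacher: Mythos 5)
Your plan is correct and follows essentially the same route as the paper: a bipartite lemma proved by two nested applications of the pigeonhole principle together with a count of bounded-depth subtrees, then a recursion that records the color of all straddling $\mathsf{U}$-pairs at each skeleton vertex (so that pair colors are read off at lowest common ancestors), and a final pigeonhole step after running the recursion to depth $dk$. The only deviations are cosmetic --- you use the plain pigeonhole principle and a crude count of all subtrees where the paper uses the level-aligned version of \Cref{prop:php} with \Cref{lem:counting_subtrees}, and each bipartite step actually costs two-to-three tower levels (not one, as you say in passing), which is precisely why the paper's per-step estimate is $\log_{(3)}$ under the threshold $n\geq 2^{2^{4k}}$; your flagged bookkeeping would resolve exactly this.
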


\Cref{thm:ramsey_pairs} is an immediate corollary of \Cref{prop:ramsey_pairs_comp,prop:ramsey_pairs_incomp}.
Indeed, let $d\in\bbN$ be the depth of the desired trichromatic subtree. Set $n=n(n(n(d,\mathsf{L},k),\mathsf{R},k),\mathsf{U},k)$, then for any coloring of all pairs of a tree of depth $n$, there exists a subtree of depth $d$ that is trichromatic and also type-monochromatic, in the sense that if two pairs are of the same type then they are colored with the same color.
Indeed, the outer-most $n(\cdot,\cdot,\cdot)$ guarantees a $\mathsf{U}$-monochromaric subtree of depth $n(n(d,\mathsf{L},k),\mathsf{R},k)$, then the second $n(\cdot,\cdot,\cdot)$ yields a subtree of depth $n(d,\mathsf{L},k$) that is type-monochromatic with respect to $\{\mathsf{R},\mathsf{U}\}$, and finally the inner-most $n(\cdot,\cdot,\cdot)$ guarantees a subtree of depth $d$ that is type-monochromatic with respect to all types $\{\mathsf{L}, \mathsf{R},\mathsf{U}\}$.

We turn to prove \Cref{prop:ramsey_pairs_comp,prop:ramsey_pairs_incomp}.

\subsubsection{Comparable Pairs: Proof of \Cref{prop:ramsey_pairs_comp}}
\begin{proof}
    Let $T$ be a complete binary tree of depth $n$ large enough, and let $\chi$ be a coloring of comparable pairs of a fixed type $\tau \in \{\mathsf{L},\mathsf{R}\}$ with $k$ colors. We can assume without loss of generality that~$\tau = \mathsf{L}$, i.e.\ we are coloring left pairs. We prove the result by constructing the desired subtree in the following way. First, we introduce a recursive procedure that constructs a subtree of~$T$, denoted as~$T^{\star}$ and define a suitable ``meta-coloring" of the vertices of~$T^{\star}$, which we call $\chi^{\star}$. Then, we apply~\Cref{prop:php} on $T^{\star}$, obtaining a monochromatic subtree $T'$ with respect to $\chi^{\star}$. Thanks to the definition of $\chi^{\star}$, the subtree $T'$ will have the desired property that we seek.
    
    Let $r$ be the root of $T$ and assign to each vertex $v$ in the left subtree of $r$ the color $\chi(\{r,v\})$. By~\Cref{prop:php} there exists a monochromatic subtree $S_0$ of the left subtree of color $c$.
    Let $\chi^{\star}(r) = c$ and call $S_1$ the right subtree of $r$.
    Now, for the next step, we can repeat this procedure on $S_0, S_1$, color their roots, and obtain two new subtrees for each of $S_0, S_1$.
    In general, for each subtree $S$ in the current step, we apply the same argument, color its root, and get the two subtrees~$S_0,S_1$ of $S$ for the next step.
    Let $S$ be any subtree received by this recursive procedure at some point, and let $S_0,S_1$ be the two subtrees received after applying the reduction step on $S$, then the following relations hold:
    \begin{align*}
    \mathtt{depth}(S_0)&\geq \left\lfloor\frac{\mathtt{depth}(S)-1}{k}\right\rfloor,\\ 
    \mathtt{depth}(S_1)&= \mathtt{depth}(S) -1.
    \end{align*}

 Let $t$ be the maximal number of steps this procedure could be repeated, such that all subtrees so far were non-empty.
    Let $T^{\star}$ be the subtree formed by 
    the colored roots of all subtrees $S$ in this process.
    By one last application of the pigeonhole principle on $T^{\star}$ and $\chi^\star$ we obtain a monochromatic subtree $T'$ of depth $d = \lfloor t/k\rfloor$. We argue that $T'$ is monochromatic with respect to the original $\chi$. Indeed, if $\{v,w\}$ is any left pair in $T'$, $\chi(\{v,w\}) = \chi^{\star}(v)$ by our construction and by the definition of $\chi^{\star}$, which implies the desired result.

 It is left to prove a bound over $t = dk$. 
 For convenience, for every time step $i\in\{0,1,\ldots, t\}$ denote the depth of the subtree $S$ by $d_i$. Therefore 
 \begin{equation*}
      \left\{
    	\begin{array}{ll}
    d_{0} &= n,\\
    d_{i+1} &\ge \left\lfloor\frac{d_i -1}{k}\right\rfloor.
    	\end{array}
    \right.
 \end{equation*}
 Note that $\left\lfloor\frac{d_i-1}{k}\right\rfloor\geq \frac{d_i}{2k}$ for every step $i<t$ (otherwise, $\left\lfloor\frac{d_i-1}{k}\right\rfloor=0$ and the procedure must terminate after this step). 
 For the procedure to continue until time $t$ we must have $d_t\geq 0$, and this implies that $\frac{n}{(2k)^t}\geq 1$.  As a consequence,  $n \ge (2k)^{dk} = 2^{dk \log (2k)}$ is sufficient to guarantee that the depth of $S$ is $d$, giving the desired bound on the associated Ramsey number.
\end{proof}

\subsubsection{Incomparable pairs: Proof of \Cref{prop:ramsey_pairs_incomp}}

The technique used in this case is quite different from the one used before, but the main idea stays the same: i.e.\ we will define a suitable coloring of vertices of $T$ via a repeated use of~\Cref{prop:php}. 
The following theorem is the key element in the proof of the result for incomparable pairs.

\begin{theorem}[Bipartite Theorem]\label{theorem:bipartite}
     Let $L$ and $R$ be complete binary trees of depth $n$. Then, for every coloring with~$k$ colors of all pairs of the form $\{\ell,r\}$, where $\ell\in L$ and $r\in R$, there exist subtrees $L'$ of $L$ and $R'$ of $R$ of depth \[d(n,k)\geq\frac{\log_{(2)}(5n\log k)}{15k\log k}\]  such that all pairs $\{\ell',r'\}$, where $\ell'\in L'$ and $r'\in R'$, are colored with the same color.
     Moreover, if $n\geq 2^{2^{4k}}$, then $d(n,k)\geq \log_{(3)}(n+1)$.
\end{theorem}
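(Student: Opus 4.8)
The plan is to implement the interleaved-coloring scheme sketched in \Cref{sec:technical:Ramsey}, using \Cref{prop:php}\ref{item:php} twice: once ``inside'' $L$ for each vertex of $R$, and once ``outside'' on $R$ itself. Fix a parameter $d$ (the target depth of $L'$ and $R'$), to be optimized at the end. Step one: for each vertex $v \in R$, define a vertex-coloring $\chi_v$ of $L$ by $\chi_v(u) = \chi(\{u,v\})$, a coloring with $k$ colors. By \Cref{prop:php}\ref{item:php}, since $L$ has depth $n$, there is a monochromatic (w.r.t.\ $\chi_v$) subtree $T_v \subseteq L$ of depth $\lfloor n/k \rfloor \ge d$, provided $n \ge dk$; let $c_v$ be its color. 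Step two: define a new vertex-coloring $\psi$ of $R$ by $\psi(v) = (T_v, c_v)$, i.e.\ the pair consisting of the chosen depth-$d$ subtree of $L$ and its color. Step three: apply \Cref{prop:php}\ref{item:php} to $R$ (depth $n$) with the coloring $\psi$; this yields a $\psi$-monochromatic subtree $R' \subseteq R$ of depth $\lfloor n / N \rfloor$, where $N$ is the number of colors used by $\psi$, i.e.\ the number of $(\text{subtree of }L\text{ of depth }d) \times (\text{color})$ pairs. All vertices $v \in R'$ share the same value $(L', c)$; set $L'$ to be that common subtree (of depth $d$) and note that every pair $\{\ell', r'\}$ with $\ell' \in L'$, $r' \in R'$ has $\chi(\{\ell',r'\}) = \chi_{r'}(\ell') = c_{r'} = c$, since $\ell' \in L' = T_{r'}$ which is $\chi_{r'}$-monochromatic. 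This proves the structural claim; it remains to verify that the depths work out.

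The quantitative bookkeeping is the crux. I need to count $M(n,d) := $ the number of subtrees of $L$ of depth $d$, so that $N = k \cdot M(n,d)$ and $R'$ has depth $\lfloor n/(k M(n,d))\rfloor$. A crude bound suffices: a subtree of depth $d$ of a complete binary tree of depth $n$ is determined by choosing, recursively, a root vertex and then a depth-$(d-1)$ subtree in each of its two subtrees; iterating, the number of subtrees is at most $n^{2^{d+1}}$ or so — more carefully, one gets a bound like $M(n,d) \le n^{2^{d}}$ after a short induction on $d$ (each of the $2^{d+1}-1$ vertices of the subtree is one of at most $n$ candidates, but the branching structure forces a factor roughly $2^d$ in the exponent). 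Then $N \le k \cdot n^{2^d} \le n^{2^{d+1}}$ for $n$ large, so $R'$ has depth at least $n / n^{2^{d+1}} - 1 = n^{1 - 2^{d+1}} - 1$, which for $d \ge 1$ is less than $1$ — that is too lossy. The fix, as in the proof overview, is to make $L$ much shallower than $R$: I should not take both trees of depth $n$ and extract depth-$d$ subtrees from each, but rather run the argument with $L$ of depth $n_L$ and $R$ of depth $n_R = n$, choosing $d = \lfloor n_L / k\rfloor$ for the $L'$-side and then needing $n_R / (k \cdot n_L^{2^{d}}) \ge d$ on the $R'$-side. Since the statement has $L$ and $R$ both of depth $n$, I will instead fix the target depth $d$ first, demand $n \ge dk$ (so the depth-$d$ subtrees $T_v$ exist), and demand $n \ge d \cdot k \cdot M(n,d)$ with $M(n,d) \le n^{2^d}$ (say); solving $n \ge d k n^{2^d}$ forces $2^d \le 1 - \log(dk)/\log n$, i.e.\ roughly $2^d \lesssim 1$, which is again wrong — so actually the right move is $M$ bounded by a quantity depending on $d$ but \emph{not} growing with $n$... which is false. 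Hence the genuine content: one must take $M(n,d)$ and trade a doubly-logarithmic loss, giving $d \approx \log_{(2)}(n)/(k\log k)$ as in the statement; I will get this by choosing $d$ so that $k \cdot M(n,d) \le \sqrt{n}$ (then $R'$ has depth $\ge \sqrt n \ge d$), and bounding $M(n,d)$ by something like $2^{2^{d} \log n}$, so the constraint becomes $2^d \log n \cdot (\text{const}) \le \tfrac12 \log n$, i.e.\ $2^{d} \le \log n / (\text{const} \cdot \log k)$ up to the $\log k$ from also needing room for the color, yielding $d \ge \log_{(2)}(n) / (O(k \log k)) \ge \log_{(2)}(5n\log k)/(15 k \log k)$ after choosing the constants.

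The main obstacle, then, is purely the quantitative calibration: getting an explicit enough upper bound on the number of depth-$d$ subtrees of a depth-$n$ complete binary tree, and then choosing $d$ as large as possible subject to (i) $n \ge dk$ (existence of $T_v$ via \Cref{prop:php}), and (ii) $\lfloor n / (k \cdot M(n,d)) \rfloor \ge d$ (depth of $R'$), to land on the claimed $d(n,k) \ge \frac{\log_{(2)}(5n\log k)}{15 k \log k}$. For the ``moreover'' clause I would simply specialize: when $n \ge 2^{2^{4k}}$, one checks $\frac{\log_{(2)}(5n\log k)}{15 k \log k} \ge \log_{(3)}(n+1)$ by a direct monotone comparison — roughly, $\log_{(2)}(5n\log k) \ge \log_{(2)}(n)$ and for $n$ in this range $\log_{(2)}(n) \ge 15 k \log k \cdot \log_{(3)}(n+1)$ because $\log_{(3)}(n+1)$ is tiny while $\log_{(2)}(n) \ge 2^{4k}$. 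No step is conceptually hard; the risk is only in the arithmetic of the exponents, so I would carry out the subtree-counting bound carefully and keep the constants loose (the statement's $5$, $15$ leave ample slack) rather than chase optimality.
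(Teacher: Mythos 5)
Your structural scheme (for each $v\in R$ color the vertices of $L$ by $\chi_v(\cdot)=\chi(\{\cdot,v\})$, extract a monochromatic subtree $T_v$, then color $R$ by the pair $(T_v,c_v)$ and apply the pigeonhole again) is exactly the paper's Bipartite Lemma (\Cref{lem:bipartite_antichains}), so the qualitative half of your proposal matches the intended proof. The gap is in the quantitative part, which you yourself call the crux. First, your subtree count is wrong by an exponential: the number of depth-$d$ subtrees of a complete binary tree of depth $n$ is not $\le n^{2^d}$ (equivalently $2^{2^d\log n}$) --- already for $d=0$ it is $2^{n+1}-1\gg n$, and in general it is of order $2^{\Theta(n2^d)}$ (the paper's \Cref{lem:counting_subtrees} gives $\binom{n}{d+1}2^{n2^d}$ even for the more restricted level-aligned subtrees). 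Consequently your final calibration, ``choose $d$ so that $k\cdot M(n,d)\le\sqrt n$,'' is unachievable for any $d\ge 0$ when $M$ counts subtrees of the full-depth $L$, since $M(n,d)\ge 2^n>\sqrt n$; the constraint you derive, $2^d\lesssim \log n/(k\log k)$, only comes out because of the false count. Second, you actually touch the correct fix (``make $L$ much shallower than $R$'') but then discard it on the grounds that the statement takes both trees of depth $n$. Nothing forbids passing to a shallower subtree: the paper first fixes $\tilde L\subseteq L$ of depth $\tilde d=\tfrac13\log_{(2)}(5n\log k)$ and runs your two-step pigeonhole on $\tilde L$ and $R$; then the number of candidate subtrees of $\tilde L$ is $\alpha(\tilde d,k)\approx 2^{\tilde d\,2^{\tilde d/(5k\log k)}}\le 2^{2^{3\tilde d}}=\mathrm{poly}(n)$, so $R'$ keeps depth $n/(k\alpha(\tilde d,k))\ge \tilde d/(5k\log k)$, matching $L'$, and any subtree of $\tilde L$ is of course a subtree of $L$. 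This two-scale move is the essential content you are missing; with the full-depth $L$ the argument cannot be made to close.

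A smaller but real problem is the ``moreover'' clause. It does not follow from the displayed bound by direct comparison: $n\ge 2^{2^{4k}}$ only gives $\log_{(2)}n\ge 4k$ (not $2^{4k}$, as you assert), and at the boundary $n=2^{2^{4k}}$ the main bound yields roughly $4/(15\log k)<1$ while $\log_{(3)}(n+1)\approx\log(4k)\ge 2$. The paper obtains the second bound by re-running the same argument with a different choice of the intermediate depth, namely $\tilde d=5k\log k\cdot\log_{(3)}(n+1)$, and verifying that the depth of $R'$ still dominates; you would need to do the same rather than a ``monotone comparison'' of the two expressions.
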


We begin with proving \cref{prop:ramsey_pairs_incomp}, the result for incomparable pairs, assuming \cref{theorem:bipartite}. 
First, observe that any incomparable pair $\{u,v\}$ has a lowest common ancestor $\LCA(u,v)$ that is distinct from both of them such that one of $u,v$ is a left descendant of $\LCA(u,v)$ and the other is a right descendant of $\LCA(u,v)$.
In the course of the proof, we will define a new coloring $\chi^{\star}$ with $k$ colors over internal vertices $w$, based on the coloring of pairs $\{u,v\}$ such that $\LCA(u,v) = w$. The new coloring $\chi^{\star}$ is defined in a recursive procedure where,  as mentioned above, the key component we use is \cref{theorem:bipartite}.

\begin{proof}[Proof of \Cref{prop:ramsey_pairs_incomp}]

Let $\chi$ be a coloring of incomparable pairs with $k$ colors.
In the course of the proof we define recursively a subtree $T^\star$ of $T$ of depth $dk$, and define a vertex coloring $\chi^\star$ of $T^\star$ with the following property: for every  incomparable pair $\{u,v\}$ in $T^\star$,
\[\chi(\{u,v\})=\chi^\star(\LCA(u,v)),\]
where $\LCA(u,v)$ is the lowest-common-ancestor of $u$ and $v$ in $T^\star$. By applying the pigeonhole principle on $T^\star$, we obtain a $\chi^\star$-monochromatic subtree of depth $d$,
and by the construction and the above property, this subtree is also monochromatic with respect to the original pairs-coloring~$\chi$, as desired.

The main building block used in the inductive procedure is \Cref{theorem:bipartite}. Throughout the proof we denote the vertices of $T^\star$ by $u_\emptyset,u_0,u_1,u_{00},u_{01},\ldots,u_{1^{dk}}$,where for a string $\sigma\in\{0,1\}^i$, $i\in\{0,\ldots, dk-1\}$, $u_{\sigma 0}$ is the left descendant of $u_\sigma$, and $u_{\sigma 1}$ is the right descendant of $u_\sigma$.

\medskip

Define $S_\emptyset=T$ and define $u_\emptyset$ to be the root of $S_\emptyset$. Denote $d_0=n$ the depth of $S_\emptyset$.
Denote by $T_0$ and $T_1$ the left and right subtrees of $S_\emptyset$, respectively.
By applying \Cref{theorem:bipartite}, we obtain subtrees~$S_0$ of $T_0$ and $S_1$ of $T_1$, of depth 
\[d_1=d(d_0-1,k),\]
such that all of the pairs $\{u,v\}$ with $u\in S_0$ and $v\in S_1$ are colored with the same color. We define $\chi^\star(u_\emptyset)$ to be exactly that color. 

We repeat this procedure inductively over $S_0$ and $S_1$ separately.
Assume that $S_\sigma$ is defined where $\sigma\in\{0,1\}^i$ for $i\geq 0$, and $u_\sigma$ is the root of $S_\sigma$.
Let $T_{\sigma0}$ and $T_{\sigma1}$ be the left and right subtrees of $S_\sigma$. By applying \Cref{theorem:bipartite} on $T_{\sigma0}$ and $T_{\sigma1}$, we obtain subtrees $S_{\sigma0}$, $S_{\sigma1}$ of depth 
\[d_{i+1}=d(d_i-1,k),\]
such that all incomparable pair $\{u,v\}$ with $u\in S_{\sigma0}$ and $v\in S_{\sigma1}$ are colored with the same color. Define $\chi^\star(u_\sigma)$ to be exactly that color.

\medskip

Assume that $n$ is sufficiently large so this procedure can be continued $t=dk$ steps, and for every step~$i\geq 1$, 
\[d_i=d(d_{i-1}-1,k)\geq \log_{(3)}(d_{i-1}),\] 
where the value of $n$ will be specified later on. 
By applying this relation $t$ times, we obtain 
\[d_t\geq \log_{(3t)}(n).\]
To ensure that the procedure can be continued $t$ steps, by \cref{theorem:bipartite} it suffices to require $\log_{(3t)}(n)\geq 2^{2^{4k}}=:n_0$. Equivalently $3t=3dk\leq \log^\star(n)-\log^\star(n_0)$.
To conclude, by reversing the relation, there exists a monochromatic subtree of depth $d$ if 
\[n\geq \twr_{(3dk+\log^\star(n_0) +1)}(1).\]

\end{proof}

We now turn to prove \cref{theorem:bipartite}. We begin with proving a slightly more general version of \cref{theorem:bipartite}, where the depths of $L$ and $R$ may differ.

\begin{lemma}[Bipartite Lemma]\label{lem:bipartite_antichains}
    Let $L$ and $R$ be trees of depth $d_L$ and $d_R$, respectively. Then, if $\chi$ is a coloring of all pairs of the form $\{\ell,r\},\ell\in L,r\in R$ with~$k$ colors, then there exist subtrees~$L'$ of~$L$ and~$R'$ of~$R$ such that (i) all pairs $\{\ell,r\},\ell\in L',r\in R'$ have the same color, (ii) the depth of $L'$ is at least $\frac{d_L}{5k\log k}$, and the depth of $R'$ is at least $\frac{d_R}{k\alpha(d_L,k)}$, where $\alpha(d,k) = \binom{d}{1+d/5k\log k} \cdot 2^{d2^{d/5k\log k}}$.
\end{lemma}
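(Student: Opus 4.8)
The plan is to prove the Bipartite Lemma by an interleaved double application of the pigeonhole principle for trees (\Cref{prop:php}\ref{item:php}), exactly along the lines sketched in \Cref{sec:technical:Ramsey}. The two asymmetric depth bounds in the statement reflect an asymmetry in how the two trees are processed: $R$ is processed first with a very rich palette (so it shrinks a lot), and $L$ is processed second with only $k$ colors (so it shrinks only by a factor $5k\log k$).

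First I would fix, for every vertex $v\in R$, the induced coloring $\chi_v$ of the vertices of $L$ given by $\chi_v(u)=\chi(\{u,v\})$. Applying \Cref{prop:php}\ref{item:php} to $L$ with this $k$-coloring produces a monochromatic subtree $T_v\subseteq L$ of depth $d_L':=\lfloor d_L/k\rfloor\ge d_L/(2k)$; actually, to land the claimed bound $d_L/(5k\log k)$ I would instead use a level-aligned subtree via \Cref{prop:php}(ii), or just take the cruder $d_L/k$ and absorb constants — the statement only needs ``at least $d_L/5k\log k$'', which is weaker than $d_L/k$, so \ref{item:php} already suffices. Write $c_v\in[k]$ for the color of $T_v$. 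Now color each vertex $v\in R$ by the pair $(T_v,c_v)$. The number of distinct values this meta-coloring can take is at most $\alpha(d_L,k)=\binom{d_L}{1+d_L/5k\log k}\cdot 2^{d_L 2^{d_L/5k\log k}}$: the factor $\binom{d_L}{\le 1+d_L'}$ counts the ways to choose which levels of $L$ the subtree $T_v$ occupies (a subtree of depth $d_L'$ lives on $d_L'+1$ of the $d_L+1$ levels), and $2^{d_L 2^{d_L'}}$ overcounts the embedding of a depth-$d_L'$ tree (which has $2^{d_L'+1}-1$ vertices, each assigned a level-position) together with the color $c_v\in[k]\le 2$; I would just verify $\alpha$ as stated is a valid upper bound without optimizing.

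Next I would apply \Cref{prop:php}\ref{item:php} a second time, to $R$ with this meta-coloring of $k\cdot\alpha(d_L,k)$ colors, obtaining a monochromatic subtree $R'\subseteq R$ of depth at least $d_R/(k\alpha(d_L,k))$, all of whose vertices share the same value $(L',c)$; here $L':=T_v$ for any $v\in R'$ and $c:=c_v$. Then $L'$ is a subtree of $L$ of depth $d_L'\ge d_L/(5k\log k)$, and by construction every pair $\{\ell,r\}$ with $\ell\in L'$, $r\in R'$ satisfies $\chi(\{\ell,r\})=\chi_r(\ell)=c_r=c$, since $\ell\in T_r=L'$ which is $\chi_r$-monochromatic of color $c_r=c$. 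This gives conclusions (i)--(iii), modulo checking the two depth inequalities, which are just the pigeonhole outputs with the stated (generous) constants.

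The main obstacle — really the only nontrivial point — is getting the counting bound $\alpha(d_L,k)$ on the number of meta-colors right and consistent with the denominator appearing in the depth bound for $R'$: one must be careful that ``a subtree of $L$ of depth $d_L'$'' is specified by finitely many bits, and $\binom{d_L}{1+d_L'}\cdot 2^{d_L 2^{d_L'}}$ is a legitimate (if wasteful) such count, with $d_L'=\lfloor d_L/k\rfloor$ comfortably at least $d_L/5k\log k$ so that the exponent $2^{d_L'}$ in $\alpha$ matches the exponent $2^{d_L/5k\log k}$ written in the statement after monotonicity. Everything else is a direct double pigeonhole; the theorem-level statement \Cref{theorem:bipartite} with equal depths $n$ is then obtained by setting $d_L$ to a small value (polylogarithmic in $n$) chosen so that $d_R/(k\alpha(d_L,k))\ge d_L/(5k\log k)$, and unwinding the iterated logarithms — but that balancing is deferred to the proof of \Cref{theorem:bipartite} itself and is not needed here.
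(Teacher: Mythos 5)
Your architecture is the same as the paper's: for each $r\in R$ pigeonhole the induced $k$-coloring $\chi_r$ on $L$ to get a monochromatic subtree $T_r$ with color $c_r$, meta-color each $r\in R$ by the pair $(T_r,c_r)$, and pigeonhole $R$ with this meta-palette. The genuine problem is in how you instantiate the first pigeonhole and count the meta-colors. Your preferred shortcut --- take $T_r$ of depth $d_L'=\lfloor d_L/k\rfloor$ via \Cref{item:php} and ``absorb constants'' --- fails, because the depth of $T_r$ enters the meta-color count doubly exponentially and the monotonicity runs the wrong way: deeper $T_r$'s means many more candidate subtrees, hence more meta-colors, hence a \emph{weaker} depth guarantee for $R'$. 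Concretely, already the level-aligned subtrees of $L$ of depth $\lfloor d_L/k\rfloor$ number at least $2^{\Omega(2^{d_L/k})}$, which for large $d_L$ dwarfs $\alpha(d_L,k)=\binom{d_L}{1+d_L/5k\log k}\cdot 2^{d_L 2^{d_L/5k\log k}}$; so the second pigeonhole only gives $R'$ of depth $d_R$ divided by that much larger quantity, and conclusion (ii) of the lemma is not established. Moreover, your description of a candidate $T_r$ as ``living on $d_L'+1$ of the levels'' is false for the subtrees produced by \Cref{item:php}, which need not be level-aligned. This is exactly why the paper runs the first pigeonhole with the level-aligned version \Cref{prop:php}(ii) at target depth exactly $d_L/(5k\log k)$: the $5\log k$ loss in the depth of $L'$ is the price paid so that $T_r$ ranges over level-aligned subtrees of that specific depth, whose number is bounded by $\alpha(d_L,k)$.

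Even along your alternative (level-aligned) route, the counting you sketch is too lossy to land the stated $\alpha$: assigning to each of the $2^{d_L'+1}-1$ vertices a position within its level costs up to $d_L$ bits per vertex, i.e.\ roughly $2^{2d_L 2^{d_L'}}$, which exceeds the factor $2^{d_L 2^{d_L'}}$ in $\alpha$ and again degrades the bound on the depth of $R'$. The paper's \Cref{lem:counting_subtrees} obtains the tighter count by choosing the root position and then, level by level, the pair of children of each vertex (at most $2^{2(a_{i+1}-1)}$ choices per vertex), which telescopes to $2^{n2^{d}}$. So to prove the lemma with $\alpha$ as stated, you should commit to \Cref{prop:php}(ii) with depth $d_L/(5k\log k)$ and invoke (or reprove) \Cref{lem:counting_subtrees}; with those two fixes your argument coincides with the paper's proof.
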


Before proving \cref{lem:bipartite_antichains}, however, we need the following technical lemma, which shows an upper bound to the number of level-aligned subtrees of a fixed given size.
\begin{lemma}\label{lem:counting_subtrees}
Let $T$ be a complete binary tree of depth $n$. Then the number of its level-aligned subtrees of depth $d$ is upper bounded by $\binom{n}{d+1} \cdot 2^{n2^d}$.
\end{lemma}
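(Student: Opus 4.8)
\textbf{Proof proposal for Lemma~\ref{lem:counting_subtrees}.} The plan is to specify a level-aligned subtree of depth $d$ by two independent pieces of data: (1) the set of levels of $T$ occupied by the vertices of the subtree, and (2) the combinatorial "routing data" that tells us, level by level, how the chosen vertices branch. For (1), note that a level-aligned subtree of depth $d$ has vertices lying on exactly $d+1$ distinct levels of $T$ (one level per depth $0,1,\dots,d$ of the subtree, and level-alignment forces all subtree-vertices at a given subtree-depth to share the same $T$-level). Since $T$ has $n+1$ levels, the number of choices for this level set is at most $\binom{n+1}{d+1}$; I would absorb the off-by-one and just bound it crudely by $\binom{n}{d+1}$ (or, if that is not literally true for small $n$, state the clean bound $\binom{n+1}{d+1}$ and note it suffices for the application — but the cleanest route is to observe that once the top level $\ell_0$ and the depth are fixed, a level-aligned subtree is determined by a function from each subtree-level to a $T$-level, and the root can start at any of $\le n+1 \le$-many levels; I'll pick whichever phrasing makes $\binom{n}{d+1}$ come out, likely bounding the number of level-sets by $\binom{n}{d+1}$ via $\binom{n+1}{d+1} = \binom{n}{d+1}+\binom{n}{d} \le 2\binom{n}{d+1}$ and folding the factor $2$ into the $2^{n2^d}$ term).

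\textbf{Bounding the routing data.} Having fixed the level set $\ell_0 < \ell_1 < \dots < \ell_d$, I would build the subtree top-down. At subtree-depth $i$ there are exactly $2^i$ vertices; each one sits at $T$-level $\ell_i$, and each has two children at $T$-level $\ell_{i+1}$. A child at level $\ell_{i+1}$ that descends from a fixed vertex at level $\ell_i$ is determined by a binary string of length $\ell_{i+1} - \ell_i$ (the sequence of left/right turns in $T$). So the child-placement for one parent costs at most $2^{\ell_{i+1}-\ell_i} \le 2^n$ choices, and summing the $2^i$ parents over a level and then over levels $i = 0,\dots,d-1$ gives at most $\prod_{i=0}^{d-1}\big(2^n\big)^{2^i} = 2^{n\sum_{i=0}^{d-1}2^i} = 2^{n(2^d-1)} \le 2^{n2^d}$ choices for all the routing data; one also needs to choose the single vertex at level $\ell_0$ that is the root, costing a further factor $\le 2^{\ell_0} \le 2^n$, which is comfortably absorbed into $2^{n2^d}$. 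Multiplying the two contributions yields the claimed bound $\binom{n}{d+1}\cdot 2^{n2^d}$.

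\textbf{Main obstacle.} The only real subtlety is bookkeeping the off-by-ones (the subtree occupies $d+1$ levels, not $d$; the root level also has to be chosen; $T$ has $n+1$ levels, not $n$), and making sure the crude bound $2^{n2^d}$ genuinely swallows the stray factors of $2^n$ coming from the root choice and from replacing $\binom{n+1}{d+1}$ by $\binom{n}{d+1}$. None of this is deep; I would just be careful to phrase the counting as "an injection from the set of level-aligned depth-$d$ subtrees into (level sets) $\times$ (root positions) $\times$ (turn-strings)" so that the product bound is manifestly valid, and then note $\sum_{i\ge 0}^{d-1} 2^i = 2^d - 1 < 2^d$ to close. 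An alternative, essentially equivalent, presentation is to induct on $d$: a depth-$d$ level-aligned subtree is a root together with two depth-$(d-1)$ level-aligned subtrees sharing their level pattern; this gives a clean recursion but the direct injection argument above is shorter to write down and makes the binomial factor transparent.
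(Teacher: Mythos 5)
Your overall skeleton (specify a level-aligned subtree by its set of $T$-levels times the "routing" turn-strings, then multiply the two counts) is the same as the paper's, and your handling of the $\binom{n+1}{d+1}$ versus $\binom{n}{d+1}$ off-by-one is reasonable (the paper itself is slightly loose there). However, the routing count contains a genuine error. A parent at $T$-level $\ell_i$ must be assigned \emph{two} children at level $\ell_{i+1}$: its left child (a descendant through its left edge) and its right child (a descendant through its right edge). Writing $a_{i+1}=\ell_{i+1}-\ell_i$, after the forced first turn each child still needs $a_{i+1}-1$ turns, so the per-parent cost is $2^{2(a_{i+1}-1)}$, not the $2^{a_{i+1}}$ you wrote, which only encodes a single child and therefore does not determine the subtree.

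This is not merely a bookkeeping slip, because your aggregation strategy relies on it. Once the per-parent cost is corrected, your crude uniform bound "at most $2^{n}$ choices per parent" becomes "at most $2^{2n}$ per parent", and the product over the $2^i$ parents at levels $i=0,\dots,d-1$ gives $2^{2n(2^d-1)}$, which exceeds the target $2^{n2^d}$ for every $d\geq 2$. The point is that you cannot afford to bound each gap $a_{i+1}$ by $n$ independently; you must use the constraint that the gaps telescope, $\ell_0+\sum_{i=1}^{d}a_i\leq n$. This is exactly what the paper does: the routing exponent is $\sum_{i=1}^{d}2^{i}(a_i-1)\leq 2^{d}\sum_{i=1}^{d}(a_i-1)\leq 2^{d}(n-d-\ell_0)$, and together with the $2^{\ell_0}$ root factor this gives $\ell_0+2^{d}(n-d-\ell_0)\leq n2^{d}$. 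With that replacement your argument closes; as written, it does not.
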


The proof of \Cref{lem:counting_subtrees} is a technical calculation and deferred to \Cref{app:additional_proofs:counting_subtrees}.

\begin{proof}[Proof of \cref{lem:bipartite_antichains}]
    For each $r \in R$ we can define a coloring $\chi_r$ of the vertices in $L$ in the following way: $\chi_r(\ell) = \chi(\{\ell,r\})$, i.e.\ $\ell$ is colored as the edge connecting it to $r$. By using~\Cref{prop:php}, since the number of colors is $k$, we obtain the existence of a monochromatic level-aligned subtree $T_r$ of depth $d_L/5k \log k$ for each one of these colorings.
    By~\Cref{lem:counting_subtrees}, the possible number of such level-aligned subtrees in $L$ is $\alpha(d_L,k) = \binom{d_L}{1+d_L/5k\log k} \cdot 2^{d_L 2^{d_L/5k\log k}}$.
    If we consider as colors all possible couples $(S, j)$, where $S$ is a subtree of depth $d_L/5k \log k$ of $L$ and $j \in [k]$, we can define a new coloring over the vertices of $R$ as follows: $r$ is colored $(S, j)$ where $S=T_r$ and $j=\chi_v(\ell)$ is the color of any $\ell \in T_r$, which is well defined because $T_r$ is monochromatic. By using again the pigeonhole principle (in its weaker version) we obtain a monochromatic (weakly embedded) subtree $R'$ of depth at least
    $d_R/k\alpha(d_L,k)$. Let $(L', j)$ be the color assigned to $R'$. The pairs between $L'$ and $R'$ are all colored $j$ by definition of the coloring and by the properties of $L'$, which implies the result.
\end{proof}

Finally, we are ready to formally prove \cref{theorem:bipartite}.

\begin{proof}[Proof of \cref{theorem:bipartite}]
    Let $L$ and $R$ be complete binary trees of depth $n$.
    Denote by $\tilde L$ a subtree of~$L$ of depth $\tilde d$. Apply \cref{lem:bipartite_antichains} on $\tilde L$ and $R$, we obtain a subtree $L'$ of depth 
    \[\frac{\tilde d}{5k\log k}\] 
    and a subtree $R'$ of depth 
    \[\frac{n}{k\alpha(\tilde d,k)},\] 
    where $\alpha(d,k)= \binom{d}{1+d/5k\log k} \cdot 2^{d2^{d/5k\log k}}$.
    Note that if we set $\tilde d =\frac{\log_{(2)}(5n\log k)}{3}$, then the depth of $R'$ is at least the depth of~$L'$. Indeed,
    \begin{align*}
        \mathtt{depth}(R')=\frac{n}{k\alpha(\tilde d,k)}&\geq \frac{n}{ k2^{\tilde d(2^{\tilde d/5k\log k}+1)}} \tag{by the definition of $\alpha(d,k)$}\\
        &\geq\frac{n\tilde d}{k2^{2^{3\tilde d}}} \tag{$\forall d\geq 1$, $d2^{d(2^{ d/5k\log k}+1)}\leq 2^{ d(2^{ d}+2)}\leq 2^{2^{3 d}}$}\\
        &=\frac{\tilde d}{5k\log k} \tag{by the definition of $\tilde d$}\\
        &=\frac{\log_{(2)}(5n\log k)}{15k\log k}=\mathtt{depth}(L').
     \end{align*}

Now, assume that $n\geq 2^{2^{4k}}$ and set $\tilde d=\log_{(3)}(n+1)\cdot 5k\log k$.
    Then, the depth of $R'$ is still at least the depth of $L'$, which is exactly $\log_{(3)}(n+1)$, as wanted. Indeed,
    \begin{align*}
        \mathtt{depth}(R')\geq\frac{n\tilde d}{k2^{2^{3\tilde d}}}\geq\frac{\tilde d}{5k\log k}=\log_{(3)}(n+1)=\mathtt{depth}(L')
    \end{align*}
    if and only if
    \begin{align*}
         n\geq\frac{2^{2^{3\tilde d}}}{5\log k}=\frac{\log (n+1)^{15k\log k}}{5\log k},
    \end{align*}
    which holds since $n\geq 2^{2^{4k}}$.

\end{proof}

\subsection{General $m$-Subsets: Proof of \Cref{thm:ramsey_general_finite}}\label{sec:ramsey_general_proof}

In this section, we prove \Cref{thm:ramsey_general_finite}. 
The proof is based on the same approach as in the proof of \Cref{thm:ramsey_pairs} for the case of pairs. That is, we consider colorings of $m$-subsets of a specific type and prove that, for any such coloring, there exists a monochromatic subtree. Therefore, given an arbitrary coloring of all $m$-subsets, by applying our argument on each type sequentially we obtain a subtree that is colored with at most $\tau(m)$ colors and is also type-monochromatic, in the sense that $m$-subsets of the same type are colored with the same color.
Recall that a pair of vertices $\{u,v\}$ is an $\mathsf{L}$-pair ($\mathsf{R}$-pair) if one between $u$ and $v$ is a left (right) descendant of the other.
Recall that types of $m$-subsets in a tree~$T$ are defined by the following equivalence relation: 
two sets of vertices $A_1, A_2$ in $T$ are equivalent if there is a bijection $\varphi$ from $\bar{A_1}$ to $\bar{A_2}$ such that $\varphi(A_1)=A_2$ and for each $u,v \in \bar{A_1}$,
    \begin{enumerate}
    \centering
        \item $\{u,v\}$ is an $\mathsf{L}$-pair $\Leftrightarrow$ $\{\varphi(u),\varphi(v)\}$ is an $\mathsf{L}$-pair,
        \item $\{u,v\}$ is an $\mathsf{R}$-pair $\Leftrightarrow$ $\{\varphi(u),\varphi(v)\}$ is an $\mathsf{R}$-pair,
    \end{enumerate}
where $\bar{A}$ is the closure of $A$, i.e.\ the minimal subset of vertices containing~$A$ such that for every pair of vertices in $\bar{A}$, their $\LCA$ is also in $\bar{A}$.
The \emph{type} of an $m$-subset is its equivalence class.
Note that by definition all subsets of the same type have the same size.
Consequently, the \emph{size} of a type~$\tau$, denoted by $\lvert \tau \rvert$, is the size of any set of this type.
Observe that any chain is closed (and hence the closure of a chain is itself).
Antichains are not closed and if $m>2$ there exist $m$-antichains that are not equivalent.
(See \Cref{fig:antichains}.)

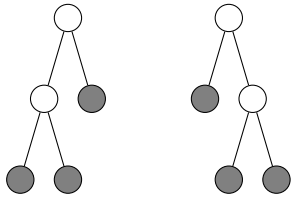
\begin{figure}[thb]
\centering
\begin{adjustbox}{valign=t}
\begin{forest}
rounded/.style={circle, draw, fill opacity=0.5}
    [{}, for tree=rounded
      [{}, 
        [{},fill] 
        [{},fill]
      ]
      [{}, fill]
    ]
\end{forest}\hspace{1cm}
\end{adjustbox}
\begin{adjustbox}{valign=t}
\begin{forest}
rounded/.style={circle, draw, fill opacity=0.5}
    [{}, for tree=rounded
      [{}, fill]
      [{},
        [{},fill] 
        [{},fill]
      ]
    ]
\end{forest}\hspace{1cm}
\end{adjustbox}
\caption{Two $3$-antichains with different closures, hence not equivalent. In general, the number of different closures of $m$-antichains for specific value of $m$, is exactly the number of full binary trees with $m$ leaves, which is known to be the Catalan number $C_{m-1}$.}
\label{fig:antichains}
\end{figure}

\paragraph{Tree-Definable Partitions.} Let $A$ be a set of $m$ vertices in a tree. The $\LCA$ of $A$ induces a partition of the vertices in $A$:
in the case where the $\LCA$ is not in $A$, the partition is into two subsets, one $A_1$
of all vertices in the left subtree of the $\LCA$, and one $A_2$ of all vertices in the right subtree of the $\LCA$.
In the case where the $\LCA$ is in the set $A$, the partition is into three subsets, one $A_1$
of all vertices in the left subtree of the $\LCA$, one $A_2$ of all vertices in the right subtree of the $\LCA$, and one that contains only the $\LCA$.
Observe that any two sets of vertices of the same type admit the same partition in the following sense: let $A\simeq B$ be two subsets of the same type; let $A_1\subseteq A$ be the set of vertices to the left of the $\LCA$ of $A$ and let $B_1\subseteq B$ be the set of vertices to the left of the $\LCA$ of $B$, similarly, let $A_2\subseteq A, B_2\subseteq B$ be the set of vertices to the right of the respective $\LCA$. Then $A_1 \simeq B_1$ and $A_2 \simeq B_2$.
This property is exploited in the proof for the general version of Ramsey theorem for trees in~\Cref{thm:ramsey_general_finite}.
Also, observe that one side in the partition may be empty, but in any case, the sizes of $A_1, A_2$ are both strictly less than $m$. 
This fact is another important ingredient in the proof since it allows us to apply the inductive step on the size of the sets~$m$.

\medskip

The proof uses induction on the size $m$ of the sets, while the induction step follows the same idea as that in the proof of~\Cref{prop:ramsey_pairs_incomp} for the case of incomparable pairs.
In particular, it relies on a version of the bipartite Lemma for sets with size larger than $2$.

\begin{proof}[Proof of \Cref{thm:ramsey_general_finite}]
    The proof is based on the relationships between three types of Ramsey numbers. 
    We start by defining these numbers, then establish the connections between them and finally show their finiteness all together by double induction on the size $m$ of colored sets and on the desired depth $d$ of the subtree.
    
    \paragraph{Ramsey Number.}
    Denote by $\mathsf{R}(d,\tau,k)$ the minimal number $n$ such that for any coloring of all subsets of type $\tau$ in a complete binary tree of depth $n$ with $k$ colors, there exists a monochromatic subtree of depth $d$. 
    \paragraph{Bipartite Ramsey Number.}
    Denote by $\mathsf{R}_b(d, \tau_1, \tau_2, k)$ the minimal number $n$ such that for any complete binary trees $T_1, T_2$ of depth $n$, and any coloring of all pairs $(A_1,A_2)$ where $A_1$ is a subset of type $\tau_1$ in $T_1$ and $A_2$ is a subset of type $\tau_2$ in $T_2$, with $k$ colors, there exist two subtrees $S_1$ of $T_1$ and $S_2$ of $T_2$, of depth $d$ such that all pairs $(A_1,A_2)$ as described above such that $A_1$ in $S_1$ and $A_2$ in $S_2$ are of the same color.
    This number captures a generalization of the bipartite Lemma (\Cref{lem:bipartite_antichains}).
    \paragraph{Local Ramsey Number.}
    Denote by $\mathsf{R_\ell}(d,\tau,k)$ the minimal number $n$ such that for any complete binary tree of depth $n$, and any coloring of all of its subsets of type $\tau$ with $k$ colors, there is a subtree of depth $d$ with the following local Ramsey property: for any vertex $v$ in the subtree, all of the subsets of type $\tau$ in the subtree having $v$ as their $\LCA$, are colored by the same color.

    To simplify the proof we will use the following upper bounds for the above numbers: \\
    \begin{align*}
        \mathsf{R}(d,m,k) &\coloneqq \underset{\lvert \tau \rvert \leq m}{\mathsf{max}} \mathsf{R}(d,\tau,k);\\
        \mathsf{R}_b(d,m,k) &\coloneqq \underset{0 < \lvert\tau_1\lvert,\lvert\tau_2\lvert \leq m}{\mathsf{max}} \mathsf{R}_b(d,\tau_1,\tau_2,k);\\
        \mathsf{R_\ell}(d,m,k) &\coloneqq \underset{\lvert \tau \rvert \leq m}{\mathsf{max}} \mathsf{R_\ell}(d,\tau,k).
    \end{align*}
    For all $d,m,k$ the following holds, provided that the right-hand-sides are finite:

    \begin{enumerate}
        \item $\mathsf{R}(d,m,k) \leq \mathsf{R_\ell}(d\cdot k,m,k)$, \label{item:general_ramsey_1}
        \item $\mathsf{R_\ell}(d,m,k) \leq 1+ \max\Bigl\{\mathsf{R}_b(\mathsf{R_\ell}(d-1,m,k),m-1,k), \mathsf{R}(\mathsf{R_\ell}(d-1,m,k),m-1,k)\Bigr\}$,\label{item:general_ramsey_2}
        \item $\mathsf{R}_b(d,m,k) \leq \mathsf{R}(d,m-1,k\cdot\binom{\mathsf{R}(d,m-1,k)}{d}_T)$,\label{item:general_ramsey_3}
    \end{enumerate}
    where $\binom{n}{d}_T$ stands for the number of subtrees of depth $d$ in a tree of depth $n$.

    \paragraph{Justification for \ref{item:general_ramsey_1}.} Assume that we have a tree $T$ of depth $\mathsf{R_\ell}(d\cdot k,m,k)$ with $k$-coloring of all of $m$-subsets of some type. Then by definition of $\mathsf{R_\ell}$, it admits a subtree of depth $d\cdot k$ with the local Ramsey property as described above. Then we can color each vertex in the subtree, by the promised color from the property, apply the pigeonhole principle for trees on this coloring and get the desired subtree of depth $d\cdot k/k=d$.

    \paragraph{Justification for \ref{item:general_ramsey_2}.} 
    Assume that we have a tree $T$ whose depth is at least the right-hand-side of inequality \ref{item:general_ramsey_2}
    with $k$-coloring of all of $m$-subsets of some type $\tau$. 
   This coloring induces a natural coloring on pairs in $T_1\times T_2$ where $T_1$ is the left subtree of $T$ and $T_2$ is the right subtree of $T$; 
    each colored subset $A$ in $T$, passes its color to the pair of subsets $A_1,A_2\subseteq A$, where $A_i=T_i\cap A$. Notice that by the observation we made before, all the sets $A_1,A_2$ obtained like this have the same pair of types $\tau_1,\tau_2$ of sizes $\lvert \tau_1 \rvert, \lvert \tau_2 \rvert <m$.
    We consider two cases: (i) both $\lvert \tau_1\rvert ,\lvert \tau_2\rvert > 0$.
    Then the left subtree $T_1$ and the right subtree $T_2$ of~$T$ are of depth $\geq \mathsf{R}_b(\mathsf{R_\ell}(d-1,m,k),m-1,k)$. By the definition of $\mathsf{R}_b(\cdot,\cdot,\cdot)$, it is promised that there are subtrees $T'_1, T'_2$ of $T_1,T_2$ accordingly of depth $\mathsf{R_\ell}(d-1,m,k)$ such that all such pairs in $T'_1\times T'_2$ are colored by the same color. Then, while keeping the original coloring to these two subtrees $T'_1, T'_2$, and by the definition of $\mathsf{R_\ell}$, it is promised that there are subtrees  $T''_1, T''_2$ of $T'_1,T'_2$ accordingly of depth $d-1$ admitting the local Ramsey property. Taking these two last subtrees with the root of $T$ gives the desired subtree of depth $d$.
    (ii) In the complementing case either $\lvert \tau_1\rvert = 0$ or $\lvert \tau_2\rvert = 0$.
    Assume without loss of generality that $\lvert \tau_2\rvert = 0$. Note that in this case, 
    the $\LCA$ of $\tau$ belongs\footnote{More precisely, every set of type $\tau$ contains its $\LCA$.} to $\tau$, and hence $\lvert \tau_1\rvert = m-1$. Notice that $T_1$ has depth $\geq \mathsf{R}(\mathsf{R_\ell}(d-1,m,k),m-1,k)$.
    By the definition of $\mathsf{R}(\cdot,\cdot,\cdot)$, it is promised that there is a subtree $T'_1$ of $T_1$ of depth $\mathsf{R_\ell}(d-1,m,k)$ such that all such sets of type $\tau_1$ in $T'_1$ are colored by the same color.
    Pick an arbitrary subtree $T'_2$ of $T_2$ of depth $\mathsf{R_\ell}(d-1,m,k)$. Then, while keeping the original coloring to these two subtrees $T'_1, T'_2$, and by the definition of $\mathsf{R_\ell}$, it is promised that there are subtrees  $T''_1, T''_2$ of $T'_1,T'_2$ accordingly of depth $d-1$ admitting the local Ramsey property. Taking these two last subtrees with the root of $T$ gives the desired subtree of depth $d$.

    \paragraph{Justification for \ref{item:general_ramsey_3}.}
    Assume that we have two trees $T_1,T_2$ of depth $\mathsf{R}\left(d,m-1,k\cdot\binom{\mathsf{R}(d,m-1,k)}{d}_T\right)$ with $k$-coloring of all pairs of $m$-subsets in $T_1\times T_2$ of types $\tau_1,\tau_2$. Take any subtree $T'_2$ of $T_2$ of depth $\mathsf{R}(d,m-1,k)$. Note that $\mathsf{R}(\cdot,\cdot,\cdot)$ is monotonically increasing as a function of the number of colors, so this operation is valid. Let $A_1$ be an $m$-subset in $T_1$ of type $\tau_1$. $A_1$ induces a $k$-coloring on subsets in $T'_2$ of type $\tau_2$; color each $m$-subset $A_2$ in $T'_2$ of this type with the original color of $(A_1,A_2)$. By the definition of $\mathsf{R}$, there is a monochromatic subtree $T''_2$ of $T'_2$ of depth $d$. Now color the set $A_1$ with the pair $(T''_2,c)$ where~$c$ is the color of the subsets in $T''_2$. Apply this process for each such $A_1$ in $T_1$ and get a coloring on $T_1$ with $k\cdot\binom{\mathsf{R}(d,m-1,k)}{d}_T)$ colors. By the definition of $\mathsf{R}$, there is a monochromatic subtree $T'_1$ of $T_1$ of depth $d$. Taking the pair of trees $(T'_1, T''_2)$ gives the desired result since any colored pair of sets $(A_1,A_2)$ in $T'_1\times T''_2$ has the same color.

    \paragraph{Double Induction.} To finish the proof, we show the finiteness of these three Ramsey numbers altogether. 
    We prove by induction on $m$, that $\forall d,k$ $\mathsf{R}(d,m,k),\mathsf{R_\ell}(d,m,k),\mathsf{R}_b(d,m,k) < \infty$. The base case $m=1$ is the pigeonhole principle, and $\mathsf{R}(d,1,k),\mathsf{R}_b(d,1,k),\mathsf{R_\ell}(d,1,k)$ are finite for any values of $d,k$.
   Also observe that for $d<\log m$ the theorem holds trivially for any value of $k$, since any tree of depth $d$ does not contain $m$-subsets for size $m>2^d$.
    For the induction step, suppose that $\mathsf{R}(d,m,k),\mathsf{R}_b(d,m,k),\mathsf{R_\ell}(d,m,k)$ are finite for $m<M$ and any values of $d,k$. From relation~\ref{item:general_ramsey_3} we get that $\mathsf{R}_b(d,M,k)$ is finite for any values of $d,k$. In the next step, we aim to show that $\mathsf{R_\ell}(d,M,k)<\infty$ for any values of $d,k$. 
    Using the finiteness of $\mathsf{R}_b(d,M-1,k),\mathsf{R}(d,M-1,k)$ for any values of $d,k$ together with relation~\ref{item:general_ramsey_2}, we can infer by induction on $d$ that $\mathsf{R_\ell}(d,M,k)<\infty$, since we know for $d<\log M$, so the base case holds. 
    Now, combining this with the first relation~\ref{item:general_ramsey_1}, we can now get that also $\mathsf{R}(d,M,k)<\infty$ for any values of~$d,k$.
    
\end{proof}

\subsection{Chains: Proof of \Cref{thm:ramsey_chains}}\label{sec:ramsey_chains_proof}

In this section, we prove \Cref{thm:ramsey_chains}.
The proof in its core is similar to the proof of the classic Ramsey theorem for sets given by \citet{ErdosRado52}.

Note that the closure of an $m$-chain $C = \{v_1< \ldots < v_m\}$ is $C$ itself, hence the type is determined by the left/right descendant relations between all pairs of vertices in the chain. 
Moreover, since in a chain, the relations among pairs of consecutive vertices determine all other relations, we can exactly represent the type of an $m$-chain $C$ as a tuple $\vec{t}=\vec t(C) \in \{0,1\}^{m-1}$:
that is, $0$ ($1$) indicates the next vertex in the chain is a left (right) descendant.

Denote by $\Ramsey(d,m,k)$ be the smallest $n$ that satisfies the condition in \cref{thm:ramsey_chains}, i.e.\ $\Ramsey(d,m,k)$ is the smallest $n$ such that for every coloring of $m$-chains with $k$ colors, there exists a type-monochromatic subtree of depth $d$. The proof of \cref{thm:ramsey_chains} consists of two parts: the first part shows that $\Ramsey(d,m,k)$ is well defined (i.e.\ $\Ramsey(d,m,k)<\infty$), while the second part proves the quantitative upper bound stated in theorem. The second part is deferred to \cref{app:upper_bound_ramsey_chains}.

\begin{proof}[Proof of \Cref{thm:ramsey_chains}]
    We prove the theorem by induction on $m$.
	The case $m=1$ follows immediately from \Cref{prop:php} since a $1$-chain is simply a vertex, hence a $1$-chain coloring is a vertex coloring, and the promised monochromatic subtree is in particular type-monochromatic.

 \medskip
	
	Assume that the statement holds for $m-1$, and denote ${t=\Ramsey(d,m-1,k^2)}$.  
	Let $T$ be a complete binary tree of depth $n$, where $n$ is sufficiently large (the size of $n$ will be determined later on in \Cref{app:upper_bound_ramsey_chains}).
    Let $\chi$ be an $m$-chain coloring of $T$ using $k$ colors.
    Following the footsteps of the proof for comparable pairs ($m=2$, see \Cref{prop:ramsey_pairs_comp}), we introduce a recursive procedure constructing a subtree of $T$ of depth $t$, denoted $T^\star$.  
    Then, we define an $(m-1)$-chain coloring $\chi^\star$ of $T^\star$, such that any $\chi^\star$-type-monochromatic subtree of $T^\star$ is in fact type-monochromatic with respect to~$\chi$.
    Finally, we apply the induction hypothesis on $T^\star$ and $\chi^\star$, allowing us to obtain the desired type-monochromatic subtree.
    Throughout the proof we denote the vertices of $T^\star$ by $u_\emptyset,u_0,u_1,u_{00},u_{01},\ldots,u_{1^{t}}$, where for a string $\sigma\in\{0,1\}^i$, $i\in\{0,\ldots, t-1\}$, $u_{\sigma 0}$ is the left descendant of $u_\sigma$, and $u_{\sigma 1}$ is the right descendant of $u_\sigma$.

    \medskip
    
    We define by induction a sequence of trees $S_\sigma$ and vertices $u_\sigma$ as follows.
    For every $\sigma\in \{0,1\}^{\leq(m-2)}$, set $u_\sigma$ to be the vertex in $T$ represented by the sequence~$\sigma$ (the root of $T$ is represented by the empty sequence, the left child of the root is $0$, the right child is $1$, etc.).
    Next, for every $\sigma\in \{0,1\}^{m-2}$ set $S_\sigma$ to be the subtree of $T$ rooted at $u_\sigma$. 
    Assume the subtree $S_\sigma$ has been defined and $u_\sigma$ is the root of $S_\sigma$, where $\sigma\in\{0,1\}^i$ is a binary sequence of length~$i\geq m-2$.
    We define subtrees $S_{\sigma b}$ and vertices $u_{\sigma b}$ where $b\in\{0,1\}$, as follows.

    \begin{enumerate}
        \item Consider the $b$'th-subtree of $S_\sigma$, that is the left subtree of $S_\sigma$ for $b=0$ and the right tree of $S_\sigma$ for $b=1$.
        Define an equivalence relation on the vertices of the $b$'th-subtree of $S_\sigma$ as follows:

        \begin{equation*}
            \begin{gathered}
            x\equiv y \\
            \iff \\
            \forall A\subset \{u_{\sigma(0)}, u_{\sigma(1)},\ldots, u_{\sigma(i-1)}\}, |A|=m-2 ~:~ \chi(A\cup\{u_\sigma,x\})=\chi(A\cup\{u_\sigma,y\}),
            \end{gathered}
        \end{equation*}
        where $\sigma(j)$ is the prefix of $\sigma$ of length $j$ (with $\sigma(0)$ being the empty sequence).
        Note that indeed every choice of $m-2$ vertices from the set $\{u_{\sigma(0)}, u_{\sigma(1)},\ldots, u_{\sigma(i-1)}\}$, together with $u_\sigma$ and  a vertex from the $b$'th-subtree of $S_\sigma$, form an $m$-chain in $T$.

        Observe that an equivalence class is determined by a sequence of ${i} \choose {m-2}$ colors, therefore there are at most $k^{ {i} \choose {m-2}}$ such equivalence classes. 

         \item  Apply the pigeonhole principle for trees (\Cref{prop:php}) on the $b$'th-subtree of $S_\sigma$, where the colors of the vertices are the equivalence classes defined in the previous step. Set~$S_{\sigma b}$ to be the promised monochromatic subtree, and set~$u_{\sigma b}$ to be its root.

    \end{enumerate}

     We choose $n$ to be sufficiently large so this procedure may be continued until  $T^\star=\{u_\sigma\}_{\sigma\in\{0,1\}^{\leq t}}$ have been defined. See \Cref{app:upper_bound_ramsey_chains} for a more detailed discussion.
     Note that for every binary sequence $\sigma$ of length $i$, and every $b\in\{0,1\}$,
     \begin{equation}\label{eq:recursive_eq_inside_proof}
         \mathtt{depth}(S_{\sigma b})\geq \left\lfloor\frac{\mathtt{depth}(S_\sigma) -1}{k^{ {i} \choose {m-2}}}\right\rfloor.
     \end{equation}

     Next, define an $(m-1)$-chain coloring of $T^\star$, denoted $\chi^\star$, as follows.
     \[\forall \text{ $(m-1)$-chain $C$ in $T^\star$}~:~ \chi^\star(C)=\bigl(\chi(C\cup \{u_{\sigma_0}\}),\chi(C\cup \{u_{\sigma_1}\})\bigr),\]
    where $u_{\sigma_b}$ is any vertex from $T^\star$ that belongs to the $b$'th subtree emanating from the last vertex of the chain $C$. (If the last vertex in $C$ is at level $t$, meaning it is a leaf of $T^\star$, we just pick an arbitrary color for $C$ out of the $k^2$ possible colors.)
    Note that $\chi^\star$ is well-defined. Indeed, take an $(m-1)$-chain $C$ in $T^\star$. If $x,y$ are vertices that belong to the $b$'th subtree emanating from the last vertex of the chain $C$, then by construction $x\equiv y$, meaning
    $\chi(C\cup\{x\}) = \chi(C\cup\{y\})$.

    \medskip

    Finally, by the induction hypothesis applied on $T^\star$ and $\chi^\star$, and by the choice of $t={\Ramsey(d,m-1,k^2)}$, there exists a type-monochromatic subtree of $T^\star$ of depth~$d$. 
    In fact, this subtree is also type-monochromatic with respect to the original $m$-chain coloring $\chi$. 
    Indeed, if $C$ and $C'$ are two $m$-chains with the same type, then the first $m-1$ vertices of $C$ and $C'$ form an $(m-1)$-chains with the same type that have the same color with respect to $\chi^\star$. So by the definition of $\chi^\star$ it is affirmed that $\chi(C)=\chi(C')$.  
    Therefore, we proved the finiteness of the Ramsey number $\Ramsey(d,m,k)$. It is left to obtain from the recursive procedure described here the upper bound for $\Ramsey(d,m,k)$ that is stated in the theorem. We provide a detailed calculation in \Cref{app:upper_bound_ramsey_chains}.
\end{proof}

\section{Impossibility Result for Private PAC Learning}\label{sec:privacy_vs_LD}

In this section, we prove $\Cref{thm:DP_implies_LD}$. We first start with some notations and definitions that will be useful throughout the proof.

\paragraph{Introducing Notations.}
Let $T$ be a binary mistake tree whose vertices are labeled with instances from a domain $\cX$ and edges are labeled with labels from $\cY$.
Given a learning rule ${\cA:(\cX\times\cY)^\star\to \cY^\cX}$, an input sequence $S$, and an unlabeled example $x\in \cX$, define 
\[\cA_S(x)\coloneqq
\Pr_{h\sim \cA(S)}[h(x)=r_x],\] 
where~$r_x\in \cY$ is the label of the right outgoing edge from $x$ in~$T$.
To ease the notations, from this point onward we assume that the label of every left edge in $T$ is~$0$, and the label of every right edge is~$1$. In particular, by writing $``0"$ we mean the left label, and by writing $``1"$ we mean the right one.\footnote{Note that in the case where $|\cY|>2$, the probabilities of the learner labeling $x$ as the label of a left turn, and as the label of a right turn, does not necessarily sum to $1$. However, every algorithm can be converted to an algorithm for which this sum is $1$ (while maintaining utility and privacy), by a simple post-processing step: if the learner outputs a hypothesis that predicts a label different from the left or right label, replace it with one of them.}

We identify $(m+1)$-chains in $T$ with $T$-realizable samples of size $m$ as follows.
An $(m+1)$-chain  $C=\{x_1<x_2<\ldots<x_{m+1}\}$ in $T$ corresponds to a sample $S=\big((x_1,y_1),\ldots,(x_m,y_m)\big)$, such that $y_i=0$ if $x_{i+1}$ belongs to the left subtree emanating from $x_i$, and $y_i=1$ if $x_{i+1}$ belongs to the right subtree emanating from $x_i$.
Namely, $(y_1,\ldots,y_m)$ is the type\footnote{Recall that the type of an $m$-chain $C$ is described as a tuple $\vec{t}=\vec t(C) \in \{0,1\}^{m-1}$ where $\vec{t}_i=0$ if and only if the $i+1$'th vertex in $C$ is a left descendant of the $i$'th vertex.} of $C$, $\vec t(C)$.

    Let $S=\big((x_1,y_1),\ldots,(x_m,y_m)\big)$ be a sample that is realizable by $T$, and assume for convenience that $S$ is ordered, i.e.\ $x_1<x_2<\ldots<x_m$, where the order is the natural order induced by $T$ . An instance~$x$ is \emph{compatible} with $S$ if there exists a branch in $T$ that realizes $S$ and contains $x$. In such a case we denote by
    \[S^{+x}\coloneqq S\cup(x,y),\]
    where $y\in\{0,1\}$ is a label such that $S\cup(x,y)$ is $T$-realizable:
    note that if $x$ appears earlier in the branch than $x_m$ then there is a unique such $y$.
    In the complementing case, when $x$ appears after $x_m$, both $S\cup(x,0)$ and $S\cup(x,1)$ are realizable by $T$. In that case we arbitrarily pick $S^{+x}=S\cup(x,0)$ and therefore $S^{+x}$ is well defined.
    The \emph{location} of $x$ in $S$ is \[\mathtt{loc}_S(x)\coloneqq \max\{i\mid x_i<x\}.\] 
    If $x<x_1$ then define $\mathtt{loc}_S(x)\coloneqq0$. 

\paragraph{Step 1: Reduction to Approximately Comparison-Based Predictions.}
The first step of the proof hinges on the Ramsey theory we developed for trees and consists of a reduction to subtrees of Littlestone trees on which we can control the output of the algorithm.
To define the notion of \emph{comparison-based predictions}, we first consider deterministic algorithms. 
Given a binary decision tree $T$, a deterministic algorithm is comparison-based with respect to $T$ if its prediction on a test point $x$ depends solely on the labels of the ordered input sample $S$, and the comparisons of the test point with points in $S$, where comparisons are based on the partial order on $T$. A result of comparing two examples $x'$ and $x''$ can be one of five outcomes: (i)+(ii) $x'$ is a left (right) descendant of $x''$, (iii)+(iv) $x''$ is a left (right) descendant of $x'$, or (v) $x'$ and $x''$ are incomparable. We remark that later on we will focus on input samples that are $T$-realizable, and only consider compatible test points, i.e.\ test points that are comparable with every point in the input sample.
For a visual example please refer to \Cref{fig:comparison_based}.

We extend naturally this notion of comparison-based predictions to randomized algorithms. A randomized algorithm is comparison-based with respect to $T$ if for every test point $x$, the probability that it labels $x$ as $1$ is determined by the labels of the ordered input sample $S$, and the comparisons of the test point with points in $S$.
A randomized algorithm is approximately comparison based if its predictions are closed to the prediction of a comparison-based randomized algorithm.
Note that given a $T$-realizable input sequence $S$ and a compatible test point $x$, the labels of $S$ and the comparisons of $x$ and the training points in $S$ are completely encoded by the chain-type of $S^{+x}$, and the location of $x$ inside $S$. 
Therefore, we formally define approximately comparison-based algorithms as follows. 

\begin{definition}[Approximately Comparison-Based Algorithm]\label{def:rand_comparison_alg}
	A (randomized) algorithm~$\cA$, defined over input samples of size $m$, is approximately $\gamma$-comparison-based on $T$ if the following holds.
    There exist numbers~$p_{\vec t,i}\in[0,1]$ for $\vec t\in \{0,1\}^{m+1}$ and $i\in \{0,\ldots,m\}$
    such that for every input sample $S$ of size $m$ realizable by $T$, and for every $x$ compatible with $S$,
    \begin{equation*}\label{eq:comparison_alg_condition}
        \rvert \cA_S(x)-p_{\vec t,i}\lvert \leq \gamma,
    \end{equation*}
    where $\vec t=\vec t(S^{+x})$ is the type of the sample $S^{+x}$, and $i=\mathtt{loc}_S(x)$ is the location of $x$ in $S$.
\end{definition}

As a consequence of the Ramsey theorems we proved for trees, it turns out that every algorithm can be reduced to an approximately comparison-based algorithm.

\begin{lemma}[Every Algorithm is Approx.\ Comparison-based on a Large Subtree]\label{lemma:every_alg_is_comparison_based_somewhere}
    Let
    $\cA$
    be a (possibly randomized) algorithm
    that is defined over input samples of size $m$ over a domain $\cX$, 
    and let $T$ be a decision tree of depth $n$ whose vertices are labeled by instances from $\cX$. Then, there exists a subtree $T'$ of $ T$ of depth $\frac{\log_{(m+1)}(n)}{2^{c2^m m\log m}}$, where $c<35$ is a universal numerical constant, such that $\cA$ is $\left(\frac{1}{100m}\right)$-comparison-based on $T'$.
\end{lemma}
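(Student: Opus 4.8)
The plan is to reduce the statement to the Ramsey theorem for chains (\Cref{thm:ramsey_chains}) by encoding the behavior of $\cA$ as a coloring of chains of an appropriate size. The key observation is that once we fix an input sample $S$ of size $m$ and a compatible test point $x$, the data $(S^{+x}, \mathtt{loc}_S(x))$ is determined by an $(m+1)$-chain $C = S^{+x}$ in $T$ together with a choice of which of the $m+1$ vertices of $C$ is the test point $x$ — but actually, since $\mathtt{loc}_S(x)$ is just the index of $x$ inside the sorted chain, the pair $(\vec t(C), \mathtt{loc}_S(x))$ is fully encoded by the $(m+1)$-chain $C$ together with the marked position. So $\cA_S(x)$ is a number in $[0,1]$ depending on an $(m+1)$-chain with a marked vertex. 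To make this into a finite coloring, I would discretize: partition $[0,1]$ into roughly $200m$ intervals of length $\frac{1}{200m}$, and color each $(m+1)$-chain $C$ by the tuple, indexed over the $m+1$ possible marked positions $i \in \{0,\dots,m\}$, recording which interval $\cA_{S}(x)$ falls into, where $S$ and $x$ are read off from $(C,i)$. This gives a coloring of $(m+1)$-chains with $k = (200m)^{m+1}$ colors.

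\textbf{Second, apply \Cref{thm:ramsey_chains}} with parameters $d$ (the target subtree depth), $m+1$ (chain size), and $k = (200m)^{m+1}$ colors. This yields a \emph{type-monochromatic} subtree $T'$ of depth $d$: any two $(m+1)$-chains of the same type in $T'$ receive the same color. On $T'$, define $p_{\vec t, i}$ to be the midpoint of the interval recorded in coordinate $i$ of the common color of all type-$\vec t$ chains. Type-monochromaticity guarantees this is well-defined: every $(m+1)$-chain of type $\vec t$ in $T'$ gets the same tuple of intervals, so for every realizable $S$ of size $m$ in $T'$ and every compatible $x$ with $\vec t(S^{+x}) = \vec t$ and $\mathtt{loc}_S(x) = i$, we have $\cA_S(x)$ in the same interval of width $\frac{1}{200m}$ as $p_{\vec t,i}$, hence $|\cA_S(x) - p_{\vec t,i}| \le \frac{1}{200m} < \frac{1}{100m}$. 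That is exactly the condition of \Cref{def:rand_comparison_alg} with $\gamma = \frac{1}{100m}$.

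\textbf{Third, track the depth bound.} \Cref{thm:ramsey_chains} with chain-size $m+1$ gives $n \le \twr_{(m+1)}\!\left(5 \cdot 2^{m-1} d \, k^{2^m} \log k\right)$ where $k = (200m)^{m+1}$; inverting, a tree of depth $n$ yields a subtree of depth at least roughly $\frac{\log_{(m+1)}(n)}{5 \cdot 2^{m-1} k^{2^m}\log k}$. Since $k^{2^m}\log k = (200m)^{(m+1)2^m}\cdot (m+1)\log(200m)$, the denominator is $2^{O(2^m m \log m)}$, matching the claimed form $2^{c 2^m m \log m}$ with $c < 35$ for a suitable constant (one should double-check the constant survives the arithmetic, but the order of magnitude is clearly right). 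I would also note the minor point that $T$ is a \emph{decision tree} whose internal vertices carry labels from $\cX$; the Ramsey theorem is about the underlying complete binary tree structure, and subtrees inherit the labeling, so this causes no issue — and the labels on the edges ($0$ for left, $1$ for right) are exactly what makes the chain-type $\vec t(S^{+x})$ encode the labels $(y_1,\dots,y_m)$ of $S$ together with the position of $x$.

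\textbf{The main obstacle} I anticipate is purely bookkeeping rather than conceptual: making sure the encoding $(C, i) \leftrightarrow (S, x)$ is clean enough that the coloring is genuinely well-defined and the type-monochromaticity transfers correctly — in particular handling the boundary case in the definition of $S^{+x}$ where $x$ comes after $x_m$ and $S^{+x} = S \cup (x,0)$ is chosen arbitrarily (this is fine since the choice is deterministic given $(S,x)$, so the coloring is still well-defined, but it must be stated). The other place requiring care is squeezing the universal constant $c$ below $35$: this needs the explicit $\twr_{(m+1)}$ bound from \Cref{thm:ramsey_chains}, the estimate $\log k = (m+1)\log(200m) \le 2^{O(m\log m)}$, and $k^{2^m} = 2^{(m+1)2^m \log(200m)}$, so the exponent in the denominator is $(m+1)2^m\log(200m) + O(2^m) + \log(5\cdot 2^{m-1}) = 2^m \cdot O(m\log m)$, and one verifies the hidden constant is comfortably under $35$ for all $m \ge 1$ (with room to spare for small $m$, which can be checked directly).
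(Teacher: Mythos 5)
Your overall strategy — discretize the algorithm's prediction probabilities, encode them as a coloring of chains, apply \Cref{thm:ramsey_chains}, and read off the constants $p_{\vec t,i}$ from the type-monochromatic subtree — is exactly the paper's strategy. But there is a genuine gap in your encoding step: you color $(m+1)$-chains, and an $(m+1)$-chain does \emph{not} determine the labeled sample $S$ when the marked test point $x$ is not the deepest vertex of the chain. The chain structure encodes, via the left/right descendant relations, only the labels of the non-maximal vertices: the label of the deepest vertex of the chain is extra data. So for any marked position $i=\mathtt{loc}_S(x)\leq m-1$, the deepest vertex of $C$ is the sample point $x_m$, and its label $y_m$ (on which $\cA_S(x)$ genuinely depends, since $\cA$ sees the labeled sample) is not a function of $(C,i)$. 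Hence the tuple you propose as the color is not well-defined, and correspondingly your $p_{\vec t,i}$ cannot be indexed by the types $\vec t\in\{0,1\}^{m+1}$ required by \Cref{def:rand_comparison_alg}: the types of your $(m+1)$-chains live in $\{0,1\}^{m}$, one coordinate short. You did flag well-definedness of the encoding as the "main obstacle," but you resolved the wrong direction of it — the arbitrary convention $S^{+x}=S\cup(x,0)$ for $x>x_m$ is indeed harmless, whereas the missing label of the deepest sample point is not.

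The fix is exactly the device the paper uses: color $(m+2)$-chains instead. An $(m+2)$-chain corresponds to a realizable sample of size $m+1$ (the extra, deepest vertex exists only to pin down the last label), and for each $i\in\{1,\ldots,m+1\}$ one records the discretized value of $\cA_{S^{-i}}(x_i)$, where $S^{-i}$ is the size-$m$ sample obtained by deleting the $i$'th labeled example; this gives $k=(100m+1)^{m+1}$ colors and a well-defined, type-determined assignment of the $p_{\vec t,i}$ with $\vec t\in\{0,1\}^{m+1}$. (Alternatively, you could keep $(m+1)$-chains but record, for each marked position, the pair of discretized values corresponding to both possible labels of the deepest sample point; either repair only changes the color count by a factor that is absorbed in the constant $c$.) With the $(m+2)$-chain version, \Cref{thm:ramsey_chains} is applied with chain size $m+2$, giving a denominator $5\cdot 2^{m}k^{2^{m+1}}\log k$ rather than your $5\cdot 2^{m-1}k^{2^{m}}\log k$; the order of magnitude $2^{\Theta(2^m m\log m)}$ and the constant $c<35$ still work out, so your depth bookkeeping is essentially fine once the coloring is repaired.
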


\paragraph{Step 2: Lower Bound the Sample Complexity of Private Approx.\ Comparison-based Algorithms.}
The second step of the proof is providing a lower bound on the sample complexity of private approx.\ comparison-based algorithms that learn a shattered tree $T$.
Note that by \Cref{lemma:reducrion_to_empirical_learner} it is enough to provide a lower bound for algorithms that \emph{empirically} learn $T$, since the sample complexity increases only by a multiplicative constant factor.
We do so by giving a reduction from the \emph{interior point problem}, introduced by \citet*{BunNSV15}. A randomized algorithm solves the interior point problem on~$[n]$ if for every input dataset $X\in[n]^m$, with high probability it returns a point that lies between $\min X$ and $\max X$ (see \Cref{sec:preliminaries_learning}). \citet{BunNSV15} showed that solving the interior point problem with respect to differential privacy requires a dataset size of $m\geq\Omega(\log^\star n)$, we use that bound to derive a bound on the sample complexity.

\medskip The upcoming lemma, in conjunction with \Cref{lemma:every_alg_is_comparison_based_somewhere} from Step 1, implies \Cref{thm:DP_implies_LD}, as we will prove shortly. 

\begin{lemma}\label{lemma:SC_of_CB_alg}[Sample Complexity for Privately Learning Trees]
    Let $T$ be a decision tree of depth $n$ and let $\cA$ be an algorithm defined over input samples of size~$m$. Assume that
    \begin{enumerate}
        \item $\cA$ is $(\epsilon,\delta(m))$-differentially private for some $\epsilon\leq10^{-3}$, and $\delta(m)\leq 1/10^{3}m^2$.
        \item $\cA$ is $\left(\frac{1}{100m}\right)$-comparison-based on $T$.  
        \item \(\mathcal{A}\) is an \((\alpha, \beta)\)-accurate empirical learner for \(T\), where \(\alpha = \beta = 10^{-4}\). Here, by empirical learner for $T$, we mean an empirical learner with respect to input samples that are realizable by (a branch of) \(T\).
    \end{enumerate}
    Then, $m=\Omega(\log^\star n)$.
     
\end{lemma}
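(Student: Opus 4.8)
The plan is to reduce the interior point problem on a domain of size $\approx n$ to the task performed by $\cA$, and then invoke the rescaled lower bound of \citet{BunNSV15} as packaged in \Cref{lemma:rescaling_ipp}. Fix a branch-selection mechanism: given an input dataset $X=(d_1,\dots,d_m)\in[n/2]^m$ for the interior point problem, sample a uniformly random root-to-leaf branch $B$ of $T$ and let $x_i$ be the vertex of $B$ at depth $d_i$. The first half of the points yields a $T$-realizable ordered sample $S=\big((x_{\sigma(1)},y_{\sigma(1)}),\dots\big)$ (after sorting the depths), whose labels $y_i$ are dictated by $B$; since $\cA$ empirically learns $T$, with probability $\geq 1-\beta$ the hypothesis $\cA(S)$ agrees with $B$ on all of $S$, and more importantly, by the comparison-based property its prediction on any compatible test vertex $x$ deeper than $S$ is governed (up to $\gamma=\tfrac{1}{100m}$) by the numbers $p_{\vec t,i}$, which depend only on the chain-type of $S^{+x}$ and the location $i=\mathtt{loc}_S(x)$, \emph{not} on the identity of the branch. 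I would use $\cA$ to probe, for each candidate depth $\ell$ strictly between $\min_i d_i$ and $\max_i d_i$, the vertex $x$ of $B$ at depth $\ell$: feeding $x$ as a test point reveals (noisily) whether $\cA$ predicts the left- or right-turn of $B$ at that depth. Because $B$ was chosen uniformly and $S$ only fixes $m/2$ turns of $B$, the values $p_{\vec t,i}$ cannot simultaneously predict the turn of $B$ correctly at a fraction close to $1$ of the remaining interior depths — an averaging/anticoncentration argument over the choice of $B$ forces a disagreement at some interior depth with constant probability. Locating such a disagreement depth gives a point of $[n]$ lying between $\min_i d_i$ and $\max_i d_i$, i.e.\ solves the interior point problem.

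Concretely, the reduction algorithm $\cB$ for the interior point problem would: (1) sort the input and work on the first half to build $S$; (2) for each depth $\ell\in(\min d_i,\max d_i)$ query an estimate $\widehat{q}_\ell$ of $\cA_S(x_\ell)$ (where $x_\ell$ is the depth-$\ell$ vertex of $B$) by running $\cA(S)$ a moderate number of times and averaging; (3) output the first $\ell$ at which $\widehat{q}_\ell$ is far (say $>1/2$ when the true turn is left, or symmetrically) from the prediction consistent with $B$. Privacy of $\cB$ follows from post-processing (\Cref{prop:dp_post_processing}): everything $\cB$ does is a data-independent function of outputs of $\cA$ run on datasets (the sample $S$, augmented with test points) that are each neighboring-stable in the appropriate sense — I would need to check that a single change in the interior-point dataset $X$ induces only a single-example change in each invocation of $\cA$, which it does since $x_i$ depends only on $d_i$ and the fixed branch $B$, and $B$ is part of $\cB$'s internal randomness (hence data-independent). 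Thus $\cB$ inherits $(\epsilon,\delta(m))$-differential privacy, possibly with a constant blow-up in $\epsilon$ absorbed into the constants, and by \Cref{lemma:rescaling_ipp} (taking $C(n)$ a suitable $\mathrm{polylog}$ — needed because the empirical-accuracy and $\gamma$-slack forces us to look only at interior depths that are sufficiently spread out) we conclude $m=\Omega(\log^\star n)$.

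The main obstacle I expect is the \textbf{probabilistic core}: making rigorous that a comparison-based algorithm, whose predictions are dictated by the branch-independent constants $\{p_{\vec t,i}\}$, \emph{must} err on an interior turn of a random branch with constant probability — and doing so while only having access to the $O(1/m)$-approximate, empirically-accurate guarantees. One has to be careful that the constants $p_{\vec t,i}$ could conceivably all be close to $1/2$, in which case the $\gamma$-slack and the finite number of samples used to estimate $\widehat q_\ell$ must still let $\cB$ detect a decisive disagreement; the accuracy parameters $\alpha=\beta=10^{-4}$ and $\epsilon=10^{-3}$ in the hypothesis are presumably tuned precisely so the relevant constants clear these thresholds. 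The cleanest route is: (a) fix $S$ and hence all relevant $p_{\vec t,i}$; (b) observe that for a uniformly random branch $B$ extending $S$, the sequence of true turns at the interior depths is (essentially) a uniform random string, independent of the $p_{\vec t,i}$; (c) conclude by a union/averaging bound that with probability $\geq 2/3$ there is an interior depth where the true turn disagrees with the side favored by $p_{\vec t,i}$ by a margin $\geq 1/4$, which survives the estimation error; (d) handle the boundary bookkeeping (the arbitrary choice in $S^{+x}$ when $x$ is deeper than all of $S$, locations, ordering) to confirm the output lands strictly between $\min d_i$ and $\max d_i$. The rest — privacy via post-processing, the rescaling, and chasing the numerical constants — is routine.
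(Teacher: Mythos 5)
Your overall strategy (reduce from the interior point problem, use the comparison-based constants $p_{\vec t,i}$ to argue the algorithm cannot track a random branch below the sample, then invoke \Cref{lemma:rescaling_ipp}) matches the paper's, but the detection mechanism you propose has a genuine gap, and it is precisely the part the paper's proof is built to solve. Your reduction restricts the probed depths to the data-dependent window $(\min_i d_i,\max_i d_i)$ and outputs the first detected disagreement there. That restriction is not post-processing of $\cA$'s outputs: it consults the dataset directly, so privacy of $\cB$ does not follow from \Cref{prop:dp_post_processing}. Concretely, if the comparison-based constants happen to make the sampled hypothesis disagree with $B$ at essentially every probed depth, your $\cB$ outputs (roughly) $\min_i d_i+1$, a nearly deterministic function of a single record --- a blatant privacy failure no matter how private $\cA$ is. (The repeated invocations of $\cA(S)$ to estimate $\widehat q_\ell$ are a second, milder issue: that is composition, not post-processing, and needs its own accounting.) If you try to repair privacy by scanning all depths in $[n]$, utility breaks: the empirical-learner guarantee says nothing about test points above the shallowest sample point, and a comparison-based algorithm can tell (via $\mathtt{loc}_S(x)=0$ and the type of $S^{+x}$, which encodes $b_x$) that a point lies above the sample and deliberately disagree there, pushing the ``first disagreement'' far above $\min_i d_i$; symmetrically, ``last disagreement'' lands far below $\max_i d_i$.

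The paper's proof avoids this by using a different, branch-intrinsic statistic that localizes the sample's span without ever consulting the $d_i$'s outside of the single call $\cA(S)$: it samples one hypothesis $h\sim\cA(S)$ and outputs the depth of the \emph{deepest} interval of $\lfloor\log^2 n\rfloor$ consecutive points of $B$ on which $h$ agrees with $B$ on at least a $0.9$ fraction. Two lemmas make this work, and both are absent from your plan: (i) \Cref{lemma:good_pair,lemma:A_good_pair} show, by combining empirical accuracy, differential privacy (swapping $x_i$ with a matching-label neighbor $x'$), and the $\left(\frac{1}{100m}\right)$-comparison-based property, that with high probability some consecutive pair $(x_i,x_{i+1})$ with a sign change has the whole interval between them predicted correctly up to $\xi'$, so a long almost-correct interval exists at depth $\geq d_1$; and (ii) \Cref{lemma:deep_sequences_are_not_long_almost_correct} plus a Chernoff--union bound shows no such interval starts below $d_m$, because the turns of $B$ below the sample are fresh coins independent of $h$. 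Taking the deepest such interval then lands in $[d_1,d_m]$, and privacy is genuine post-processing of the single output $h$. Your step (c) captures the easy half (randomness below the sample defeats branch-independent constants), but without an analogue of the good-pair argument and an agreement-based, data-oblivious localization statistic, the reduction as proposed is neither private nor (after the natural fix) correct.
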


\subsection{Proof of \Cref{thm:DP_implies_LD}}
\Cref{thm:DP_implies_LD} follows from \Cref{lemma:SC_of_CB_alg,lemma:every_alg_is_comparison_based_somewhere}. 

\begin{proof}
    Let $\cH$ be a partial concept class over an arbitrary label domain $\cY$, and assume that $\cH$ shatters a Littlestone tree $T$ of depth $d$. Let $\cA$ be any 
    $(\epsilon,\delta(m))$-differentially private learner for~$\cH$, with $\epsilon,\delta(m)$ as in \Cref{thm:DP_implies_LD}.
    We further assume that $\cA$ is an $(\alpha,\beta)$-accurate empirical learner for $\cH$, for  $\alpha=\beta=10^{-4}$.
    This assumption is justified by \Cref{lemma:reducrion_to_empirical_learner}, as the sample complexity of a private empirical learner increases only by a multiplicative constant factor.
    By \Cref{lemma:every_alg_is_comparison_based_somewhere}, there exists a subtree $T'$ of $T$, of depth $\frac{\log_{(m+1)}(d)}{2^{c2^m m\log m}}$ for some universal numerical constant $c<35$, such that $\cA$ is $\left(\frac{1}{100m}\right)$-comparison-based on $T'$.
    Finally, by \Cref{lemma:SC_of_CB_alg} we conclude that 
    \[m\geq \Omega\left(\log^\star\left(\frac{\log_{(m+1)}(d)}{2^{c2^m m\log m}}\right)\right).\]
    Let $t=\log^\star(d)$ and suppose $m\leq \frac{t}{10}$ (else $m=\Omega(\log^\star d)$ and we are done). We claim that $\log^\star\left(\frac{\log_{(m+1)}(d)}{2^{c2^m m\log m}}\right)=\Omega(\log^\star d)$, and therefore $m\geq\Omega(\log^\star(d))$, which concludes the proof.
    Note that by the definition of the $\log^\star$ function, $\twr_{(t)}(1)<d\leq \twr_{(t+1)}(1)$.
    The claim follows from the following calculation:
    \begin{align*}
        \log^\star\left(\frac{\log_{(m+1)}(d)}{2^{c2^m m\log m}}\right)&=1+\log^\star\left(\log_{(m+2)}(d)-c2^m m \log m\right)\tag{definition of $\log^\star$}\\
        &\geq 1+\log^\star\left(\twr_{(t-(m+2))}(1)-c2^m m \log m\right)\tag{$d>\twr_{(t)}(1)$}\\
        &\geq1+\log^\star\left(\twr_{(t/2)}(1)-c2^m m \log m\right)\tag{holds for $t\geq5$ since $m\leq t/10$}\\
        &\geq1+\log^\star\left(\twr_{(t/2)}(1)-2^{t/2}\right)\tag{$\forall m: ~35\cdot2^mm\log m\leq 2^{5m} \leq2^{t/2}$}\\
        &=1+\log^\star\left(\twr_{(t/2)}(1)\right)\tag{holds for $t\geq 10$, see justification below} \\
        &=t/2.\tag{definition of $\log^\star$}
    \end{align*}
Therefore, for large enough $d$, $\log^\star\left(\frac{\log_{(m+1)}(d)}{2^{c2^m m\log m}}\right)\geq \frac{1}{2}\log^\star d$, as desired.
It is left to justify the second-to-last equality. It is enough to show that ${\twr_{(x)}(1)-2^x>\twr_{(x-1)}(1)}$ for $x\geq5$. And indeed $\twr_{(x)}(1)-\twr_{(x-1)}(1)\geq\frac{1}{2}\twr_{(x)}(1)\geq 2^x$ for $x\geq 5$.
\end{proof}

Therefore, it is left to prove \Cref{lemma:SC_of_CB_alg,lemma:every_alg_is_comparison_based_somewhere}.

\subsection{Proof of \Cref{lemma:every_alg_is_comparison_based_somewhere}}

\begin{proof}
    Define a coloring of $(m+2)$-chains of $T$ as follows.
    Let $C=\{x_1<\ldots<x_{m+2}\}$ be an $(m+2)$-chain. 
    Recall that $C$ corresponds to a sample $S=\big((x_1,y_1),\ldots,(x_{m+1},y_{m+1})\big)$ of size $m+1$ where $(y_1,\ldots,y_{m+1})=\vec t(C)$ is the type of $C$.
    For each $i\in\{1,\ldots, m+1\}$, let $S^{-i}$ denote the sample $S\setminus(x_i,y_i)$. 
    Set~$q_i(C)$ to be the fraction of the form $\frac{r}{100m}$ that is closest to $A_{S^{-i}}(x_i)$ (in case of ties pick the smallest such fraction). The color assigned to $C$ is the list $(q_1(C),\ldots,q_{m+1}(C))$.
    \footnote{Notice that the color assigned to \(C\) depends only weakly on the last vertex \(x_{m+2}\) via the label \(y_{m+1}\). We find it more convenient and less cumbersome to increase the size of the chain by one rather than keeping track of the labels.}

    Therefore, the total number of colors is at most $k\coloneqq (100m+1)^{m+1}$. By Ramsey theorem for Chains (\Cref{thm:ramsey_chains}) there exists a subtree $T'$ that is type-monochromatic with respect to the above coloring of depth
    \begin{align*}
        d\geq&\frac{\log_{(m+1)}(n)}{5 \cdot 2^m k^{2^{m+1}}\log k} \\
        =&\frac{\log_{(m+1)}(n)}{5\cdot 2^m (100m+1)^{(m+1)2^{m+1}}(m+1)\log(100m+1)}\\
        =&\frac{\log_{(m+1)}(n)}{2^{\log 5+m+\log(100m+1)(m+1)2^{m+1}+\log(m+1)+\log\log(100m+1)}}\\
        \geq&\frac{\log_{(m+1)}(n)}{2^{c2^m m\log m}},
    \end{align*}
    where $1<c<35$ is a universal numerical constant.
    For every possible type $\vec t\in\{0,1\}^{m+1}$ and $i\in\{0,\ldots,m\}$, set $p_{\vec t,i}$ to be $q_{i+1}(C)$ where $C$ is any $\vec t$-typed $(m+2)$-chain in $T'$. Note that $p_{\vec t,i}$ is well defined since $T'$ is type-monochromatic.
    It is straightforward to verify that $\cA$ is $\left(\frac{1}{100m}\right)$-comparison-based on $T'$ with respect to $\{p_{\vec t,i}\}$, as wanted. 
\end{proof}

\subsection{Proof of \Cref{lemma:SC_of_CB_alg}}\label{sec:proof_of_reduction_to_ipp_lemma}

The proof of \Cref{lemma:SC_of_CB_alg} 
hinges on a reduction from the interior point problem. Before describing the reduction, we will introduce notation which will be used in this proof.
Given a branch $B$ in a tree and an example $z$ on $B$, denote
    \[b_z=\begin{cases}
        0, & \text{if } (z,0)\in B\\
        1, & \text{if } (z,1)\in B
    \end{cases}.\]

\paragraph{Reduction from Interior Point Problem.}
Let $T$ be a tree of depth $n$, and $\cA$ be an algorithm as in \Cref{lemma:SC_of_CB_alg}. Let $d_1\ldots d_{m}\in [n]$ be natural numbers, the input to the interior point problem. For convenience, assume that they are ordered $d_1\leq \ldots\leq d_m$. Additionally, assume that $d_{i+1}-d_i>\log^2 n$ for all $1\leq i<m$.
Define algorithm $\tilde{\cA}$ as follows.

\renewcommand{\algorithmicrequire}{\textbf{Input:}}
\renewcommand{\algorithmicensure}{\textbf{Output:}}

\begin{algorithm}[H]
\caption{$\tilde \cA$ (Reduction from IPP)}\label{alg:ipp}
\begin{algorithmic}
\Require $d_1\ldots,d_m$.
\State - Sample uniformly at random a branch $B\sim\mathtt{Branches}(T)$.
\State - $S \gets \left((x_1,y_1),\ldots (x_m,y_m)\right)$, where $x_i$ is the point of depth $d_i$ in $B$, and $y_i=b_{x_i}$.
\State - Sample $h\sim\cA(S)$.
\State - Search for ``long almost-correct intervals": a long almost-correct interval is a sequence of consecutive examples $Z=(z_1,\ldots z_l)$ on $B$ of length $l= \lfloor\log ^2 n\rfloor$, such that $\sum_{i=1}^l\1[h(z_i)=b_{z_i}]\geq 0.9\cdot l$, where $n=\mathtt{depth}(T)$.
\Ensure Output $\max \big\{\mathtt{depth}(z_1)\mid Z=(z_1\ldots z_l) \text{ is a long almost-correct interval}\big\}$. 

In other words, output the depth of the first point of the deepest almost-correct interval. If there are no long almost-correct intervals, return $n$.
\end{algorithmic}
\end{algorithm}

\begin{claim}\label{claim:reduction}
    Assuming $\cA$ is an algorithm as in \Cref{lemma:SC_of_CB_alg}, $\tilde \cA$ is $(\epsilon,\delta(m))$-differentially private, and with probability at least $\frac{3}{4}$ its output lies between $d_1$ and $d_m$.
\end{claim}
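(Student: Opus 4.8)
The claim has two parts: (1) privacy of $\tilde\cA$, and (2) utility — the output lies in $[d_1, d_m]$ with probability $\geq 3/4$. Let me think about both.

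**Privacy.** This should be the easier half. The algorithm $\tilde\cA$ takes the input $d_1,\dots,d_m$, forms the sample $S$ by placing the $d_i$'s on a uniformly random branch $B$, runs $\cA(S)$, and then does post-processing (searching for long almost-correct intervals, outputting a depth). The key observation: the map $(d_1,\dots,d_m)\mapsto S$ is... not quite a single-example-stable map, but almost. If two IPP inputs $\vec d, \vec d\,'$ differ in one coordinate, then for a *fixed* branch $B$, the resulting samples $S, S'$ differ in exactly one labeled example $(x_i, y_i)$. So conditioned on $B$, we can apply the $(\epsilon,\delta)$-DP guarantee of $\cA$ to the pair $S, S'$, and then post-process (Proposition \ref{prop:dp_post_processing}). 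Since the branch $B$ is sampled from a distribution that does *not* depend on the input $\vec d$, averaging over $B$ preserves the indistinguishability: $\Pr[\tilde\cA(\vec d)\in E] = \mathbb{E}_B[\Pr[\cdots \mid B]] \overset{\epsilon,\delta}{\approx} \mathbb{E}_B[\Pr'[\cdots\mid B]] = \Pr[\tilde\cA(\vec d\,')\in E]$, using that $a_B \overset{\epsilon,\delta}{\approx} b_B$ for all $B$ implies $\mathbb{E}[a_B]\overset{\epsilon,\delta}{\approx}\mathbb{E}[b_B]$. I'd spell this out carefully since it's the crux of why the reduction is privacy-preserving.

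**Utility.** This is the main obstacle. I need to show that, with probability $\geq 3/4$ over $B\sim\mathtt{Branches}(T)$, the randomness of $\cA$, and the implicit tie-breaking, the output of \Cref{alg:ipp} lies in $[d_1, d_m]$. There are two directions to control. \emph{Upper bound (output $\leq d_m$):} Below depth $d_m$, the algorithm $\cA$ has seen no training points, so since $B$ was chosen uniformly at random and $\cA$ is (approximately) comparison-based, $\cA$ cannot predict the turns of $B$ deep below $d_m$ much better than random guessing — so w.h.p. there is no long almost-correct interval starting at depth $> d_m$. This is where I'd use: (i) comparison-basedness, which implies $\cA$'s prediction on a test point $x$ deeper than all of $S$ depends only on the type/location data, which is essentially fixed, while $b_x$ is a fresh fair coin given everything $\cA$ sees; hence $\Pr[h(x) = b_x] \approx p_{\vec t, m}$ for a fixed $p$; (ii) a concentration/union bound: the number of $(z_i)$ in any window of length $l = \lfloor \log^2 n\rfloor$ agreeing with $b_{z_i}$ is, up to the $\gamma = 1/(100m)$ slack, a sum of near-independent bits, so $\Pr[\geq 0.9 l \text{ agreements}] \leq 2^{-\Omega(l)} = 2^{-\Omega(\log^2 n)}$, which beats the union bound over $\leq n$ possible starting depths. \emph{Lower bound (output $\geq d_1$):} Above depth $d_1$ — in fact on the interval from the root down to $d_1$ — the sample $S$ *does* pin down the branch: every training point $x_i$ lies on $B$, and by empirical accuracy ($\alpha = 10^{-4}$), $\cA$ labels almost all of $x_1,\dots,x_m$ correctly, i.e. consistently with $B$. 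I'd want to argue that the interval of examples on $B$ between the root and $d_1$ contains a long almost-correct interval, so that the $\max$ in the output is $\geq d_1$. The cleanest route: $\cA$ being comparison-based on $T$ and empirically accurate means that its prediction on points \emph{strictly above} $d_1$ that are still "pinned" by the sample behaves correctly; more carefully, I would use that $\cA$ is empirically $10^{-4}$-accurate to deduce that with probability $\geq 1 - \beta/(\text{something})$ it gets $\geq 1 - \alpha$ fraction of the $m$ sample points right, and then — here is the subtle part — leverage comparison-basedness to "transfer" correctness on the sample points to correctness on the whole top portion of $B$, since all those points have the same comparison-type relative to $S$ as the relevant sample points do. Actually the cleanest phrasing may be: restrict attention to the first $d_1 - 1$ levels; a comparison-based learner's behavior there is governed by $\{p_{\vec t, 0}\}$ (location $0$, since the test point precedes all of $S$), and empirical accuracy forces $p_{\vec t, 0}$ to be close to $1$ for the correct turn — wait, this needs the test point to also be an element that could have been a sample point. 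I'd need to set up the reduction so that the points near the top of $B$ are genuinely governed by the empirical-accuracy constraint; this interplay between the "information-theoretic" regime (deep, random) and the "pinned" regime (shallow, accurate) is the heart of the argument and where I'd spend the most care. Then by a union bound over the two failure events (no deep almost-correct interval; a long almost-correct interval exists above $d_1$) each of probability $< 1/8$, the output is in $[d_1, d_m]$ with probability $\geq 3/4$.

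**Summary of the order of steps.** First, establish privacy via conditioning on $B$ and post-processing. Second, set up notation for the "information available to $\cA$" and observe that $\cA_S(x)$ for $x$ deeper than all of $S$ is within $\gamma$ of a quantity independent of $B$'s turns below depth $d_m$, while those turns are uniform; conclude via Chernoff + union bound that w.h.p. no long almost-correct interval begins below depth $d_m$. Third, use the $(\alpha,\beta)$-empirical-accuracy of $\cA$ together with comparison-basedness to show that w.h.p. the portion of $B$ above depth $d_1$ contains a long almost-correct interval, so the output is $\geq d_1$ — and in fact the output is well-defined (the set over which we take the max is nonempty in this good event; if it were empty we output $n \geq d_m$, which is actually fine for the upper bound but we want the lower bound, so this good event is what we need). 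Fourth, combine by a union bound. The main obstacle, as noted, is step three: correctly formalizing how empirical accuracy on the (few) sample points, combined with the comparison-based structure, yields a *long* run of correct predictions near the top of the branch; I'd likely need the almost-correct-interval length $l = \lfloor\log^2 n\rfloor$ to be small relative to $d_1$ (guaranteed since $d_1 \geq 1$ and we can assume $n$ large, or invoke the $d_{i+1}-d_i > \log^2 n$ spacing and $d_1 \geq \log^2 n$-type normalization implicit in applying \Cref{lemma:rescaling_ipp} downstream), and to handle the $\gamma$-slack in \Cref{def:rand_comparison_alg} throughout.
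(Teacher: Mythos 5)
Your privacy argument is exactly the paper's: couple the two runs on the same random branch $B$, note that the induced samples $S,S'$ are neighbors, invoke the $(\epsilon,\delta)$-DP of $\cA$ and post-processing (\Cref{prop:dp_post_processing}). Your upper-bound step (no long almost-correct interval starting below $d_m$) is also essentially the paper's \Cref{lemma:deep_sequences_are_not_long_almost_correct} plus a union bound over at most $n$ windows; the paper gets it even more directly than you do, since the turns of $B$ below depth $d_m$ are exactly independent fair coins given $S$, so $\Pr[h(z)=b_z]=\tfrac12$ without any appeal to comparison-basedness.

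The genuine gap is in your lower bound (output $\geq d_1$). First, you aim at the wrong region: you propose to exhibit a long almost-correct interval \emph{above} depth $d_1$ (between the root and $x_1$). But the output of \Cref{alg:ipp} is the starting depth of the \emph{deepest} long almost-correct interval, so an interval lying above $d_1$ has starting depth $<d_1$ and certifies nothing; to force the max to be at least $d_1$ you must produce a long almost-correct interval that \emph{starts} at depth $\geq d_1$, i.e.\ inside the span of the sample, between two consecutive sample points $x_i<x_{i+1}$ (whose gap exceeds $\log^2 n$ by the spacing assumption). Second, even for the region you target, the mechanism is missing: empirical $(\alpha,\beta)$-accuracy constrains $\cA$ only on the $m$ sample points, and comparison-basedness by itself does not let you transfer that to non-sample points, because a test point $x$ and a sample point $x_i$ occupy different roles relative to $S$ (you noticed this yourself mid-argument). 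The paper's proof supplies exactly this mechanism: by \Cref{lemma:good_pair}, with probability $1-2\exp(-\frac{m-3}{16})$ over the random branch there is a sign-changing pair $(x_i,x_{i+1})$ with matching-label neighbors on which $\cA$ is $\xi$-correct (this is where empirical accuracy enters, via an averaging/contradiction argument over the $\geq m/16$ such pairs); then in \Cref{lemma:A_good_pair} one replaces $(x_i,y_i)$ by its matching neighbor $(x',y_i)$ to form a neighboring sample $\tilde S$, so that $x$ and $x_i$ become test points of the \emph{same} type and location for $\tilde S$, whence comparison-basedness gives $|\cA_{\tilde S}(x)-\cA_{\tilde S}(x_i)|\leq\frac{2}{100m}$ and differential privacy pulls this back to $\cA_S$, making the whole gap $(x_i,x_{i+1})$ an $\cA$-good interval; Markov's inequality then turns the pointwise bound into a long almost-correct interval starting at depth $d_i\geq d_1$. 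Without the sign-change/matching-neighbor swap combined with privacy, your ``pinned region'' step does not go through, and without an interval starting at depth $\geq d_1$ the claimed utility bound is not established.
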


\Cref{lemma:SC_of_CB_alg} is a direct corollary of \Cref{claim:reduction,lemma:rescaling_ipp}.
In order to prove \Cref{claim:reduction}, we first need to collect some lemmas and definitions.

\begin{definition}\label{def:good_pair}
    Given a pair of consecutive examples $(x_i, x_{i+1})$ in $S$, we define the following properties.
    \begin{enumerate}
        \item \textbf{``Sign change":} The pair is considered a sign-changing pair if $y_i\neq y_{i+1}$.
        \item \textbf{``Matching neighbors":} The pair is considered to have matching-label neighbors if there exist examples $x'$ and $x''$ compatible with $B$ such that\begin{align*}
            x_{i-1}<x'<x_i &\text{ and } b_{x'}=y_i,\\
            x_{i+1}<x''<x_{i+2} &\text{ and } b_{x''}=y_{i+1}.
        \end{align*} 
        In words, there exists an example before $x_i$ with the same label as $x_i$, and there exists an example after $x_{i+1}$ with the same label as $x_{i+1}$, and in addition, these examples located in the $i-1$ and $i+1$ locations with respect to $S$, respectively.
        \item \textbf{``Correct":} The pair is considered correct if
        \begin{align*}
            |A_S(x_i)-y_i|&\leq \xi, \\
            |A_S(x_{i+1})-y_{i+1}|&\leq \xi,
        \end{align*}
        for $\xi=17(\alpha+\beta)=34\cdot10^{-4}$.
       In other words, a random hypothesis sampled from $\cA(S)$ labels correctly $x_i$ with probability at least $1-\xi$, and labels correctly $x_{i+1}$ with probability at least~$1-\xi$.
    \end{enumerate}
\end{definition}

We will show that with high probability over the randomness of $\tilde \cA$, there exists a pair of examples $(x_i,x_{i+1})$ that satisfies the above properties. Then, we will show that since $\cA$ is private and approximately comparison-based, the interval between $x_i$ and $x_{i+1}$ is a long almost-correct interval (refer to \Cref{alg:ipp} for the definition of a 'long almost-correct interval.').

\begin{lemma}\label{lemma:good_pair}
    There exists an index $i$ such that the pair of examples $(x_i,x_{i+1})$
    satisfies properties $1,2$ and $3$ as in \Cref{def:good_pair},
    with probability $\geq 1-2\exp{\left(-\frac{m-3}{16}\right)}$ where the probability is taken over the randomness of $\tilde \cA$ (i.e.\ over the choice of the random branch).
\end{lemma}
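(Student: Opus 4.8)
The plan is to isolate the only randomness in $\tilde\cA$, namely the uniformly random branch $B$, and reduce the statement to elementary facts about i.i.d.\ fair coins. Drawing $B$ uniformly is the same as tossing an independent fair coin $c_\ell\in\{0,1\}$ at each depth $\ell\in\{0,\ldots,n-1\}$ (the direction $B$ takes there); since $d_1<\cdots<d_m$ are distinct, the labels $y_i=b_{x_i}=c_{d_i}$ are i.i.d.\ uniform and independent of all coins at intermediate depths. I would also record one small fact: the map $(y_1,\ldots,y_m)\mapsto(y_1\oplus y_2,\ldots,y_{m-1}\oplus y_m,y_m)$ is a bijection on $\{0,1\}^m$, so the sign-change indicators $Z_i\coloneqq\1[y_i\neq y_{i+1}]$ are \emph{themselves} exactly i.i.d.\ uniform bits. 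Throughout I restrict to indices $i\in\{2,\ldots,m-2\}$ — the smallest range on which all three properties of \Cref{def:good_pair} are defined — and abbreviate them $\mathrm{SC}_i,\mathrm{MN}_i,\mathrm{C}_i$. The case $m\le 5$ is vacuous, since then $2\exp(-(m-3)/16)>1$, so I assume $m\ge 6$.

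The key conceptual move is to handle property $3$ (``correct'') \emph{deterministically given $S$}, not through the random hypothesis. For any realizable $S$ of size $m$, set $p_j\coloneqq\Pr_{h\sim\cA(S)}[h(x_j)\neq y_j]$. Averaging the $(\alpha,\beta)$-empirical-accuracy guarantee over $h$ (Markov's inequality applied to $\loss{S}{h}$, which is at most $1$) gives $\frac{1}{m}\sum_j p_j=\mathbb{E}_{h\sim\cA(S)}\bigl[\loss{S}{h}\bigr]\le\alpha+\beta$, hence at most $(\alpha+\beta)m/\xi=m/17$ indices $j$ satisfy $p_j>\xi$, using $\xi=17(\alpha+\beta)$. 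Unwinding the definitions ($\cA_S(x_j)=\Pr_{h\sim\cA(S)}[h(x_j)=1]$, and $h$ is $\{0,1\}$-valued after the post-processing step), the pair $(x_i,x_{i+1})$ violates $\mathrm{C}_i$ exactly when $p_i>\xi$ or $p_{i+1}>\xi$. Since $S$ (produced by the reduction) is realized by the branch $B$ itself, this bound applies; therefore, \emph{whatever branch was drawn}, at most $2m/17$ indices $i$ satisfy $\neg\mathrm{C}_i$.

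It thus suffices to show that, with probability $\ge 1-2\exp(-(m-3)/16)$, strictly more than $2m/17$ indices $i\in\{2,\ldots,m-2\}$ satisfy $\mathrm{SC}_i\wedge\mathrm{MN}_i$. For the sign changes, $\sum_{i=2}^{m-2}Z_i\sim\mathrm{Bin}(m-3,1/2)$, so a Chernoff lower-tail bound gives $\sum_i Z_i\ge(m-3)/4$ except with probability $\exp(-(m-3)/16)$, and $(m-3)/4>2m/17$ for $m\ge 6$. For the matching-neighbors conditions, $\neg\mathrm{MN}_i$ forces all $d_i-d_{i-1}-1>\log^2 n-1$ intermediate coins between depths $d_{i-1}$ and $d_i$ to miss the value $y_i$ (and likewise on the other side), which has probability at most $2\cdot 2^{-(\log^2 n-1)}$ since those coins are i.i.d.\ uniform and independent of $y_i$; a union bound over the at most $m$ relevant indices (the gap assumption forces $m<n$) makes the total matching-neighbor failure probability at most $2m\cdot 2^{-(\log^2 n-1)}$, which is dominated by $\exp(-(m-3)/16)$ once $n$ is large enough — in the intended application $n$ is astronomically larger than $m$, so this is comfortably satisfied. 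On the intersection of these two high-probability events all $\mathrm{MN}_i$ hold and at least $(m-3)/4>2m/17$ indices satisfy $\mathrm{SC}_i$; since at most $2m/17$ indices are $\neg\mathrm{C}_i$, some index satisfies all three properties.

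The main obstacle is the bookkeeping in the last paragraph: one must verify that the guaranteed supply $(m-3)/4$ of ``$\mathrm{SC}\wedge\mathrm{MN}$'' indices genuinely beats the number $2m/17$ of indices that the empirical-accuracy constants allow to be non-correct, and that the matching-neighbor failure term is absorbed into $\exp(-(m-3)/16)$. Once property $3$ is recognized as deterministic-given-$S$ (controlled by averaging the empirical learner's guarantee) and the sign-change indicators are seen to be i.i.d.\ fair bits, nothing deeper is required — just a Chernoff bound and a union bound with the constants lined up.
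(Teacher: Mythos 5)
Your strategy is sound and close in spirit to the paper's: the uniform branch makes the labels i.i.d.\ fair bits, the empirical-accuracy guarantee is used (deterministically given $S$) to bound the number of indices with $|\cA_S(x_j)-y_j|>\xi$ by $m/17$, and a Chernoff bound supplies at least $(m-3)/4>2m/17$ sign changes for $m\geq 6$; that counting is correct. The genuine gap is your treatment of the matching-neighbor events. You dispose of them by a union bound, incurring an additive failure probability of roughly $2m\cdot 2^{-(\log^2 n-1)}$, and then assert that this is ``dominated by $\exp(-(m-3)/16)$ once $n$ is large enough.'' That domination is false in part of the regime permitted by the hypotheses: the only constraint tying $m$ to $n$ is the gap condition $d_{i+1}-d_i>\log^2 n$, so $m$ may be as large as about $n/\log^2 n$, and already for $m\gg\log^2 n$ (say $m=\log^3 n$) one has $\exp(-(m-3)/16)\ll 2m\cdot 2^{-(\log^2 n-1)}$; no choice of ``$n$ large enough'' can make the inequality hold uniformly in $m$. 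As written, your argument therefore proves only the weaker bound $1-\exp\left(-\frac{m-3}{16}\right)-2m\cdot 2^{-(\log^2 n-1)}$, not the stated $1-2\exp\left(-\frac{m-3}{16}\right)$. (The weaker bound would in fact still suffice for \Cref{claim:reduction}, where the failure terms are simply summed, but it does not establish the lemma as stated.)

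The paper avoids this by never union-bounding over the matching-neighbor events. It strengthens property~2 inside the proof by requiring the matching neighbors to lie within distance $(\log^2 n)/2$, defines per-index indicators for $\mathsf{SC}_j\cap\mathsf{MN}_j$, and notes that, thanks to the gap condition, the indicators at alternating indices are i.i.d.\ with mean at least $3/8$; Chernoff applied to their sum then yields the two-sided error $2\exp\left(-\frac{m-3}{16}\right)$ for every admissible $m$, because the matching-neighbor deficit only depresses the mean (by about $2^{-D}$ with $D=\lfloor(\log^2 n)/2\rfloor$) rather than contributing an $m$-dependent additive error. (The paper then finishes with a contradiction against the empirical-accuracy bound, using that the alternating pairs are disjoint; your direct count of at most $2m/17$ non-correct pairs is an equivalent and slightly cleaner way to conclude.) To repair your proof, either adopt this device, or make explicit an additional restriction such as $m\le\log^2 n$ (arguing separately why that suffices for the intended application), or weaken the stated probability bound; the appeal to ``$n$ astronomically larger than $m$'' is not available from the lemma's hypotheses.
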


\begin{proof} 
    For $1< j < m-1$, we define the following good events:
    \begin{align*}
        \mathsf{SC}_j &~: &&\text{The pair $(x_j,x_{j+1})$ satisfies property $1$ (sign change).}\\
        \mathsf{MN}_j &~: &&\text{The pair $(x_j,x_{j+1})$ satisfies property $2$ (matching neighbors), and in addition}\\
         & &&\mathtt{dist}_T(x',x_j),\mathtt{dist}_T(x_{j+1},x'')\leq(\log^2 n) / 2,
    \end{align*}
    where the distance between two comparable examples $z_1,z_2$ in $T$ is $\mathtt{dist}_T(z_1,z_2)\coloneqq{|\mathtt{depth}(z_1)-\mathtt{depth}(z_2)|}$.
    Next, define random variables $X_j=\1[\mathsf{SC}_j\cap \mathsf{MN}_j]$ for $1< j < m-1$.
    Note that: 
    \begin{itemize} 
        \item [(a)] $X_j$ and $X_{j'}$ are independent if $|j-j'|>1$, because the distance between every consecutive examples $x_j,x_{j+1}$ in $S$ is at least $\log^2 n$.
        In particular, $\{X_{2k}\}_{k=1}^{\lfloor(m-2)/2\rfloor}$ are IID.
        \item[(b)] Denote $D=\lfloor(\log^2 n)/2\rfloor$. Then, since the branch $B$ is sampled uniformly at random,
        \begin{align*} 
            \bbE[X_j]&=1-\left(\Pr[(\mathsf{SC}_j)^\complement]+\Pr[\mathsf{SC}_j]\cdot\Pr[(\mathsf{MN}_j)^\complement]\right) \tag{ $\mathsf{SC}_j,\mathsf{MN}_j$ are independent}\\
            &=1-\left(\frac{1}{2}+\frac{1}{2}\left(2 \cdot 2^{-D}-2^{-2D}\right)\right)\\
            &=\frac{1}{2}-2^{-D}+2^{-(2D+1)}\\
        \end{align*} 
    \end{itemize}
    
    Let $X=\sum_{k=1}^{\lfloor\frac{m-2}{2}\rfloor} X_{2k}$. By Chernoff,
    \begin{align*}
        \Pr\left[\left|X- {\left\lfloor\frac{m-2}{2}\right\rfloor}\cdot \bbE[X_j]\right|\geq \left\lfloor\frac{m-2}{2}\right\rfloor\cdot \frac{1}{4}\right]&\leq 2 \exp{\left(-\frac{1}{8}\left\lfloor\frac{m-2}{2}\right\rfloor\right)}\\
        &\leq 2 \exp{\left(-\frac{m-3}{16}\right)}.
    \end{align*}
    Note that 
    \begin{align*}
        \Pr\left[X> \left\lfloor\frac{m-2}{2}\right\rfloor\cdot 
        \underbrace{\left(\frac{1}{2}-\frac{1}{2^D}+\frac{1}{2^{2D+1}}\right)}_{\bbE[X_j]}
        -\left\lfloor\frac{m-2}{2}\right\rfloor\cdot\frac{1}{4}\right]&\leq
        \Pr\left[X > \frac{m-2}{2}\cdot\frac{3}{8}-\frac{m-3}{2}\cdot \frac{1}{4}\right]\\
        &=\Pr\left[X > \frac{m}{16}\right],
    \end{align*}
    where the inequality holds if $\bbE[X_j]\geq 3/8$, which holds for $n\geq 8$.
Therefore, with probability at least $1-2\exp{\left(-\frac{m-3}{16}\right)}$,
$X> m/16$, i.e.\ the number of pairs $(x_j,x_{j+1})$ that satisfy properties~$1$ and~$2$ is at least $m/16$.

Consider the case when there exist at least $m/16$ pairs satisfying properties~$1$ and~$2$, and assume towards contradiction that all of the pairs satisfying properties~$1$ and~$2$ do not satisfy property~$3$. I.e., if~$(x_j,x_{j+1})$ is a sign-changing pair with matching-label neighbors, then either ${|A_S(x_j)-y_j|> \xi}$, or ${|A_S(x_{j+1})-y_{j+1}|> \xi}$.
    Note that since $\cA$ is an $(\alpha,\beta)$-empirical learner,
    \begin{align*}
        \bbE_{h\sim\cA(S)}[\loss{S}{h}]=\frac{1}{m}\sum_{i=1}^m \Pr_{h\sim\cA(S)}[h(x_i)\neq y_i]=\frac{1}{m}\sum_{i=1}^m |\cA_S(x_i)-y_i|\leq\alpha+\beta,
    \end{align*}
    and by the assumption above it holds that
    \begin{align*}
        \frac{1}{m}\sum_{i=1}^m |\cA_S(x_i)-y_i|>\frac{1}{16}\xi.
    \end{align*}
    But this is a contradiction to the choice of $\xi=34\cdot 10^{-4},\alpha=10^{-4},\beta=10^{-4}$.
    Therefore, with probability at least $1-2\exp{\left(-\frac{m-3}{16}\right)}$,
    there exists $1<i<m-1$ such that the pair of examples $(x_i,x_{i+1})$ satisfy properties $1,2$ and $3$ as wanted.

\end{proof}

\begin{definition}
    A pair of consecutive points $(x_i,x_{i+1})$ on $S$ is $\cA$-good with parameter $\xi'$ if the following holds.
    For every compatible point $x$ between~$x_i$ and~$x_{i+1}$ (i.e.\ $x_i<x<x_{i+1}$ under the partial order induced by $T$), 
    \[|\cA_S(x)-b_x|\leq \xi'.\]
\end{definition}

\begin{lemma}\label{lemma:A_good_pair}
    There exists a pair of consecutive points $(x_i,x_{i+1})$ in $S$ that is $\cA$-good with parameter $\xi'=2(e^\epsilon-1+\delta(m))+\frac{2}{100m}+18(\alpha+\beta)$, with probability~${\geq 1-2\exp{\left(-\frac{m-3}{16}\right)}}$ over the randomness of~$\tilde\cA$.
\end{lemma}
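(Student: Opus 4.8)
The plan is to take the pair $(x_i,x_{i+1})$ provided by \Cref{lemma:good_pair} — with probability at least $1-2\exp\bigl(-\tfrac{m-3}{16}\bigr)$ over the random branch $B$ there is an index $i$ for which $(x_i,x_{i+1})$ satisfies properties $1,2,3$ of \Cref{def:good_pair} — and to show that, \emph{once $B$ (hence $S$ and the numbers $\cA_S(\cdot)$) is fixed}, any such pair is automatically $\cA$-good with parameter $\xi'$; the probability bound is then inherited verbatim. So fix such an $i$ and let $x$ be a compatible point with $x_i<x<x_{i+1}$. Then $x$ lies on the tree-path from $x_i$ to $x_{i+1}$, hence on $B$, so $b_x$ is defined; and by property~$1$, $y_i\neq y_{i+1}$, whence $b_x\in\{y_i,y_{i+1}\}$. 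The two cases are symmetric, so suppose $b_x=y_i$.

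First I would invoke the matching neighbour $x'$ from property~$2$: $x_{i-1}<x'<x_i$, $b_{x'}=y_i$, $x'\notin\{x_1,\dots,x_m\}$. Let $S'$ be $S$ with the $i$-th example $(x_i,y_i)$ replaced by $(x',y_i)$; this sample has size $m$, differs from $S$ in a single example, and is realized by the same branch $B$ (since $(x',b_{x'})\in B$). The key point is a structural identity: in $S'$ the instance $x_i$ is a compatible test point with $\mathtt{loc}_{S'}(x_i)=i=\mathtt{loc}_S(x)$ and $\vec t\bigl((S')^{+x_i}\bigr)=\vec t\bigl(S^{+x}\bigr)$, because $x'$ sits in the same direction from $x_{i-1}$ as $x_i$ does while $x_{i+1}$ and $x$ both continue in the $y_i$ direction — so inserting $x_i$ below $x'$ yields exactly the same chain-type and location as inserting $x$ below $x_i$. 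Hence \Cref{def:rand_comparison_alg} (with $\gamma=\tfrac{1}{100m}$) gives $\bigl|\cA_S(x)-\cA_{S'}(x_i)\bigr|\le\tfrac{2}{100m}$, both being within $\gamma$ of the same $p_{\vec t,i}$. Next, $(\epsilon,\delta(m))$-differential privacy applied to the neighbouring $S,S'$ and the event $\{h:h(x_i)=r_{x_i}\}$ gives $\cA_{S'}(x_i)\overset{\epsilon,\delta(m)}{\approx}\cA_S(x_i)$; combining with property~$3$, $\bigl|\cA_S(x_i)-y_i\bigr|\le\xi=17(\alpha+\beta)$, and a one-line computation (treating $y_i=0$ and $y_i=1$ separately, using that all the quantities lie in $[0,1]$) yields $\bigl|\cA_{S'}(x_i)-y_i\bigr|\le(e^{\epsilon}-1+\delta(m))+\xi$. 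By the triangle inequality $\bigl|\cA_S(x)-b_x\bigr|\le\tfrac{2}{100m}+(e^{\epsilon}-1+\delta(m))+17(\alpha+\beta)\le\xi'$.

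The case $b_x=y_{i+1}$ is identical with the roles flipped, using the matching neighbour $x''$ ($x_{i+1}<x''<x_{i+2}$, $b_{x''}=y_{i+1}$): replace $(x_{i+1},y_{i+1})$ by $(x'',y_{i+1})$, observe that then $x_{i+1}$ is a compatible test point for $S'$ with location $i$ and chain-type $\vec t\bigl((S')^{+x_{i+1}}\bigr)=\vec t(S^{+x})$, and conclude via the comparison-based property, privacy, and property~$3$ for $x_{i+1}$ exactly as above. Since the two cases exhaust all compatible $x$ strictly between $x_i$ and $x_{i+1}$, the pair is $\cA$-good with parameter $\xi'$, proving the lemma. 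The one step I expect to require real care is the chain-type/location identity between ``$x$ in $S$'' and ``$x_i$ (resp.\ $x_{i+1}$) in $S'$'': it is elementary bookkeeping against the identification of chains with realizable samples and \Cref{def:type_subset}, but it is exactly what the matching-neighbour clause of \Cref{def:good_pair} is designed to make available. (Incidentally, the bound obtained is slightly stronger than $\xi'$, which carries $2(e^{\epsilon}-1+\delta(m))$ and $18(\alpha+\beta)$, so there is room to spare.)
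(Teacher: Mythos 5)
Your proof is correct and takes essentially the same route as the paper's: the paper also forms the neighboring sample (its $\tilde S$) by swapping $(x_i,y_i)$ for $(x',y_i)$, uses the equality of chain-type and location to invoke the comparison-based property, and combines it with differential privacy and property 3 of \Cref{def:good_pair} via the triangle inequality, inheriting the probability bound from \Cref{lemma:good_pair}. The only difference is bookkeeping: the paper matches $x$ and $x_i$ as two test points for the \emph{same} sample $\tilde S$ and then applies privacy twice (hence the $2(e^\epsilon-1+\delta(m))$ term in $\xi'$), whereas you match $(S,x)$ with $(\tilde S,x_i)$ through the common value $p_{\vec t,i}$ and apply privacy once, giving a marginally smaller bound that still fits within $\xi'$.
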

\begin{proof}
    By \Cref{lemma:good_pair}, with high probability at least $1-2\exp{\left(-\frac{m-3}{16}\right)}$ over the choice of the sample~$S$,
    there exists a pair of examples $(x_i,x_{i+1})$ such that
    \begin{enumerate}
        \item $y_i\neq y_{i+1}$;
        \item there exist $x_{i-1}<x'<x_i$ and $x_{i+1}<x''<x_{i+2}$, such that $b_{x'}=y_i$ and $b_{x''}=y_{i+1}$;
        \item $|\cA_S(x_i)-y_i|\leq \xi$ and $|\cA_S(x_{i+1})-y_{i+1}|\leq \xi$.\label{eq:correct}
    \end{enumerate}
    It suffices to show that conditioned on this event there exists a $\cA$-good consecutive pair with parameter $\xi'$.
    Let $x$ be a compatible example between $x_i$ and $x_{i+1}$ such that~$b_{x}=y_i$.
    Denote by $\tilde S$ the sample that is obtained by replacing $(x_i,y_i)$ in $S$ with $(x',y_i)$, where $x'$ is the matching-label neighbor of~$x_i$.
    Observe that
    \begin{align*}
        \vec t (\tilde S^{+x})=\vec t(\tilde S^{+x_i}), \text{ and}\\
        \mathtt{loc}_{\tilde S}(x)=\mathtt{loc}_{\tilde S}(x_i)=i,
    \end{align*}
    where $\vec t(\tilde S^{+z})$ is the type of $\tilde S$ together with $z$, and $\mathtt{loc}_{\tilde S}(z)$ is the location of $z$ inside~$\tilde S$.
    See \Cref{fig:4} for an illustration.
    Therefore, since $\cA$ is a $\left(\frac{1}{100m}\right)$-comparison-based algorithm on $T$,
    \begin{equation}\label{eq:C-B}
        |\cA_{\tilde S}(x)-\cA_{\tilde S}(x_i)|\leq \frac{2}{100m}.
    \end{equation}

\begin{figure}[!h]
\centering
\begin{tikzpicture}
\begin{scope}[rotate around={90:(0,0)}]
\foreach \x in {-2,...,5}{
  \draw  (0+\x*1,0) -- ++(0.5,1) -- ++ (0.5, -1);
}
\end{scope}
\fill [color={blue}] (-0.95,5.5) circle (0.1cm);
\node [font=\normalsize] at (-1.4,5.6) {$x'$};
\draw [ color={cyan}, line width=2pt, ->] (-0.95,5.5) -- (-0.3,5.15);

\fill [color={blue}] (-0.95,3.5) circle (0.1cm);
\node [font=\normalsize] at (-1.4,3.6) {$x_i$};
\draw [ color={cyan}, line width=2pt, ->] (-0.95,3.5) -- (-0.3,3.15);

\fill [color={blue}] (-0.95,1.5) circle (0.1cm);
\node [font=\normalsize] at (-1.4,1.6) {$x$};
\draw [ color={cyan}, line width=2pt, ->] (-0.95,1.5) -- (-0.3,1.15);

\fill [color={blue}] (0,-1) circle (0.1cm);
\node [font=\normalsize] at (0.8,-1.05) {$x_{i+1}$};

\node [font=\normalsize, color={purple}] at (1.7,5.55) {$i$'th example in $\tilde{S}$};
\draw [ color={purple}, line width=1pt, ->] (0.25,5.5) -- (-0.5,5.5);

\node [font=\normalsize, color={purple}] at (3.2,-1) {$(i+1)$'th example in $\tilde{S}$};
\draw [ color={purple}, line width=1pt, ->] (0.25,5.5) -- (-0.5,5.5);

\end{tikzpicture}
\caption{Both $x$ and $x_i$ lies between the $i$th and the $(i+1)$th examples of $\tilde S$, and both $(x_i,x_{i+1})$ and $(x,x_{i+1})$ are $y_i$-pairs in $T$.}
\label{fig:4}
\end{figure}
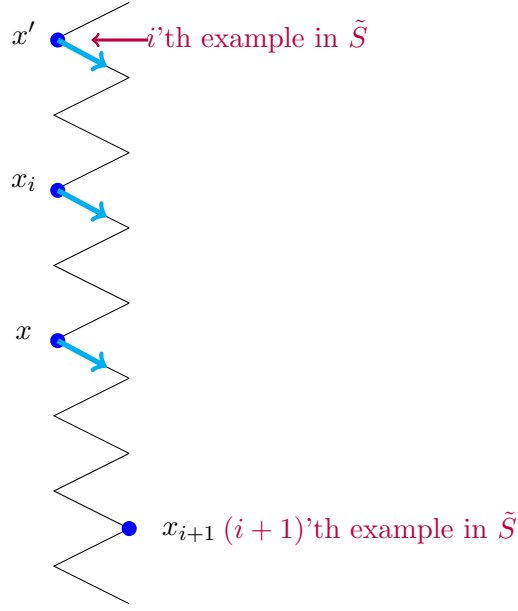

    Moreover, since $\cA$ is $(\epsilon,\delta)$-differentially private, 
    $\cA_{S}(x)\overset{\epsilon,\delta}{\approx}\cA_{\tilde S}(x)$ and $\cA_{S}(x_i)\overset{\epsilon,\delta}{\approx}\cA_{\tilde S}(x_i)$.
    Note that for every $\epsilon,\delta\geq 0$, if $a,b\in[0,1]$ satisfy $a\overset{\epsilon,\delta}{\approx}b$, then $|a-b|\leq e^\epsilon-1+\delta$. Indeed,
    \begin{align*}
            a-b\leq (e^\epsilon-1)\cdot b +\delta\leq e^\epsilon-1+\delta,\\
            b-a\leq (e^\epsilon-1)\cdot a +\delta\leq e^\epsilon-1+\delta.
        \end{align*}

    Therefore,
    \begin{align}
        |\cA_{S}(x)-\cA_{\tilde S}(x)|\leq e^\epsilon-1+\delta,\label{eq:DP1}\\ 
        |\cA_{S}(x_i)-\cA_{\tilde S}(x_i)|\leq e^\epsilon-1+\delta\label{eq:DP2}.
    \end{align}

    Finally, by \Cref{eq:C-B,eq:DP1,eq:DP2,eq:correct}
    \begin{align*}
        |\cA_S(x)-y_i|&\leq |\cA_S(x)-\cA_{\tilde S}(x)| + |\cA_{\tilde S}(x)-\cA_{\tilde S}(x_i)| + |\cA_{\tilde S}(x_i)-\cA_S(x_i)| + |\cA_S(x_i)-y_i|\\
        &\leq 2(e^\epsilon-1+\delta)+\frac{2}{100m}+\xi\\
        &\leq \xi',
    \end{align*}
    by the choice of $\xi'$.
    
    The case $b_{x}=y_{i+1}$ is symmetric, therefore $(x_i,x_{i+1})$ is $\cA$-good pair as wanted.
\end{proof}

\begin{lemma}\label{lemma:deep_sequences_are_not_long_almost_correct}
   Let $l=\lfloor\log^2 n\rfloor$ and let $Z=(z_1<z_2<\ldots<z_l)$ be a sequence of consecutive examples on~$B$. Assume that $Z$ starts below $S$ on $T$, i.e.\ $x_{m}<z_1$. Then, 
   \[\Pr_{B,h\sim \cA(S)}\left[ \sum_{i=1}^l \1[h(z_i)=b_{z_i}] \geq 0.9\cdot l\right]\leq \exp{\left(-2\cdot\frac{4}{25}\cdot l\right)}.\]
\end{lemma}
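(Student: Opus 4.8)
The point of this lemma is that $\cA$ only ever sees the prefix of the random branch $B$ down to depth $d_m$ (this is exactly the part of $B$ encoded by the sample $S$), so on vertices lying strictly below $x_m$ it is effectively predicting against fresh fair coins. To make this precise I would first condition on $S$. Sampling a uniformly random branch of $T$ is the same as choosing, independently and uniformly, a turn direction $\beta_j\in\{0,1\}$ at each level $j$; conditioning on $S$ pins down $\beta_1,\dots,\beta_{d_m}$ (these determine $x_1,\dots,x_m$ together with $y_1=b_{x_1},\dots,y_m=b_{x_m}$) but leaves $\beta_{d_m+1},\beta_{d_m+2},\dots$ IID uniform on $\{0,1\}$. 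Moreover $h\sim\cA(S)$ is produced using internal coins of $\cA$ that are independent of $B$, so conditionally on $S$ the hypothesis $h$ is independent of $(\beta_{d_m+1},\beta_{d_m+2},\dots)$. It therefore suffices to bound the bad-event probability conditionally on $S$ and on $h$ by $\exp(-2\cdot\tfrac{4}{25}\cdot l)$, since that bound is then uniform over the conditioning and passes to the unconditional probability.

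Next I would set up a martingale. Writing $p$ for the depth of $z_1$, the hypothesis $x_m<z_1$ says exactly that $p>d_m$, and then $z_i$ is the depth-$(p+i-1)$ vertex of $B$ while $b_{z_i}=\beta_{p+i-1}$. Let $\mathcal{F}_j=\sigma(\beta_{d_m+1},\dots,\beta_j)$ and $W_i=\1[h(z_i)=b_{z_i}]$. Revealing the turns in order of increasing depth: $z_i$ is reached from $x_m$ by following $\beta_{d_m+1},\dots,\beta_{p+i-2}$, hence is $\mathcal{F}_{p+i-2}$-measurable, so $h(z_i)$ is a fixed element of $\cY$ given $\mathcal{F}_{p+i-2}$; whereas $b_{z_i}=\beta_{p+i-1}$ is a fresh fair coin independent of $\mathcal{F}_{p+i-2}$. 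Consequently
\[
\EE{W_i \mid \mathcal{F}_{p+i-2}}=\Pr\!\left[\beta_{p+i-1}=h(z_i)\,\middle|\,\mathcal{F}_{p+i-2}\right]\le \tfrac12 ,
\]
and since $W_1,\dots,W_{i-1}$ are $\mathcal{F}_{p+i-2}$-measurable, $\sum_{i=1}^l W_i$ is stochastically dominated by $\operatorname{Bin}(l,\tfrac12)$; equivalently, the standard Hoeffding computation gives $\EE{e^{s(\sum_{i=1}^l W_i-l/2)}}\le e^{ls^2/8}$ for every $s\ge 0$.

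It remains to apply the Chernoff--Hoeffding bound with deviation $0.9-\tfrac12=\tfrac25$:
\[
\Pr\!\left[\textstyle\sum_{i=1}^l W_i\ge 0.9\,l \,\middle|\, S,h\right]\le \Pr\!\left[\operatorname{Bin}(l,\tfrac12)\ge 0.9\,l\right]\le \exp\!\left(-2\left(0.9-\tfrac12\right)^2 l\right)=\exp\!\left(-2\cdot\tfrac{4}{25}\cdot l\right).
\]
Averaging over $S$ and $h$ yields the claimed inequality. I do not anticipate a genuine obstacle here: the only subtle points are (i) the claim that conditioning on $S$ leaves the deeper turns of $B$ IID uniform and independent of $h$, and (ii) the sequential-revelation argument, which shows each $W_i$ behaves like a fair (sub-)coin given the history even though the identity of $z_i$ is itself random — both of which are immediate once one uses the ``uniform branch $=$ level-by-level IID turns'' description. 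Note in particular that neither privacy, accuracy, nor the approximately comparison-based property of $\cA$ is used in this lemma; those enter only in the analysis of the intervals near where $B$ meets $S$.
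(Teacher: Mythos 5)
Your proposal is correct and follows essentially the same route as the paper: both view the uniform branch as level-by-level IID fair turns, observe that conditioned on $S$ the turns below depth $d_m$ are fresh coins independent of $h\sim\cA(S)$, and finish with a Chernoff--Hoeffding bound for deviation $0.9-\tfrac12=\tfrac25$. Your sequential-revelation/martingale step merely makes rigorous what the paper dispatches as ``a standard Chernoff bound,'' so there is no substantive difference.
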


\begin{proof}
    We can conceptualize the randomness of the reduction algorithm $\tilde \cA$
    in the following manner: Initially, a fair coin is independently tossed $d_m + 1$ times (recall that $d_m$ is the largest input). These coin tosses determine the first $d_m + 1$ turns of the branch $B$, which in turn determines the sample $S$, the input for $\cA$. Subsequently, the coin is independently tossed $n - d_m - 1$ more times, completing the determination of the branch $B$.
    This illustrates that, for every example $z$ that is below $S$, $A_S(z)$ is independent of $b_z$, and  $\Pr[h(z)=b_{z}]=1/2$. Therefore, $\bbE\left[\sum_{i=1}^l \1[h(z_i)=b_{z_i}]\right]=l/2$,
    and the statement follows by applying a standard Chernoff bound.
\end{proof}

We turn to prove \Cref{claim:reduction}.

\begin{proof}[Proof of \Cref{claim:reduction}]
    
    We start by showing that $\tilde \cA$ is $(\epsilon,\delta(m))$-differentially private, and then we will show that with probability at least $3/4$ its output is an interior point.
    
    \paragraph{Privacy:}
    Let \(D\) and \(D'\) be neighboring datasets. Consider the output distributions \(\tilde{\mathcal{A}}(D)\) and \(\tilde{\mathcal{A}}(D')\). We couple these distributions by selecting the same random branch \(B\) in the first step of the algorithm. Hence, the samples \(S\) and \(S'\) that are input to \(\mathcal{A}\) are also neighbors. Since \(\mathcal{A}\) is \((\epsilon, \delta)\)-DP, it follows that the distributions \(\mathcal{A}(S)\) and \(\mathcal{A}(S')\) are \((\epsilon, \delta)\)-indistinguishable. Since the outputs \(\tilde{\mathcal{A}}(D)\) and \(\tilde{\mathcal{A}}(D')\) are functions of \(\mathcal{A}(S)\) and \(\mathcal{A}(S')\), by post-processing (\Cref{prop:dp_post_processing}), it follows that \(\tilde{\mathcal{A}}(D)\) and \(\tilde{\mathcal{A}}(D')\) are also \((\epsilon, \delta)\)-indistinguishable. Hence, \(\tilde{\mathcal{A}}\) is \((\epsilon, \delta)\)-DP.

    \paragraph{Utility:}
    Observe the following.
    \begin{enumerate}
        \item  If $(x_i,x_{i+1})$ is an $\cA$-good pair with parameter $\xi'$, 
        then for any sequence of consecutive examples $Z=(z_1,\ldots,z_l)$ of length $l=\lfloor \log^2 n\rfloor$, such that $x_i\leq z_1<z_l\leq x_{i+1}$, it holds that
    \[\bbE_{B,h\sim \cA(S)} \left[\sum_{i=1}^l\1[h(z_i)\neq b_{z_i}]~\Big|~ \text{$(x_i,x_{i+1})$ is $\cA$-good with parameter $\xi'$}\right]=\sum_{i=1}^l|\cA_S(z_i)-b_{z_i}|\leq \xi'\cdot l.\]
    By Markov's inequality,
    \[\Pr\left[\sum_{i=1}^l\1[h(z_i)\neq b_{z_i}]\geq 0.1\cdot l\right]\leq 10\cdot \xi'.\]
    \item By \Cref{lemma:A_good_pair} there are no $\cA$-good pairs with probability $\leq 2\exp{\left(-\frac{m-3}{16}\right)}$.
    \item By \Cref{lemma:deep_sequences_are_not_long_almost_correct},
    \begin{align*}
        \Pr\left[  \text{$\exists$ a long almost-correct $Z=(z_1,\ldots, z_l)$  that starts below $S$}  \right]\leq n\cdot \exp{\left(-\frac{8}{25}\log^2 n\right)}
    \end{align*}
    \end{enumerate}
    Therefore, by applying a union bound,
    \begin{align*}
        \Pr\left[\substack{\text{$\tilde \cA$ returns a point that is}\\\text{  not between $d_1$ and $d_m$}}\right]&\leq 10\xi'+ 2\exp{\left(-\frac{m-3}{16}\right)} + n\cdot \exp{\left(-\frac{8}{25}\log^2 n\right)}\\
        &= 20(e^\epsilon-1+\delta(m))+\frac{20}{100m}+180(\alpha+\beta)+2\exp{\left(-\frac{m-3}{16}\right)} + \\
        &+n\cdot \exp{\left(-\frac{8}{25}\log^2 n\right)}.
    \end{align*}
    Each one of the five summands is smaller than $1/20$, by the choice of $\epsilon=10^{-3},\delta(m)=1/10^3m, \alpha=10^{-4}, \beta=10^{-4}$, and for large enough\footnote{Note that, without loss of generality, we may assume that $m\geq 1/\alpha=10^4$ since $\cA$ is an $(\alpha, \beta)$-accurate learner for $T$. Additionally, the assumption that $n$ is large enough is concealed in the big $\Omega$ notation.} $m$ and $n$.
    Therefore, the claim follows.
    \end{proof}

    \section*{Acknowledgements}
    We thank Noga Alon, Zachary Chase, Roi Livni, and Uri Stemmer, for insightful discussions.

    SM is a Robert J.\ Shillman Fellow; he acknowledges support by ISF grant 1225/20, by BSF grant 2018385, by an Azrieli Faculty Fellowship, by Israel PBC-VATAT, by the Technion Center for Machine Learning and Intelligent Systems (MLIS), and by the the European Union (ERC, GENERALIZATION, 101039692). Views and opinions expressed are however those of the author(s) only and do not necessarily reflect those of the European Union or the European Research Council Executive Agency. Neither the European Union nor the granting authority can be held responsible for them.

    \bibliographystyle{plainnat}
    \bibliography{references}

\begin{thebibliography}{56}
\providecommand{\natexlab}[1]{#1}
\providecommand{\url}[1]{\texttt{#1}}
\expandafter\ifx\csname urlstyle\endcsname\relax
  \providecommand{\doi}[1]{doi: #1}\else
  \providecommand{\doi}{doi: \begingroup \urlstyle{rm}\Url}\fi

\bibitem[Aden-Ali et~al.(2023)Aden-Ali, Cherapanamjeri, Shetty, and Zhivotovskiy]{Ali23}
Ishaq Aden-Ali, Yeshwanth Cherapanamjeri, Abhishek Shetty, and Nikita Zhivotovskiy.
\newblock Optimal pac bounds without uniform convergence.
\newblock In \emph{Proc.\ 64th Symp.\ Foundations of Computer Science (FOCS)}, pages 1203--1223, 2023.

\bibitem[Alon and Maass(1986)]{AlonM86}
Noga Alon and Wolfgang Maass.
\newblock Meanders, ramsey theory and lower bounds for branching programs.
\newblock In \emph{Proc.\ 27th Symp.\ Foundations of Computer Science (FOCS)}, pages 410--417, 1986.

\bibitem[Alon and Spencer(2008)]{AlonBook}
Noga Alon and Joel~H. Spencer.
\newblock \emph{The Probabilistic Method, Third Edition}.
\newblock Wiley-Interscience series in discrete mathematics and optimization. Wiley, 2008.

\bibitem[Alon et~al.(2019)Alon, Livni, Malliaris, and Moran]{AlonLMM19}
Noga Alon, Roi Livni, Maryanthe Malliaris, and Shay Moran.
\newblock Private {PAC} learning implies finite littlestone dimension.
\newblock In \emph{Proc.\ 51st Symp.\ Theory of Computing (STOC)}, pages 852--860, 2019.

\bibitem[Alon et~al.(2021)Alon, Hanneke, Holzman, and Moran]{AlonHHM21}
Noga Alon, Steve Hanneke, Ron Holzman, and Shay Moran.
\newblock A theory of {PAC} learnability of partial concept classes.
\newblock In \emph{Proc.\ 62nd Symp.\ Foundations of Computer Science (FOCS)}, pages 658--671, 2021.

\bibitem[Alon et~al.(2022)Alon, Bun, Livni, Malliaris, and Moran]{AlonBLMM22}
Noga Alon, Mark Bun, Roi Livni, Maryanthe Malliaris, and Shay Moran.
\newblock Private and online learnability are equivalent.
\newblock \emph{J. {ACM}}, 69\penalty0 (4):\penalty0 28:1--28:34, 2022.

\bibitem[Alon et~al.(2023)Alon, Moran, Schefler, and Yehudayoff]{alon2023unified}
Noga Alon, Shay Moran, Hilla Schefler, and Amir Yehudayoff.
\newblock A unified characterization of private learnability via graph theory, 2023.

\bibitem[Barak et~al.(2010)Barak, Kindler, Shaltiel, Sudakov, and Wigderson]{BarakKSSW10}
Boaz Barak, Guy Kindler, Ronen Shaltiel, Benny Sudakov, and Avi Wigderson.
\newblock Simulating independence: New constructions of condensers, ramsey graphs, dispersers, and extractors.
\newblock \emph{J. {ACM}}, 57\penalty0 (4):\penalty0 20:1--20:52, 2010.

\bibitem[B{\'{a}}r{\'{a}}ny and K{\'{a}}rolyi(2000)]{Barany2000}
Imre B{\'{a}}r{\'{a}}ny and Gyula K{\'{a}}rolyi.
\newblock Problems and results around the erd{\"{o}}s-szekeres convex polygon theorem.
\newblock In \emph{Proc.\ Japanese Conf., Discrete and Computational Geometry (JCDCG)}, pages 91--105, 2000.

\bibitem[Beimel et~al.(2013)Beimel, Nissim, and Stemmer]{beimel2013characterizing}
Amos Beimel, Kobbi Nissim, and Uri Stemmer.
\newblock Characterizing the sample complexity of private learners.
\newblock In \emph{ITCS}. ACM, 2013.

\bibitem[Beimel et~al.(2019)Beimel, Nissim, and Stemmer]{Beimel19Pure}
Amos Beimel, Kobbi Nissim, and Uri Stemmer.
\newblock Characterizing the sample complexity of pure private learners.
\newblock \emph{Journal of Machine Learning Research}, 20\penalty0 (146):\penalty0 1--33, 2019.
\newblock URL \url{http://jmlr.org/papers/v20/18-269.html}.

\bibitem[Ben{-}David et~al.(2009)Ben{-}David, P{\'{a}}l, and Shalev{-}Shwartz]{Ben-DavidPS09}
Shai Ben{-}David, D{\'{a}}vid P{\'{a}}l, and Shai Shalev{-}Shwartz.
\newblock Agnostic online learning.
\newblock In \emph{Proc.\ 22nd Conf.\ Learning Theory (COLT)}, 2009.

\bibitem[Brukhim et~al.(2022)Brukhim, Carmon, Dinur, Moran, and Yehudayoff]{brukhim2022characterization}
Nataly Brukhim, Daniel Carmon, Irit Dinur, Shay Moran, and Amir Yehudayoff.
\newblock A characterization of multiclass learnability.
\newblock In \emph{Proc.\ 63rd Symp.\ Foundations of Computer Science (FOCS)}, pages 943--955. IEEE, 2022.

\bibitem[Bun(2016)]{BunThesis}
Mark Bun.
\newblock \emph{New Separations in the Complexity of Differential Privacy.}
\newblock PhD thesis, Harvard University, Graduate School of Arts \& Sciences, 2016.

\bibitem[Bun et~al.(2015)Bun, Nissim, Stemmer, and Vadhan]{BunNSV15}
Mark Bun, Kobbi Nissim, Uri Stemmer, and Salil~P. Vadhan.
\newblock Differentially private release and learning of threshold functions.
\newblock In \emph{Proc.\ 56th Symp.\ Foundations of Computer Science (FOCS)}, pages 634--649, 2015.

\bibitem[Bun et~al.(2020)Bun, Livni, and Moran]{BunLM20}
Mark Bun, Roi Livni, and Shay Moran.
\newblock An equivalence between private classification and online prediction.
\newblock In Sandy Irani, editor, \emph{61st {IEEE} Annual Symposium on Foundations of Computer Science, {FOCS} 2020, Durham, NC, USA, November 16-19, 2020}, pages 389--402. {IEEE}, 2020.
\newblock \doi{10.1109/FOCS46700.2020.00044}.
\newblock URL \url{https://doi.org/10.1109/FOCS46700.2020.00044}.

\bibitem[Cai and Yan(2019)]{CountingBorel19}
Yue Cai and Catherine Yan.
\newblock Counting with borel’s triangle.
\newblock \emph{Discrete Mathematics}, 342\penalty0 (2):\penalty0 529--539, 2019.

\bibitem[Chattopadhyay and Zuckerman(2016)]{Eshan16}
Eshan Chattopadhyay and David Zuckerman.
\newblock Explicit two-source extractors and resilient functions.
\newblock In \emph{Proc.\ 48th Symp.\ Theory of Computing (STOC)}, page 670–683, 2016.

\bibitem[Cheung et~al.(2023)Cheung, Hatami, Hatami, and Hosseini]{CheungHHH23}
TsunMing Cheung, Hamed Hatami, Pooya Hatami, and Kaave Hosseini.
\newblock Online learning and disambiguations of partial concept classes.
\newblock In \emph{Proc.\ 50th Intl.\ Coll.\ Autom.\ Lang.\ Program.\ (ICALP)}, volume 261, pages 42:1--42:13, 2023.

\bibitem[Conlon et~al.(2015)Conlon, Fox, and Sudakov]{ConlonFS15}
David Conlon, Jacob Fox, and Benny Sudakov.
\newblock Recent developments in graph ramsey theory.
\newblock In \emph{Surveys in Combinatorics 2015}, volume 424 of \emph{London Mathematical Society Lecture Note Series}, pages 49--118. 2015.

\bibitem[Dutton and Brigham(1986)]{DuttonBrigham86}
Ronald~D. Dutton and Robert~C. Brigham.
\newblock Computationally efficient bounds for the catalan numbers.
\newblock \emph{Europ.\ J.\ Comb.}, 7\penalty0 (3):\penalty0 211--213, 1986.

\bibitem[Dwork and Roth(2014)]{DR14}
Cynthia Dwork and Aaron Roth.
\newblock The algorithmic foundations of differential privacy.
\newblock \emph{Foundations and Trends in Theoretical Computer Science}, 9\penalty0 (3-4):\penalty0 211--407, 2014.

\bibitem[Dwork et~al.(2006)Dwork, McSherry, Nissim, and Smith]{DworkMNS06}
Cynthia Dwork, Frank McSherry, Kobbi Nissim, and Adam~D. Smith.
\newblock Calibrating noise to sensitivity in private data analysis.
\newblock In \emph{Proc.\ 3rd Conf.\ Theory of Cryptography (TCC)}, volume 3876, pages 265--284, 2006.

\bibitem[D’Auriac et~al.(2024)D’Auriac, Cholak, Dzhafarov, Monin, and Patey]{2024milliken}
P.E.A. D’Auriac, P.A. Cholak, D.D. Dzhafarov, B.~Monin, and L.~Patey.
\newblock \emph{Milliken’s Tree Theorem and Its Applications: A Computability-Theoretic Perspective}.
\newblock Memoirs of the Amer.\ Math.\ Society. Amer.\ Math.\ Society, 2024.

\bibitem[Erd\H{o}s and Rado(1952)]{ErdosRado52}
P.~Erd\H{o}s and R.~Rado.
\newblock Combinatorial theorems on classifications of subsets of a given set.
\newblock \emph{Proceedings of the London Mathematical Society}, 3\penalty0 (2):\penalty0 417--439, 1952.

\bibitem[Erd\H{o}s and Szekeres(1935)]{ErdosSzekeres35}
Paul Erd\H{o}s and George Szekeres.
\newblock A combinatorial problem in geometry.
\newblock \emph{Compositio Mathematica}, 2:\penalty0 463--470, 1935.

\bibitem[Feldman and Xiao(2015)]{FeldmanX15}
Vitaly Feldman and David Xiao.
\newblock Sample complexity bounds on differentially private learning via communication complexity.
\newblock \emph{SIAM Journal on Computing}, 44\penalty0 (6):\penalty0 1740--1764, 2015.

\bibitem[Francisco et~al.(2015)Francisco, Mermin, and Schweig]{Francisco15}
Christopher Francisco, Jeffrey Mermin, and Jay Schweig.
\newblock Catalan numbers, binary trees, and pointed pseudotriangulations.
\newblock \emph{Europ.\ J.\ Comb.}, 45, 04 2015.

\bibitem[Frankl and Wilson(1981)]{Frankl1981IntersectionTW}
Peter Frankl and Richard~M. Wilson.
\newblock Intersection theorems with geometric consequences.
\newblock \emph{Combinatorica}, 1:\penalty0 357--368, 1981.

\bibitem[Furstenberg(1977)]{Furstenberg1977}
Harry Furstenberg.
\newblock Ergodic behavior of diagonal measures and a theorem of szemer{\'e}di on arithmetic progressions.
\newblock \emph{Journal d’Analyse Math{\'e}matique}, 31:\penalty0 204--256, 1977.

\bibitem[Furstenberg(1981)]{FurstenbergBook}
Harry Furstenberg.
\newblock \emph{Recurrence in Ergodic Theory and Combinatorial Number Theory}.
\newblock Princeton University Press, 1981.

\bibitem[Furstenberg and Weiss(2003)]{FurstenbergWeiss2003}
Hillel Furstenberg and Benjamin Weiss.
\newblock Markov processes and ramsey theory for trees.
\newblock \emph{Combinatorics, Probability and Computing}, 12\penalty0 (5–6):\penalty0 547–563, 2003.

\bibitem[Gasarch(2022)]{WebRamseyCS}
William Gasarch.
\newblock Applications of ramsey theory to computer science.
\newblock \url{https://www.cs.umd.edu/~gasarch/TOPICS/ramsey/ramsey.html}, 2022.
\newblock Accessed: 25/03/2024.

\bibitem[Graham et~al.(1991)Graham, Rothschild, and Spencer]{graham1991ramsey}
R.L. Graham, B.L. Rothschild, and J.H. Spencer.
\newblock \emph{Ramsey Theory}.
\newblock Wiley Series in Discrete Mathematics and Optimization. Wiley, 1991.

\bibitem[Hanneke et~al.(2023)Hanneke, Moran, and Shafer]{hanneke:23}
Steve Hanneke, Shay Moran, and Jonathan Shafer.
\newblock A trichotomy for transductive online learning.
\newblock In \emph{Proc.\ 37th Conf.\ Adv.\ Neural Information Processing Systems (NeurIPS)}, 2023.

\bibitem[Hodges(1997)]{HodgesModelTheory}
Wilfrid Hodges.
\newblock \emph{A shorter model theory}.
\newblock Cambridge University Press, 1997.

\bibitem[Inc.(2024)]{OEIS}
The OEIS~Foundation Inc.
\newblock On-line encyclopedia of integer sequences.
\newblock \url{https://oeis.org/}, 2024.
\newblock Accessed: 16/05/2024.

\bibitem[Jung et~al.(2020)Jung, Kim, and Tewari]{JungKT20}
Young~Hun Jung, Baekjin Kim, and Ambuj Tewari.
\newblock On the equivalence between online and private learnability beyond binary classification.
\newblock In \emph{Proc.\ 33rd Conf.\ Adv.\ Neural Information Processing Systems (NeurIPS)}, 2020.

\bibitem[Kalavasis et~al.(2022)Kalavasis, Velegkas, and Karbasi]{KalavasisVK22}
Alkis Kalavasis, Grigoris Velegkas, and Amin Karbasi.
\newblock Multiclass learnability beyond the {PAC} framework: Universal rates and partial concept classes.
\newblock In \emph{Proc.\ 35th Conf.\ Adv.\ Neural Information Processing Systems (NeurIPS)}, 2022.

\bibitem[Kaplan et~al.(2020)Kaplan, Ligett, Mansour, Naor, and Stemmer]{KaplanLMNS20}
Haim Kaplan, Katrina Ligett, Yishay Mansour, Moni Naor, and Uri Stemmer.
\newblock Privately learning thresholds: Closing the exponential gap.
\newblock In Jacob~D. Abernethy and Shivani Agarwal, editors, \emph{Conference on Learning Theory, {COLT} 2020, 9-12 July 2020, Virtual Event [Graz, Austria]}, volume 125 of \emph{Proceedings of Machine Learning Research}, pages 2263--2285. {PMLR}, 2020.

\bibitem[Kasiviswanathan et~al.(2011)Kasiviswanathan, Lee, Nissim, Raskhodnikova, and Smith]{KasiLNRS11}
Shiva~Prasad Kasiviswanathan, Homin~K. Lee, Kobbi Nissim, Sofya Raskhodnikova, and Adam~D. Smith.
\newblock What can we learn privately?
\newblock \emph{{SIAM} J. Comput.}, 40\penalty0 (3):\penalty0 793--826, 2011.

\bibitem[Littlestone(1987)]{Lit87}
Nick Littlestone.
\newblock Learning quickly when irrelevant attributes abound: A new linear-threshold algorithm.
\newblock In \emph{Proc.\ 28th Symp.\ Foundations of Computer Science (FOCS)}, pages 68--77, 1987.

\bibitem[Littlestone(1988)]{Lit88}
Nick Littlestone.
\newblock Learning quickly when irrelevant attributes abound: A new linear-threshold algorithm.
\newblock \emph{Machine Learning}, 2:\penalty0 285--318, 1988.

\bibitem[Long(2001)]{LongPartial}
Philip Long.
\newblock On agnostic learning with $\{0, *, 1\}$-valued and real-valued hypotheses.
\newblock In \emph{Proc.\ 14th Conf.\ Learning Theory (COLT)}, 2001.

\bibitem[Milliken(1979)]{Milliken79}
Keith~R. Milliken.
\newblock A ramsey theorem for trees.
\newblock \emph{J. Comb.\ Theory, Series A}, 26\penalty0 (3):\penalty0 215--237, 1979.

\bibitem[Pabbaraju(2024)]{chirag:24}
Chirag Pabbaraju.
\newblock Multiclass learnability does not imply sample compression.
\newblock In \emph{Proc.\ 35th Intl.\ Conf.\ Algorithmic Learning Theory (ALT)}, 2024.

\bibitem[Pach et~al.(2012)Pach, Tardos, and Solymosi]{PachTS12}
J{\'{a}}nos Pach, G{\'{a}}bor Tardos, and J{\'{o}}zsef Solymosi.
\newblock Remarks on a ramsey theory for trees.
\newblock \emph{Combinatorica}, 32\penalty0 (4):\penalty0 473--482, 2012.

\bibitem[Ramsey(1930)]{Ramsey1930}
F.~P. Ramsey.
\newblock On a problem of formal logic.
\newblock \emph{Proceedings of the London Mathematical Society}, s2-30:\penalty0 264--286, 1930.

\bibitem[Roberts(1984)]{Roberts84}
Fred~S. Roberts.
\newblock Applications of ramsey theory.
\newblock \emph{Disc.\ Appl.\ Math.}, 9\penalty0 (3):\penalty0 251--261, 1984.

\bibitem[Rosta(2004)]{Rosta2004}
Vera Rosta.
\newblock Ramsey theory applications.
\newblock \emph{Electronic Journal of Combinatorics}, 1000, 2004.

\bibitem[Shalev-Shwartz and Ben-David(2014)]{shalev2014understanding}
Shai Shalev-Shwartz and Shai Ben-David.
\newblock \emph{Understanding machine learning: From theory to algorithms}.
\newblock Cambridge university press, 2014.

\bibitem[Sivakumar et~al.(2021)Sivakumar, Bun, and Gaboardi]{SivakumarBG21}
Satchit Sivakumar, Mark Bun, and Marco Gaboardi.
\newblock Multiclass versus binary differentially private {PAC} learning.
\newblock In \emph{Proc.\ 34th Conf.\ Adv.\ Neural Information Processing Systems (NeurIPS)}, pages 22943--22954, 2021.

\bibitem[Szemer{\'e}di(1975)]{Szemeredi1975}
Endre Szemer{\'e}di.
\newblock On sets of integers containing k elements in arithmetic progression.
\newblock \emph{Acta Arithmetica}, 27:\penalty0 199--245, 1975.

\bibitem[Vadhan(2017)]{Vadhan17survey}
Salil~P. Vadhan.
\newblock The complexity of differential privacy.
\newblock In \emph{Tutorials on the Foundations of Cryptography}, pages 347--450. Springer International Publishing, 2017.

\bibitem[Valiant(1984)]{Valiant84}
Leslie~G. Valiant.
\newblock A theory of the learnable.
\newblock \emph{Commun. {ACM}}, 27\penalty0 (11):\penalty0 1134--1142, 1984.

\bibitem[Wigderson(2019)]{AviWigBook}
Avi Wigderson.
\newblock \emph{Mathematics and Computation: A Theory Revolutionizing Technology and Science}.
\newblock Princeton University Press, 2019.

\end{thebibliography}

    \newpage

    \appendix

\section{Upper Bound for the Ramsey Number for Chains}\label{app:upper_bound_ramsey_chains}

In the proof of \Cref{thm:ramsey_chains} we showed the existence of the Ramsey number $\Ramsey(d,m,k)$. Here we give an explicit calculation for the sufficient depth~$n$ that is required for the proof. Subsequently, we derive an upper bound for~$\Ramsey(d,m,k)$.

The procedure described in the proof of \Cref{thm:ramsey_chains} can be continued until step ${t=\Ramsey(d,m-1,k^2)}$ if for every sequence~$\sigma$ of length~$t$, $\mathtt{depth}(S_{\sigma}) \ge 0$. Consider a sequence~$\sigma$ of length~$t$. To ease the notation, for every step $i\in\{m-2,\ldots,t\}$, $\mathtt{depth}(S_{\sigma(i)})$ is denoted by~$d_i$.
Recall, by \Cref{eq:recursive_eq_inside_proof} the following holds.
\begin{equation}\label{eq:inductive_theOG}
    \left\{
    	\begin{array}{ll}
    d_{m-2} &= n-(m-2),\\
    d_{i+1} &\ge \left\lfloor\frac{d_i -1}{k^{i \choose {m-2}}}\right\rfloor.
    	\end{array}
    \right.
\end{equation}
Observe that if $d_i \ge 2 k^{i \choose {m-2}}$ then
\begin{equation}\label{eq:recurrence_relation}
    d_{i+1}\geq 
\left\lfloor\frac{d_i -1}{k^{i \choose {m-2}}}\right\rfloor \ge \frac{d_i}{k^{i \choose {m-2}}} - 1 \ge \frac{d_i}{2k^{i \choose {m-2}}}.
\end{equation}
If $d_i < 2 k^{i \choose {m-2}}$, then
\[
\left\lfloor\frac{d_i -1}{k^{i \choose {m-2}}}\right\rfloor \in\{0,1\} \;,
\]
meaning that the procedure terminates, or continues for one more last step, therefore we can assume that the bound in \Cref{eq:recurrence_relation} holds in every step $i$.
By induction, using the recurrence relation in \Cref{eq:inductive_theOG,eq:recurrence_relation},
\begin{equation}\label{eq:absolute_bound_on_depth}
    d_i \ge \frac{n-(m-2)}{2^{i-(m-2)}k^{\sum_{j=m-2}^{i-1} \binom{j}{m-2} }} = \frac{n-(m-2)}{2^{i-(m-2)}k^{ \binom{i}{m-1} }} \;,
\end{equation}
because $\sum_{j=m-2}^{i-1} \binom{j}{m-2} = \binom{i}{m-1} $ (the left-hand-side counts the number $(m-1)$-subsets $S\subseteq [i]$, by partitioning them according to the largest element).
For the procedure to continue $t$ steps we require $d_t\geq1$.
Together with \Cref{eq:absolute_bound_on_depth} we deduce the following bound:
\begin{equation*}
    n \ge 2^{t-(m-2)}k^{ \binom{t}{m-1} } + (m-2).
\end{equation*}
Notice that for every $m,k\geq2$ and $t\geq m-2$,
\begin{equation*}
2^{t-(m-2)}k^{ \binom{t}{m-1} } + (m-2) \leq 2^{t-(m-2)}k^{t^{m-1}} + (m-2)\leq k^{2t^{m-1}}.
\end{equation*}

Therefore, choosing $n=k^{2t^{m-1}}$ is sufficient.
Recall that $t=\Ramsey(d,m-1,k^2)$, therefore the following recursive relation is obtained.
\begin{equation}\label{eq:ramsey_recursive_bound}
    \Ramsey(d,m,k)\leq k^{2\Ramsey(d,m-1,k^2)^{m-1}}.
\end{equation}

From now on we will use the Knuth notation $a \uparrow b$ in place of $a^b$ to ease the calculations, and recall that the Knuth's operator is right-associative, i.e.\ $a \uparrow b\uparrow c = a \uparrow (b\uparrow c)$.
By applying \Cref{eq:ramsey_recursive_bound} repeatedly we obtain the following bound
\begin{align*}
\Ramsey(d,m,k) &\le k^2 \uparrow \Ramsey(d,m-1,k^2) \uparrow (m-1)\\
&\leq  k^2 \uparrow (k^{2\cdot 2} \uparrow (m-1)) \uparrow \Ramsey(d,m-2,k^{2^2}) \uparrow (m-2)\\
&\leq  k^2 \uparrow (k^{2\cdot 2} \uparrow (m-1)) \uparrow (k^{2\cdot2^2} \uparrow (m-2)) \uparrow \Ramsey(d,m-3,k^{2^3}) \uparrow (m-3)\\
&\leq \ldots \\
&\leq k^2 \uparrow (k^{2\cdot 2} \uparrow (m-1)) \uparrow (k^{2\cdot2^2} \uparrow (m-2)) \uparrow \ldots \uparrow (k^{2\cdot 2^{m-2}} \uparrow 2) \uparrow \Ramsey(d,1,k^{2^{m-1}})\uparrow 1 \\
&= k^2 \uparrow (k^{2\cdot 2} \uparrow (m-1)) \uparrow (k^{2\cdot 2^2} \uparrow (m-2)) \uparrow \ldots \uparrow (k^{2\cdot 2^{m-2}} \uparrow 2) \uparrow dk^{2^{m-1}},
\end{align*}
where $\Ramsey(d,1,k)=dk$ by the pigeonhole principle (\cref{prop:php}).
Denote 
\begin{equation*}
    R_i= \begin{cases}
			dk^{2^{m-1}}\coloneqq R  & \text{if $i=1$;}\\[10pt]
            k^{2\cdot 2^{m-i}\cdot i \cdot R_{i-1}}
            & \text{if $1<i<m$;}\\[10pt]
            k^{2\cdot R_{m-1}}
            & \text{if $i=m$.}
		 \end{cases}
\end{equation*}
Using this notation, $\Ramsey(d,m,k) \leq R_m$.

\begin{claim}\label{claim:bounding_with_twr}
    For $2\leq i\leq m$,
    \[R_i\leq \twr_{(i)}(c_i \cdot 2^{m-2}R\log k),\]
    where\footnote{We use the convention $\sum_{k=n_1}^{n_2}f(k)=0$ if $n_2<n_1$, hence $c_2=4$.} 
    \[c_i=4+\sum_{j=3}^i{\frac{\max\{1,\log_{(j-2)}(2\cdot 2^{m-j}j\log k)\}}{2^{m-2}R\log k}}.\]
\end{claim}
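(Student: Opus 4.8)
The plan is to prove the bound $R_i \le \twr_{(i)}(c_i \cdot 2^{m-2} R \log k)$ by induction on $i$, from $i=2$ up to $i=m$, unwinding the recursive definition of $R_i$ one layer at a time and absorbing the accumulated "overhead" into the constant $c_i$. The base case $i=2$ is a direct computation: by definition $R_2 = k^{2\cdot 2^{m-2}\cdot 2\cdot R_1} = k^{2^{m-2}\cdot 4 R}$, and since $k^x = 2^{x\log k}$ we get $R_2 = 2^{4\cdot 2^{m-2} R \log k} = \twr_{(2)}(4\cdot 2^{m-2} R\log k)$, which matches the claimed bound with $c_2 = 4$ (using the stated convention that the empty sum vanishes).

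For the inductive step, suppose $2 \le i < m$ and $R_i \le \twr_{(i)}(c_i \cdot 2^{m-2} R\log k)$. By definition, $R_{i+1} = k^{2\cdot 2^{m-(i+1)}\cdot(i+1)\cdot R_i} = 2^{\,2\cdot 2^{m-i-1}(i+1)(\log k)\, R_i}$. First I would bound the exponent: writing $\lambda_{i+1} := 2\cdot 2^{m-(i+1)}(i+1)\log k$ for the multiplicative factor in front of $R_i$, we have $R_{i+1} = 2^{\lambda_{i+1} R_i} \le 2^{\lambda_{i+1}\cdot \twr_{(i)}(c_i\cdot 2^{m-2}R\log k)}$. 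The key elementary fact is that multiplying a tower $\twr_{(i)}(z)$ by a constant $\lambda$ can be absorbed into the argument at the cost of roughly $\log_{(i-1)}(\lambda)$ additively, since $\lambda\cdot\twr_{(i)}(z) = 2^{\log\lambda}\cdot 2^{\twr_{(i-1)}(z)} = 2^{\,\twr_{(i-1)}(z)+\log\lambda} \le 2^{\,\twr_{(i-1)}(z')}$ where $z' = z + \log_{(i-2)}(2\log\lambda)$ or so (using monotonicity and the fact that $\twr_{(i-1)}$ grows fast enough to swallow the additive $\log\lambda$ term once we bump the argument by an iterated log of it). Applying this with $\lambda = \lambda_{i+1}$ and $z = c_i\cdot 2^{m-2}R\log k$ yields $R_{i+1} \le 2^{\twr_{(i)}(z')} = \twr_{(i+1)}(z')$ with $z' = c_i\cdot 2^{m-2}R\log k + O(\log_{(i-1)}(\lambda_{i+1}))$; dividing the additive term by $2^{m-2}R\log k$ and adding it to $c_i$ gives exactly the recursion for $c_{i+1}$ as defined in the statement (the $\max\{1,\cdot\}$ handles the case where the iterated logarithm would drop below $1$). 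The final step $i = m$ is handled identically since $R_m = k^{2R_{m-1}} = 2^{2(\log k)R_{m-1}}$, i.e.\ the factor is $\lambda_m = 2\log k$, consistent with reading $\lambda_{i}$ at $i=m$ with the convention $2^{m-m}=1$, $m/m=1$.

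The main obstacle I expect is bookkeeping precision in the "absorb a constant factor into a tower" lemma: one must be careful that the additive correction to the argument is genuinely $\Theta(\log_{(i-2)}(\lambda_{i+1}))$ and not something larger, and that the inequalities go the right way for every $i$ in the range (in particular when $i$ is small, say $i=2,3$, the iterated logarithms are being applied only once or twice and the crude bounds $d2^{d}\le 2^{2d}$-type estimates need to be checked directly rather than asymptotically). It will also be necessary to verify that $c_i$ as defined stays bounded — ideally one notes $c_i \le 4 + \sum_{j\ge 3} 2^{-(m-2)} \cdot (\text{small})$, so $c_i = O(1)$ uniformly — which is what ultimately makes the whole chain collapse to $\twr_{(m)}(O(2^{m-2}R\log k))$ and, after plugging in $R = dk^{2^{m-1}}$, to the bound stated in \Cref{thm:ramsey_chains}. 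None of this is conceptually hard; it is a careful but routine induction, and the only real care is in not being sloppy with the constants hidden inside the iterated logarithms.
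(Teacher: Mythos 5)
Your proposal is correct and follows essentially the same route as the paper: induction on $i$ with the base case $i=2$ computed directly, the multiplicative prefactor $2\cdot 2^{m-j}j\log k$ absorbed into the tower argument at the cost of an iterated-logarithm correction (which is exactly how $c_i$ accumulates), and the step $i=m$ handled by noting its prefactor is dominated by the generic one. The only point to tighten is the absorption constant: rather than the slightly larger $\log_{(i-2)}(2\log\lambda)$ you sketch, the exact term $\max\{1,\log_{(j-2)}(2\cdot 2^{m-j}j\log k)\}$ suffices, since writing $\lambda=\twr_{(i-1)}\bigl(\log_{(i-2)}\lambda\bigr)$ and using $\twr_{(n)}(x)\cdot\twr_{(n)}(y)\leq\twr_{(n)}(x+y)$ for $x,y\geq 1$ (the paper's step $(\star)$) yields the claimed bound with no slack.
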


\begin{proof}[Proof of \Cref{claim:bounding_with_twr}]
    Proof by induction on $i$.
    For $i=2$, 
    \[R_2=k^{4\cdot 2^{m-2} \cdot R}=\twr_{(2)}(c_2\cdot 2^{m-2}R\log k).\]
    For $2<i<m$,
    \begin{align*}
        R_i 
            &= k^{2\cdot 2^{m-i}\cdot i \cdot R_{i-1}}\\
            & =\twr_{(2)}\left[2\cdot 2^{m-i} i\log k \cdot R_{i-1}\right]
            \\
            &\leq \twr_{(2)}\left[2\cdot 2^{m-i}\cdot i\log k\cdot\twr_{(i-1)}(c_{i-1}\cdot 2^{m-2}R\log k)\right] \tag{by induction.}
            \\
            &=\twr_{(2)}\left[\twr_{(i-1)}\log _{(i-2)}(2\cdot 2^{m-i} i\log k)\cdot \twr_{(i-1)}(c_{i-1}\cdot 2^{m-2}R\log k)\right]
            \\
            &\leq\twr_{(2)}\left[\twr_{(i-1)}\left(\max \{1,\log _{(i-2)}(2\cdot 2^{m-i} i\log k)\}\right)\cdot \twr_{(i-1)}\left(c_{i-1}\cdot 2^{m-2}R\log k\right)\right]
            \\
            &\leq \twr_{(2)}\left[\twr_{(i-1)}(\max \{1,\log _{(i-2)}(2\cdot 2^{m-i} i\log k)\})+c_{i-1}\cdot 2^{m-2}R\log k)\right]
            \tag{$\star$}
            \\
            &=\twr_{(2)}\left[\twr_{(i-1)}(c_i \cdot 2^{m-2}R\log k)\right] 
            \tag{definition of $c_i$.}
            \\
            &=\twr_{(i)}(c_i \cdot 2^{m-2}R\log k),
    \end{align*}
    where the inequality marked with $(\star)$ holds since 
    \[\twr_{(n)}(x)\cdot\twr_{(n)}(y)\leq \twr_{(n)}(x\cdot y)\]
    for $x,y\geq1, n\geq2$.
    
    The case $i=m$ follows because $R_m=k^{2\cdot R_{m-1}}\leq k^{2m\cdot R_{m-1}}$, and $k^{2m\cdot R_{m-1}}\leq \twr_{(m)}(c_m \cdot 2^{m-2}R\log k)$, by using the above calculation for $2<i<m$ one more time for $i=m$.
    
\end{proof}

\begin{corollary}[Upper Bound For Ramsey Number for Chains]\label{cor:upper_bound_for_Ramsey_number}
    For every integers $m\geq 2,d\geq m,k\geq 2$,
    \begin{equation*}
        \Ramsey(d,m,k)\leq \twr_{(m)}(5\cdot 2^{m-2}dk^{2^{m-1}}\log k).
    \end{equation*}
\end{corollary}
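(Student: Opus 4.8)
The plan is to deduce \Cref{cor:upper_bound_for_Ramsey_number} directly from \Cref{claim:bounding_with_twr}, which already yields $\Ramsey(d,m,k)\le R_m\le \twr_{(m)}\bigl(c_m\cdot 2^{m-2}R\log k\bigr)$ with $R=dk^{2^{m-1}}$. Since $\twr_{(m)}$ is monotone, it therefore suffices to prove that $c_m\le 5$; then $c_m\cdot 2^{m-2}R\log k\le 5\cdot 2^{m-2}dk^{2^{m-1}}\log k$ and the stated bound follows. Unfolding the definition of $c_m$, the inequality $c_m\le5$ is equivalent to
\[
\sum_{j=3}^{m}\max\bigl\{1,\ \log_{(j-2)}(2\cdot 2^{m-j}j\log k)\bigr\}\ \le\ 2^{m-2}\,R\,\log k\ =\ 2^{m-2}\,d\,k^{2^{m-1}}\log k .
\]

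To bound the left-hand side I would estimate each summand crudely. For $3\le j\le m$ the argument satisfies $2\cdot 2^{m-j}j\log k\le 2^{m}m\log k$ and is also at least $6>2$, so $\log(2\cdot 2^{m-j}j\log k)>1$; consequently the iterated logarithms $\log_{(i)}$ of this argument form a decreasing sequence while they stay above $1$, and once a term drops to $\le1$ the outer $\max\{1,\cdot\}$ takes the value $1$. In all cases $\max\{1,\log_{(j-2)}(x)\}\le \log x$. Hence each summand is at most $\log(2^m m\log k)=m+\log m+\log\log k\le 3m\log k$ for $m\ge2$, $k\ge2$, and summing the $m-2$ terms gives a bound of $3m^2\log k$ on the left-hand side.

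It remains to verify $3m^2\log k\le 2^{m-2}dk^{2^{m-1}}\log k$. For $m=2$ the defining sum of $c_m$ is empty, so $c_2=4\le5$ and there is nothing to check. For $m\ge3$, using $d\ge m$ and $k\ge2$ we have $2^{m-2}dk^{2^{m-1}}\ge 2^{m-2}\,m\,2^{2^{m-1}}$, so the inequality reduces to $3m\le 2^{\,m-2+2^{m-1}}$, which holds for every $m\ge3$ (at $m=3$ it reads $9\le 32$, and the right-hand side grows doubly exponentially while the left-hand side grows linearly). This establishes $c_m\le5$ and completes the proof.

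I do not anticipate a real obstacle: the whole argument is the arithmetic check $c_m\le5$, and the gap between the $3m^2\log k$ upper bound and the doubly exponential lower bound $2^{m-2}dk^{2^{m-1}}\log k$ is so large that only the coarsest estimates on iterated logarithms are needed. The only points requiring mild care are handling the base case $m=2$ separately (where the correction sum is empty, so no estimate is needed) and fixing the constant as $5$ rather than something just above $4$, so that it comfortably absorbs the tail for all admissible $d\ge m$ and $k\ge2$.
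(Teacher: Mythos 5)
Your proposal is correct and follows essentially the same route as the paper: both deduce the corollary from \Cref{claim:bounding_with_twr} and reduce everything to the arithmetic check $c_m\leq 5$, differing only in the crude estimate used for the iterated-logarithm terms (you bound each summand by $\log(2^m m\log k)\leq 3m\log k$ and compare the total with $2^{m-2}dk^{2^{m-1}}\log k$, while the paper bounds each term by $2\cdot 2^{m-2}\log k$ and divides). The handling of the base case $m=2$ and the monotonicity of $\twr_{(m)}$ are treated the same way, so no further comparison is needed.
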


\begin{proof}
     It suffices to bound
      \[c_m= 4+\sum_{j=3}^m{\frac{\max\{1,\log_{(j-2)}(2\cdot 2^{m-j}j\log k)\}}{2^{m-2}R\log k}}.\]  
      Note that for every $3\leq j\leq m$, 
      \[\log_{(j-2)}(2\cdot 2^{m-j}j\log k)\leq 2\cdot 2^{m-j}\log k\leq 2\cdot 2^{m-2}\log k.\]
      Therefore,
      \[c_m\leq 4+ \sum_{j=3}^m{\frac{2\cdot 2^{m-2}\log k}{2^{m-2}R\log k}}= 4+\frac{2(m-2)}{dk^{2^{m-1}}}\leq 5.\]
\end{proof}

\section{More About Types}\label{app:types}
\Cref{thm:ramsey_general_finite} states that for every coloring of $m$-subsets of a complete binary tree, there exists a deep subtree that is colored with at most $\tau(m)$ colors, where $\tau(m)$ is the number of possible types of $m$-subsets. As we mentioned in \cref{sec:main_results:ramsey}, $\tau(m)$ is optimal in the sense that if we color $m$-subsets according to their type, every subtree of depth at least $m-1$ must contain $m$-subsets of all possible types, and therefore cannot see less then $\tau(m)$ colors. It is interesting to note that the optimal number of colors $\tau(m)$ depends only on the size of the colored subsets $m$, rather than on $d$ or $k$, where $d$ is the desired depth of the subtree and $k$ is the number of colors used in the coloring.

We turn to calculate $\tau(m)$. Recall that a set of vertices $A$ is \emph{closed} if for all $u,v\in A$ also $\LCA(u,v)\in A$, and the \emph{closure} of $A$, denoted $\bar A$, is the minimal closed set containing $A$. Two sets $A_1, A_2$ are equivalent if there is an isomorphism $\varphi:\bar {A_1}\to \bar {A_2}$ that respects being left/ right descendant relation, and also $\varphi(A_1)=A_2$.  
The \emph{types} of $m$-subsets are the induced equivalence classes.  
(See \cref{def:closure,def:type_subset}.)

\begin{proposition}[Number of Possible Types]\label{prop:num_of_types}
    The number of possible types for $m$-subsets $\tau(m)$ corresponds to the \emph{generalized Catalan number} denoted as $C(2,m)$ in the On-line Encyclopedia of Integer Sequences (OEIS)~\cite{OEIS} (A064062), for all $m \ge 1$. In particular, it holds that:
    \begin{equation*}
        \tau(m) = \sum_{k = 0}^{m-1} (m-k) \binom{m+k-1}{k} \frac{2^k}{m}, \; \text{ for all } m \ge 1.
    \end{equation*}
\end{proposition}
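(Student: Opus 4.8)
The plan is to count types by translating the problem into counting a class of decorated trees and then solving a generating-function equation.

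\textbf{Step 1: from closures to decorated trees.} Unwinding \Cref{def:closure,def:type_subset}, the type of an $m$-subset is an isomorphism class of the abstract structure $(\bar A,A)$, where $\bar A$ carries the ancestor order together with the left/right descendant relations and $A\subseteq\bar A$ is distinguished. I claim this data is equivalent to a finite rooted \emph{plane} binary tree whose vertices are colored \emph{marked} (those in $A$) or \emph{unmarked} (those in $\bar A\setminus A$), subject to the single constraint that every unmarked vertex has exactly two children, while a marked vertex may have $0$, $1$, or $2$ children (and when it has a single child we record whether it is a left or a right child). Indeed: a vertex of $\bar A$ lies in $\bar A\setminus A$ iff it is the $\LCA$ of two elements of $A$ on opposite sides of it, which forces it to have both a left and a right child inside $\bar A$; conversely, embedding any such decorated tree into the infinite complete binary tree and letting $A$ be its marked vertices recovers it as a closure, since every nonempty decorated tree has at least one marked vertex (induction on size), so every unmarked vertex is a genuine $\LCA$; and the decorated tree is recoverable from $(\bar A,A)$, because the left child of a vertex $w$ is the unique highest element of $\bar A$ that is a strict left-descendant of $w$ — uniqueness holding because the $\LCA$ of two incomparable such elements, being closed into $\bar A$, would be a higher one. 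Hence $\tau(m)$ is the number of such decorated trees with exactly $m$ marked vertices.

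\textbf{Step 2: the generating-function equation.} Let $F(x)=\sum_{m\ge 1}\tau(m)\,x^m$. Decomposing a nonempty decorated tree at its root gives two cases. If the root is unmarked it has a nonempty left subtree and a nonempty right subtree, each again a decorated tree, and it contributes no marked vertex: this yields the summand $F(x)^2$. If the root is marked it contributes one marked vertex (a factor $x$) and each of its two slots is empty or filled by a nonempty decorated tree: this yields $\bigl(1+F(x)\bigr)^2$. Left and right are distinguishable, so these are ordered pairs with no symmetry to quotient by. Therefore
\[
F = F^2 + x\,(1+F)^2 ,
\]
which rearranges to $(1+x)(1+F)^2-3(1+F)+2=0$; solving the quadratic and selecting the branch with $F(0)=0$ gives $F(x)=\dfrac{1-2x-\sqrt{1-8x}}{2(1+x)}$. (For $m=1$ this already yields $\tau(1)=1$.)

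\textbf{Step 3: extracting the closed form.} The equation $F=F^2+x(1+F)^2$ is equivalent to $F=x\,\phi(F)$ with $\phi(u)=\dfrac{(1+u)^2}{1-u}$; since $\phi(0)=1\neq 0$ and $F$ has zero constant term, Lagrange inversion gives
\[
\tau(m)=[x^m]F=\frac1m\,[u^{m-1}]\,\phi(u)^m=\frac1m\,[u^{m-1}]\,\frac{(1+u)^{2m}}{(1-u)^m}.
\]
Expanding $(1-u)^{-m}=\sum_{k\ge 0}\binom{m+k-1}{k}u^k$, multiplying by $(1+u)^{2m}$, and collecting the coefficient of $u^{m-1}$ is a routine binomial manipulation that produces exactly $\tau(m)=\frac1m\sum_{k=0}^{m-1}(m-k)\binom{m+k-1}{k}2^k$; equivalently, one notes that the generating function computed in Step 2 is precisely the one recorded for OEIS A064062, which both identifies $\tau(m)$ with $C(2,m)$ and supplies the displayed formula. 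A consistency check: $\tau(1)=1$, $\tau(2)=3$, $\tau(3)=13$, and the value $\tau(2)=3$ matches the three pair-types $\mathsf L,\mathsf R,\mathsf U$.

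\textbf{Main obstacle.} The part demanding care is Step 1: pinning down the exact dictionary between closures and decorated trees — checking that the decorated tree is a faithful invariant of $(\bar A,A)$ (so distinct decorated trees give distinct types), that every decorated tree is attained, and that left/right must be tracked throughout so that all constituents are counted as ordered pairs. By comparison, Step 2 is short and Step 3 is mechanical, the only nuisance there being that several natural expansions of $(1+u)^{2m}/(1-u)^m$ yield equal but differently-shaped closed forms, so it is cleanest either to push one such expansion through carefully or to cite the OEIS entry A064062.
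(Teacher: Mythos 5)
Your proof is correct, and its first half coincides with the paper's: the dictionary you set up in Step 1 between types and plane binary trees with $m$ distinguished vertices, all remaining vertices forced to be branching, is exactly the paper's bijection onto \emph{branch-marked} binary trees with $m$ unmarked vertices (your ``marked'' vertices are the paper's unmarked ones, i.e.\ the elements of $A$, and your forced-branching vertices are the paper's marked set $\bar A\setminus A$). Where you genuinely diverge is the enumeration: the paper simply cites the known result that branch-marked binary trees with $m$ unmarked vertices are counted by the generalized Catalan number $C(2,m)$ and reads the closed formula off the OEIS entry, whereas you re-derive the count from scratch via the root decomposition $F=F^2+x(1+F)^2$ and Lagrange inversion with $\phi(u)=(1+u)^2/(1-u)$. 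Your route is more self-contained (no appeal to the counting literature beyond, optionally, the identification with $C(2,m)$), at the cost of carrying out the generating-function analysis; the paper's is shorter but outsources the combinatorial identity. Two small remarks: in Step 1 the ``iff'' is an overstatement, since an element of $A$ can also be the $\LCA$ of two elements of $A$ on opposite sides of it; only the forward implication (every vertex of $\bar A\setminus A$ is such an $\LCA$, hence branching) is true, and that is all your argument uses --- the paper obtains the same fact from minimality of the closure. Also, the extraction in Step 3 need not be delegated to OEIS: writing $\frac{(1+u)^{2m}}{(1-u)^m}=\sum_{k\ge 0}\binom{m+k-1}{k}(2u)^k(1+u)^{m-k}$, which follows from $1-u=(1+u)\bigl(1-\tfrac{2u}{1+u}\bigr)$, and taking the coefficient of $u^{m-1}$ yields $\frac1m\sum_{k=0}^{m-1}(m-k)\binom{m+k-1}{k}2^k$ verbatim, so the displayed formula comes out directly rather than as an ``equal but differently-shaped'' sum.
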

\begin{proof}
Let us first introduce the notion of \emph{branch-marked} binary trees (as defined by~\citet*{Francisco15}). Let $T$ be any binary tree\footnote{The definition we consider for a binary tree is a directed tree of which every vertex has at most one left child and at most one right child.} 
and $v$ one of its vertices: we call $v$ a \emph{branching vertex} if it has two children. If $B$ is any set of branching vertices of $T$, we call the pair $(T,B)$ a \emph{branch-marked} binary tree and \emph{unmarked} all the vertices that are not in $B$. Now, ~\citet*{CountingBorel19} showed that the number of all possible branch-marked binary trees with $m$ unmarked vertices is exactly equal to $C(2, m)$. As a consequence, it is sufficient to prove that there exists a bijection between the possible types of $m$-subsets and such trees, to conclude the proof.
In the following, we will show such a bijection.

First, let $A$ be an $m$-subset of a complete binary tree $T$ of depth $\ge m$ and let $\bar{A}$ be its closure. Denote by $T_A$ the binary tree that is isomorphic to $\bar A$ as a subtree of $T$; i.e.\ there is a bijection~$\varphi$ between $\bar A$ and the vertices of $T_A$ that preserves the being left/right descendant relation. Note that by the definition of type, $ A'$ has the same type as $A$ if and only if $T_A=T_{A'}$ and also $A$ and $A'$ are mapped to the same set of vertices in $T_A$/ $T_{A'}$.

We claim that the pair $(T_A,\varphi(\bar A\setminus A))$ is a branch-marked binary tree with $m$ unmarked vertices. In other words, each vertex $v \in \bar{A} \setminus A$ corresponds to a branching vertex in $T_{A}$. Indeed, if $\varphi(v)$ were not a branching vertex then it would have either no children or only one child, and in both cases $\bar A\setminus\{v\}$ would be closed and containing $A$, which is a contradiction to the minimality of $\bar{A}$. 

Denote by $\Phi$ the map that sends the type of a subset $A$ to the branched-marked binary tree $(T_A,\varphi(\bar A\setminus A))$. By the properties discussed above, $\Phi$ is well-defined and injective. 
It remains to show that $\Phi$ is surjective. Given a branch-marked tree $(S,B)$ with $m$-unmarked vertices, embed $S$ in a complete binary tree. We claim that the type of the $m$-subset containing all of the unmarked vertices is mapped by $\Phi$ to $(S,B)$.
Denote by $A$ the $m$-subset of the unmarked vertices. It suffices to show that $\bar A=A\cup B$. 
It is clear that $\bar A\subseteq A\cup B$ since $S$ is a tree (and therefore $A\cup B$ is closed). On the other hand, if there exists $v\in B\setminus \bar A$, pick the deepest such $v$. Since $v$ is a branching vertex it is also the $\LCA$ of its two children, which are vertices in $\bar A$ by the choice of $v$, and therefore must be itself in $\bar A$, a contradiction.
\end{proof}

\begin{corollary}
    The number of possible types for $m$-subsets $\tau(m)$ satisfies $\tau(m)\leq \frac{2^{3m-2}}{\sqrt{\pi (m-1)}}$, for all $m\geq  1$.
\end{corollary}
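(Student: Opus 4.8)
The plan is to argue directly from the closed form for $\tau(m)$ established in \Cref{prop:num_of_types}, namely
\[
\tau(m) \;=\; \frac{1}{m}\sum_{k=0}^{m-1}(m-k)\binom{m+k-1}{k}2^{k},
\]
and to bound the sum term by term after a convenient re-indexing. First I would substitute $j=m-1-k$ (so $j$ ranges over $0,\dots,m-1$ and $m-k=j+1$). Using $\binom{m+k-1}{k}=\binom{2m-2-j}{m-1-j}=\binom{2m-2-j}{m-1}$ and $2^{k}=2^{m-1}/2^{j}$, this rewrites the identity as
\[
\tau(m) \;=\; \frac{2^{m-1}}{m}\sum_{j=0}^{m-1}\frac{j+1}{2^{j}}\binom{2m-2-j}{m-1}.
\]
This form is useful because $\binom{N}{m-1}$ is increasing in $N$ for $N\ge m-1$, so the largest binomial occurring is the central one $\binom{2m-2}{m-1}$ (attained at $j=0$).

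\paragraph{Main estimate.} Next I would pull $\binom{2m-2}{m-1}$ out of the sum and bound the remaining series by its infinite completion: $\sum_{j=0}^{m-1}\frac{j+1}{2^{j}}\le\sum_{j=0}^{\infty}(j+1)2^{-j}=\frac{1}{(1-1/2)^{2}}=4$. This gives $\tau(m)\le \frac{2^{m+1}}{m}\binom{2m-2}{m-1}$. Then I would invoke the standard central-binomial estimate $\binom{2n}{n}\le \frac{4^{n}}{\sqrt{\pi n}}$, valid for every $n\ge 1$ (it follows from Stirling's formula, or from the Wallis product; the normalized sequence $\binom{2n}{n}\sqrt{\pi n}/4^{n}$ increases to its limit $1$), applied with $n=m-1$. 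This yields
\[
\tau(m)\;\le\;\frac{2^{m+1}\cdot 2^{2m-2}}{m\sqrt{\pi(m-1)}}\;=\;\frac{2^{3m-1}}{m\sqrt{\pi(m-1)}}.
\]
Finally, for $m\ge 2$ we have $\tfrac1m\le\tfrac12$, hence $\tau(m)\le \frac{2^{3m-2}}{\sqrt{\pi(m-1)}}$, which is the claim; the case $m=1$ is trivial since the right-hand side is unbounded there (and indeed $\tau(1)=1$).

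\paragraph{Main difficulty.} There is no serious obstacle here: once \Cref{prop:num_of_types} is available, the bound is a routine computation. The one point requiring care is that one must use the \emph{sharp} central-binomial bound with the constant $\sqrt{\pi n}$ in the denominator — the looser estimates $\binom{2n}{n}\le 4^{n}/\sqrt{2n}$ or $4^{n}/\sqrt{4n}$ would not be strong enough — and one should check that the factor $1/m$ coming from the prefactor in the formula for $\tau(m)$ supplies exactly the remaining factor of $2$ once $m\ge 2$. It is worth verifying the small cases ($\tau(2)=3$, $\tau(3)=13$) against the final bound to confirm that no constant has been dropped along the way.
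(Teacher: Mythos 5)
Your proof is correct and follows essentially the same route as the paper: both start from the closed form in \Cref{prop:num_of_types} and conclude with the sharp estimate $\binom{2n}{n}\le 4^n/\sqrt{\pi n}$ at $n=m-1$ (which the paper states as the Catalan-number bound of Dutton and Brigham), absorbing the leftover factor into the inequality for $m\ge 2$ and treating $m=1$ as degenerate. The only divergence is the routine middle step --- you pull out the largest binomial $\binom{2m-2}{m-1}$ and bound $\sum_{j\ge 0}(j+1)2^{-j}=4$, whereas the paper bounds $2^k\le 2^{m-1}$ and applies the hockey-stick identity $\sum_{k=0}^{m-1}\binom{m+k-1}{k}=\binom{2m-1}{m-1}$ --- and both choices yield intermediate bounds of comparable quality, so the final inequality follows identically.
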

\begin{proof}
We claim that the following chain of inequalities holds:
  \begin{align*}
    \tau(m) &= \sum_{k = 0}^{m-1} (m-k) \binom{m+k-1}{k} \frac{2^k}{m}\\
    &\le \binom{2m-1}{m-1} 2^{m-1} = \binom{2m-2}{m-1} \frac{(2m-1)2^{m-1}}{m} \\
    &= C_{m-1} 2^{m-1} (2m-1)\\
    &\le \frac{4^{m-1} \cdot 2^{m-1} \cdot (2m-1)}{m\sqrt{\pi (m-1)}} = \frac{2^{3m-3} (2m-1)}{m\sqrt{\pi (m-1)}}\\
    &\le \frac{2^{3m-2}}{\sqrt{\pi (m-1)}},\\
\end{align*}
where $C_{m-1}$ is the $(m-1)$-th Catalan number. The first inequality holds by \Cref{prop:num_of_types}, if we upper bound every $2^k$ with $2^{m-1}$ and use the binomial equality:
\[
\sum_{k=0}^{n} \binom{m+k-1}{k} = \binom{m+n}{n},
\]
with $n=m-1$. The second-to-last inequality instead, where the right-hand quantity corresponds also to the asymptotical limits of Catalan numbers,  was proven by~\citet*{DuttonBrigham86} for $m \ge 1$.  
\end{proof}

\section{Ramsey for Infinite Trees}\label{app:ramsey_infinite}

In this section, we discuss the case of coloring $m$-subsets of an infinite complete binary tree. By compactness, \cref{thm:ramsey_general_finite} implies that for every such coloring, there exists a $\tau(m)$-chromatic subtree of arbitrarily large finite depth. However, we claim that this is not true that there exists an \ul{infinite} $\tau(m)$-chromatic subtree. 

\medskip

For simplicity, we start by describing a coloring of pairs in an infinite tree that does not admit an infinite trichromatic subtree.
Given two incomparable vertices $u,v$, we say that $u$ is the left vertex and $v$ is the right vertex if $u$ is a left descendant of $\LCA(u,v)$, and $v$ is a right descendant of $\LCA(u,v)$.
Consider the following pairs-coloring of an infinite complete binary tree $T$:
\begin{itemize}
    \item if $\{u,v\}$ is a left-pair then $\chi(\{u,v\})=\mathsf{red}$;
    \item if $\{u,v\}$ is a right-pair then $\chi(\{u,v\})=\mathsf{blue}$;
    \item if  $\{u,v\}$ are incomparable, and the depth of the left vertex is at least the depth of the right vertex, then $\chi(\{u,v\})=\mathsf{green}$;
    \item if  $\{u,v\}$ are incomparable, and the depth of the left vertex is smaller than the depth of the right vertex, then $\chi(\{u,v\})=\mathsf{yellow}$.
\end{itemize}

Indeed, every infinite subtree admits all four colors. Suppose by contradiction that there exists an infinite trichromatic subtree $T'$, and without loss of generality assume that it is colored by $\{\mathsf{red, blue, green}\}$.
Take a vertex~$u$ in the left subtree of $T'$ and denote by $d$ its depth in the original tree $T$. 
Since $T'$ is infinite, there is a vertex $v$ in the right subtree of $T'$ with a depth in $T'$ that is larger than $d$, hence its depth in $T$ is also larger than $d$. By the definition of the coloring, we must have $\chi(\{u,v\})=\mathsf{yellow}$ which is a contradiction.

\medskip

This example can be generalized for $m\geq 2$: color all $m$-chains according to their type, and color all $m$-anti-chains (i.e.\ $m$-subsets such that every two vertices are incomparable) according to the permutation of their depths when ordering the vertices from left to right.

\begin{corollary}
    The minimum number $\tau_{\infty}(m)$ needed to guarantee the existence of an infinite $\tau_{\infty}(m)$~-~chromatic subtree for any given coloring of $m$-subsets, satisfies $\tau_{\infty}(m)\geq m !+2^{m-1}$. In particular, $\tau_{\infty}(m)=\omega(\tau(m))$.
\end{corollary}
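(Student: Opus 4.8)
Recall that $\tau_{\infty}(m)$ is the least $N$ such that \emph{every} coloring of the $m$-subsets of the infinite complete binary tree admits an infinite subtree using at most $N$ colors. Hence to prove $\tau_{\infty}(m)\ge m!+2^{m-1}$ (for $m\ge 2$) it suffices to exhibit one coloring $\chi$ for which every infinite subtree uses at least $m!+2^{m-1}$ colors. I would take the color set to be the disjoint union $\{0,1\}^{m-1}\sqcup S_m$ and define: an $m$-chain is colored by its type $\vec t\in\{0,1\}^{m-1}$ (the left/right relations of consecutive vertices); an $m$-antichain $\{v_1<\dots<v_m\}$ (indices in left-to-right order) all of whose vertices have pairwise distinct depths is colored by the unique $\pi\in S_m$ with $\mathrm{depth}(v_{\pi(1)})<\dots<\mathrm{depth}(v_{\pi(m)})$; and every remaining $m$-subset is colored by $0^{m-1}$. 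The whole argument then reduces to two claims about an arbitrary infinite subtree $T'$: (i) for each $\vec t\in\{0,1\}^{m-1}$, $T'$ contains an $m$-chain of type $\vec t$; (ii) for each $\pi\in S_m$, $T'$ contains an $m$-antichain with distinct depths realizing $\pi$.

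Claim (i) is immediate. Since a subtree preserves the left/right descendant relations, $T'$ is an embedded copy of the infinite complete binary tree, so it has root-to-descendant paths of every finite length; following from the root of $T'$ the path that turns left at step $i$ when $\vec t_i=0$ and right when $\vec t_i=1$ produces $m$ vertices forming an $m$-chain of $T$ of type exactly $\vec t$. (Here one uses the basic structural fact that two vertices of $T'$ are comparable in $T'$ if and only if they are comparable in $T$, and likewise for the left/right relation.)

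Claim (ii) is the heart of the argument, and where I expect the only real subtlety. Fix $\pi\in S_m$ and set $\ell=\lceil\log_2 m\rceil$. Pick any $m$ vertices of $T'$ at $T'$-depth $\ell$, listed left-to-right as $u_1,\dots,u_m$; being distinct vertices on one level of $T'$, they are pairwise incomparable in $T$, and each $u_i$ is the root of an infinite sub-subtree $T'_i\subseteq T'$. Now choose $w_{\pi(1)},w_{\pi(2)},\dots,w_{\pi(m)}$ one at a time: having chosen $w_{\pi(1)},\dots,w_{\pi(j-1)}$, pick $w_{\pi(j)}\in T'_{\pi(j)}$ deep enough that $\mathrm{depth}_T(w_{\pi(j)})>\mathrm{depth}_T(w_{\pi(j-1)})$, which is possible because $T'_{\pi(j)}$ is infinite and $T$-depth strictly increases along any branch of a subtree. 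Then $w_1,\dots,w_m$ are pairwise incomparable in $T$ (each $w_i$ is a $T$-descendant of $u_i$, and the $u_i$ are pairwise incomparable), they appear in the left-to-right order $w_1,\dots,w_m$ (because $\mathrm{LCA}(w_i,w_j)=\mathrm{LCA}(u_i,u_j)$ and descendancy preserves sides), and $\mathrm{depth}_T(w_{\pi(1)})<\dots<\mathrm{depth}_T(w_{\pi(m)})$ by construction; hence $\{w_1,\dots,w_m\}$ is an $m$-antichain with distinct depths and color $\pi$. The one thing to be careful about is precisely this independence: within a single infinite subtree one must be able to drive $m$ "parallel" chosen vertices to arbitrary and suitably ordered $T$-depths, and this is exactly what the infinitude of each $T'_i$ gives.

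Combining (i) and (ii), every infinite subtree uses all $2^{m-1}$ colors of $\{0,1\}^{m-1}$ and all $m!$ colors of $S_m$, so $\tau_{\infty}(m)\ge m!+2^{m-1}$. For the asymptotic statement I would invoke the bound $\tau(m)\le \tfrac{2^{3m-2}}{\sqrt{\pi(m-1)}}=2^{O(m)}$ from \Cref{app:types}: since $m!=2^{\Theta(m\log m)}$ grows faster than any $2^{O(m)}$, we get $\tau_{\infty}(m)/\tau(m)\ge m!/\tau(m)\to\infty$, i.e.\ $\tau_{\infty}(m)=\omega(\tau(m))$.
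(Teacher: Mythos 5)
Your proof is correct and takes essentially the same approach as the paper: you use the same extremal coloring (chains colored by their type, antichains colored by the left-to-right permutation of their depths) and show every infinite subtree must realize all $2^{m-1}$ types and all $m!$ permutations, which is exactly the generalization the paper states after giving the detailed argument only for $m=2$; your claims (i) and (ii) simply supply those details. Your minor adjustments (restricting the permutation colors to antichains with pairwise distinct depths and giving all remaining $m$-subsets a default chain color) only disambiguate the paper's sketched coloring and do not change the argument.
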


\section{Additional Proofs}\label{app:additional_proofs}

\subsection{Proof of the Pigeonhole Principle for Trees}\label{app:additional_proofs:php}

We prove here a more general version of \Cref{item:php} of \cref{prop:php}.

\begin{proposition}[Pigeonhole Principle on Trees]\label{prop:php_general}
	 Let $T$ be a complete binary tree of depth $n$, and let $k\in \bbN$. Then for every vertex coloring of $T$ with $k$ colors $c_1,\ldots,c_k$, and every $a_1,\ldots, a_k\in\bbN$ such that $a_1+\ldots+a_k=n$, there exists a $c_i$-monochromatic subtree of depth $a_i$ for some $i\in[k]$.   
\end{proposition}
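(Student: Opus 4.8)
The plan is to prove \Cref{prop:php_general} by induction on the number of colors $k$ (for fixed $n$), which simultaneously handles the base case and strengthens the hypothesis enough for the induction to go through. For $k=1$ the statement is trivial: a single color $c_1$ with $a_1 = n$ forces the whole tree $T$, which is a $c_1$-monochromatic subtree of depth $n$. For the inductive step, assume the statement holds for $k-1$ colors, and let $T$ be a complete binary tree of depth $n$ with a $k$-coloring and let $a_1 + \dots + a_k = n$.

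The key idea is to look at the longest "run" of color $c_k$ descending from the root along paths of the tree. Concretely, I would define, for each vertex $v$, the quantity $\ell(v)$ to be the largest $a$ such that the subtree rooted at $v$ contains a $c_k$-monochromatic subtree of depth $a$ \emph{rooted at $v$} — equivalently, $v$ is colored $c_k$, both of its children support such a subtree of depth $a-1$, and $\ell(v) = 1 + \min\{\ell(\text{left child}), \ell(\text{right child})\}$ when $v$ has color $c_k$, and $\ell(v) = 0$ otherwise. If $\ell(\text{root}) \geq a_k$ we are immediately done. Otherwise, I claim that there must be a vertex $v$ such that the subtree rooted at $v$ has depth $\geq n - a_k$ (i.e. $v$ is at depth $\leq a_k$ from the root) and $v$ is \emph{not} colored $c_k$. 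Indeed, suppose toward contradiction that every vertex at depth $\leq a_k$ is colored $c_k$; then in particular the root and all its descendants up to depth $a_k$ form a complete binary tree of depth $a_k$ monochromatic in $c_k$ — but this is exactly a $c_k$-monochromatic subtree of depth $a_k$, contradiction. (This is the cleanest version; one should double-check whether depth $a_k$ or $a_k - 1$ is the right threshold, but the idea is robust.) So pick such a $v$: it has some color $c_i$ with $i \neq k$, and the subtree $T_v$ rooted at $v$ is a complete binary tree of depth $\geq n - a_k = a_1 + \dots + a_{k-1} + (\text{possibly a bit extra})$.

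Now restrict the coloring to $T_v$. Since $v$ is not colored $c_k$, but $c_k$ may still appear inside $T_v$, I still have $k$ colors a priori — so the naive induction on $k$ doesn't literally apply to $T_v$ yet. The fix is to be a little more careful about where $v$ sits: instead of "some vertex at depth $\leq a_k$ not colored $c_k$," I would track the \emph{first} non-$c_k$ vertex encountered along a worst-case path, and peel off the $c_k$-prefix. More precisely, repeatedly: if the root is colored $c_k$, recurse into whichever child's subtree has the smaller $\ell$-value; after at most $\ell(\text{root}) < a_k$ steps we reach a vertex $v$ not colored $c_k$ whose subtree has depth $\geq n - (a_k - 1) \geq a_1 + \dots + a_{k-1} + 1$. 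Hmm — but $c_k$ can reappear below $v$, so this still leaves $k$ colors. The genuinely correct route, which I'd adopt, is the one used for \Cref{item:php}: do induction on $k$ but feed the inductive hypothesis a subtree that is large enough that even after the adversary "wastes" the entire budget $a_k$ somewhere, the remaining $k-1$ colors with budgets $a_1, \dots, a_{k-1}$ still satisfy the total-depth bound $a_1 + \dots + a_{k-1}$ on a subtree of depth $n - a_k$. This is handled by the following cleaner induction on $n$ simultaneously with $k$: pick a leaf-to-root consideration. Actually the slick argument: consider the set $U$ of vertices $u$ such that the path from the root to $u$ uses only color $c_k$. If $U$ contains a complete binary subtree of depth $a_k$ we are done; otherwise, by the pigeonhole/tree structure (exactly \Cref{item:php}-type reasoning), there is a subtree $S$ of depth $n - a_k$ whose root lies just below $U$, hence is colored by one of $c_1, \dots, c_{k-1}$ at its root — but more importantly, by choosing $S$ minimally we can arrange that $S$ avoids $c_k$ entirely along... no. The honest statement is: this requires the same combinatorial lemma that underlies \Cref{item:php}, and I would cite/adapt that proof, applying the $(k-1)$-color inductive hypothesis with budgets $a_1, \dots, a_{k-1}$ on a subtree of depth $n - a_k$ obtained by a greedy peeling of a maximal $c_k$-subtree.

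\textbf{Main obstacle.} The delicate point is exactly the step where one passes from "$T$ has no deep $c_k$-monochromatic subtree" to "there is a deep subtree on which only $k-1$ colors matter." Color $c_k$ can be scattered throughout $T$, so one cannot simply delete it; the right move is to show that a maximal $c_k$-rooted subtree has depth $< a_k$, and that \emph{below} its "frontier" one can find a complete binary subtree of depth $\geq n - a_k$, then apply the induction hypothesis there with the reduced budget $(a_1, \dots, a_{k-1})$ summing to $n - a_k$ — noting that $c_k$ may still occur in that subtree but now with budget effectively $0$, which is fine because we only need one of the \emph{other} colors to succeed. Making the frontier argument precise (it is essentially the depth-$\lfloor n/k\rfloor$ argument of \Cref{item:php} run with uneven budgets) is the part that needs care, and it is precisely the content the paper defers to \Cref{app:additional_proofs:php}.
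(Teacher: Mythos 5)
Your proposal has a genuine gap, and you identify it yourself without closing it. The strategy of peeling off a maximal $c_k$-monochromatic prefix at the root and then invoking the $(k-1)$-color inductive hypothesis on a subtree rooted at a non-$c_k$ vertex $v$ cannot work as stated, because nothing prevents $c_k$ from reappearing arbitrarily deep below $v$; the restricted coloring on $T_v$ is still a $k$-coloring, so the inductive hypothesis simply does not apply. The two patches you sketch do not repair this: (i) giving $c_k$ ``budget effectively $0$'' makes the conclusion of the induction vacuous (a depth-$0$ $c_k$-subtree is a single $c_k$-vertex and yields nothing, and it cannot be combined with the $c_k$-path above $v$, which is a path rather than a complete subtree); and (ii) deferring to ``the combinatorial lemma underlying \Cref{item:php}'' is circular, since \Cref{item:php} is exactly the special case $a_1=\dots=a_k=d$ of \Cref{prop:php_general} — the statement you are being asked to prove — so it is not available as an ingredient. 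The proposal ends with an honest admission that the frontier argument is missing, which means no complete proof has been given.

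The missing idea is to avoid localizing $c_k$ near the root altogether. The paper's proof does this in two short steps: for $k=2$ it inducts on the depth $n$ — if the root is colored $c_1$ and neither child subtree contains a $c_2$-subtree of depth $a_2$, then by induction with budgets $(a_1-1,a_2)$ \emph{both} child subtrees contain $c_1$-subtrees of depth $a_1-1$, which recombine with the root into a $c_1$-subtree of depth $a_1$; for $k>2$ it inducts on $k$ by \emph{merging} $c_{k-1}$ and $c_k$ into one color with budget $a_{k-1}+a_k$, and then re-splitting inside the resulting merged-color subtree using the $k=2$ case. (Alternatively, the $k=2$ depth-induction generalizes verbatim to all $k$ with budgets $(a_1-1,a_2,\dots,a_k)$.) Either of these routes sidesteps the recurrence-of-$c_k$ obstruction that your peeling approach runs into; as written, your argument does not.
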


Indeed, \Cref{item:php}  follows from \Cref{prop:php_general} by setting each $a_i$ to be at least $d$, which is the desired depth for the monochromatic subtree in the statement of \Cref{prop:php}.

\begin{proof}
	We first prove the statement for $k=2$. We use induction on the depth $d$ of the tree. The base case $d=0$ is trivial. Consider an arbitrary coloring of a tree $T$ of depth $d$ with two colors,~$c_1,c_2$. Assume without loss of generality that the color of the root $v_{root}$ is $c_1$.
	Let $T_0,T_1$ be the left and right subtrees of $v_{root}$, respectively.
	If there exists a $c_2$-monochromatic subtree of depth~$a_2$ of either $T_0$ or $T_1$, then we are done. If not, since $(a_1-1)+a_2=d-1$, by induction there exist $c_1$-monochromatic subtrees $T_0'$ and $ T_1'$ of $T_0$ and $T_1$ of depth $a_1-1$. $v_{root}$ together with $T_0'$ and $T_1'$ yield a $c_1$-monochromatic subtree of $T$ of depth $a_1$.
	
 For $k>2$, induct on the number of colors $k$. Consider a coloring of $T$ with $k$ colors $c_1,\ldots,c_k$, and let $a_1+\ldots +a_k=n$. Define a new coloring using $k-1$ colors by considering $c_{k-1}$ and $c_k$ as the same new color $c'$. By induction, either there exists a $c_i$-subtree of depth $a_i$ for $1\leq i\leq k-2$,  or there exists a $c'$-subtree of depth $a_{k-1}+a_k$. If the latter holds, then by the base case, when considering again $c_{k-1}$ and $c_k$ as distinct colors, we are done.
\end{proof}

\subsection{Proof of \Cref{lem:counting_subtrees}}\label{app:additional_proofs:counting_subtrees}

\begin{lemma*}[Restatement of \Cref{lem:counting_subtrees}]
Let $T$ be a complete binary tree of depth $n$. Then the number of its level-aligned subtrees of depth $d$ is upper bounded by $\binom{n}{d+1} \cdot 2^{n2^d}$.
\end{lemma*}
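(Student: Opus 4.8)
The plan is to bound the count by injecting level-aligned subtrees of depth $d$ into a much smaller set of vertex subsets of $T$. Fix a level-aligned subtree $S$ of depth $d$. Because $S$ is level-aligned, all $2^{d}$ leaves of $S$ (its vertices at depth $d$ inside $S$) lie at one common level $\ell_d$ of $T$, and these leaves inherit a canonical left-to-right order $u_1<u_2<\dots<u_{2^d}$ from the in-order traversal of $T$ (which agrees with their order inside $S$, since ``left'' means the same thing in $S$ and in $T$). The key claim I would establish is that $S$ is completely determined by its (unordered) leaf set: the $2^{d-1}$ vertices of $S$ at subtree-depth $d-1$ must be exactly $\LCA(u_1,u_2),\LCA(u_3,u_4),\dots,\LCA(u_{2^d-1},u_{2^d})$ computed in $T$. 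This is forced by \Cref{def:subtree}: if $w$ is the $S$-parent of a sibling leaf pair $u_{2j-1},u_{2j}$, then $w$'s left $S$-child is a $T$-descendant of $w$'s left $T$-child (and symmetrically on the right), so $u_{2j-1}$ lies in the left $T$-subtree of $w$ and $u_{2j}$ in the right $T$-subtree, whence $w=\LCA(u_{2j-1},u_{2j})$. Iterating this reconstruction (halving the number of vertices at each step, with base case $d=0$ being a single vertex that equals its unique leaf) recovers $S$ uniquely.

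Granting the claim, the number of level-aligned subtrees of depth $d$ is at most the number of $2^{d}$-element subsets of $V(T)$ contained in a single level of $T$, i.e.\ $\sum_{\ell=0}^{n}\binom{2^\ell}{2^d}=\sum_{\ell=d}^{n}\binom{2^\ell}{2^d}$ (the terms with $\ell<d$ vanish since a level with fewer than $2^d$ vertices contains no such subset). From here it is a routine estimate to reach $\binom{n}{d+1}\cdot 2^{n2^d}$: using $\binom{2^\ell}{2^d}\le 2^{\ell 2^d}\le 2^{n2^d}$ for each admissible $\ell$, or, equivalently, first fixing the $d+1$ occupied levels $\ell_0<\dots<\ell_d$ (at most $\binom{n+1}{d+1}$ choices, and the excess over $\binom{n}{d+1}$ is comfortably absorbed in the regime $n\ge d+1$ where depth-$d$ subtrees with room to spare exist) and bounding the number of subtrees on a fixed level set by $2^{\ell_d 2^d}\le 2^{n2^d}$ via the leaf-determination claim applied within level $\ell_d$.

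The step I expect to be the main obstacle — the only part that is more than bookkeeping — is the uniqueness/injectivity claim: one must verify carefully, by induction on subtree depth, that the iterated-$\LCA$ reconstruction both returns the correct level-$i$ vertices of $S$ and genuinely produces a level-aligned subtree with the prescribed leaves, so that $S\mapsto(\text{leaf set of }S)$ is indeed injective on level-aligned subtrees of depth $d$. If one prefers to sidestep the $\LCA$ argument entirely, an alternative is to fix the occupied levels $\ell_0<\dots<\ell_d$ and parametrize $S$ by its ``broom'' data — for each internal vertex of $S$, the two $\{0,1\}$-strings encoding the $T$-paths from it to its two $S$-children — and then observe that the total number of bits used is $\ell_0+\sum_{i=0}^{d-1}2^{\,i+1}(\ell_{i+1}-\ell_i-1)\le 2^{d}(\ell_d-d)\le 2^{d}n$, since the gaps $\ell_{i+1}-\ell_i$ sum to at most $n$; this again bounds the number of subtrees per level set by $2^{n2^d}$, and multiplying by the number of level sets gives the stated bound.
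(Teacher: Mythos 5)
Your argument is correct, but it proves the bound by a genuinely different route than the paper. The paper counts directly: it fixes the $T$-levels $l_0<\dots<l_d$ occupied by the subtree and multiplies, level by level, the number of ways each vertex can choose its two children ($2^{2(a_{i+1}-1)}$ per vertex, $2^{l_0}$ for the root), arriving at $2^{l_0+\sum_i 2^i(a_i-1)}\le 2^{n2^d}$ per level set; your second, ``broom-data'' finish is exactly this computation, so that variant coincides with the paper's proof. Your primary route --- showing that a level-aligned subtree is determined by its leaf set via iterated $\LCA$'s, hence injecting subtrees into $2^d$-element subsets of a single level --- does not appear in the paper, and the key reconstruction step is sound: left/right preservation in \Cref{def:subtree} forces the $S$-parent of a sibling leaf pair to be their $T$-$\LCA$, the $T$-order of the leaves agrees with their $S$-order, and truncation lets you iterate. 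What this buys is a structurally cleaner injectivity statement and in fact a numerically stronger bound, $\sum_{\ell=d}^{n}\binom{2^\ell}{2^d}$, which is smaller than $\binom{n}{d+1}2^{n2^d}$ by roughly a factor $(2^d)!$; what it costs is a little care in the final bookkeeping, since your quick estimate $(n-d+1)2^{n2^d}\le\binom{n}{d+1}2^{n2^d}$ needs $\binom{n}{d+1}\ge n-d+1$, which holds in the regime $1\le d\le n-2$ relevant to the paper but not at the extremes (e.g.\ $d=n-1$, or $d=0$ with tiny $n$); there one should instead use the slack in $\binom{2^\ell}{2^d}\le 2^{\ell 2^d}$ (a factor $(2^d)!$ at $\ell=n$ and $2^{2^d}$ per lower level). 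Note also that the degenerate case $d=n$, where the stated bound is vacuously $0$ yet one subtree exists, is an edge case of the lemma itself (the number of level choices is really $\binom{n+1}{d+1}$), and the paper's own proof shares this off-by-one; it is immaterial for the application, but worth being aware that your ``absorption'' remark is doing real work only when $d<n$.
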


\begin{proof}
    By hypothesis, the subtrees are level-aligned i.e.\ the vertices at the same distance from the root lie at the same level in the original tree. For a fixed subtree $S$, we consider its levels to be labeled by numbers between $0$, corresponding to the root of $S$, and $d$ corresponding to its leaves. Let $l_i$ (for $i=0,\ldots, d$) be the level in $T$ corresponding to level $i$ in $S$ and let $a_i = l_i - l_{i-1}$ (for $i=1,\ldots, d$).

    For a fixed sequence $\{l_i\}_{i=0}^{d}$, we first upper bound the number of possible subtrees $S$ that respect this choice of levels. Observe the following: for any vertex $v$ at a level $i$ the number of possible choices for children at level $i+1$ is exactly $2^{2(a_{i+1}-1)}$. Indeed, the number of left descendants of $v$ at level $l_{i+1}$ is $2^{a_{i+1}} / 2$, and the same goes for the right descendants, which implies that the number of possible couples is $2^{2(a_{i+1}-1)}$. The number of possible choices for a root instead is $2^{l_0}$. The number of possible subtrees respecting the choice of levels is then:
    
    \begin{align*}
    2^{l_0} \cdot \prod_{i=1}^{d} 2^{2(a_i -1) 2^{i-1}} &=\\
    2^{l_0 + \sum_{i=1}^{d} 2^i(a_i -1)} &\le\\
    2^{l_0 + 2^d\sum_{i=1}^{d}(a_i -1)} &\le\\
    2^{l_0 + 2^d(n - d - l_0)} &\le 2^{n2^d}
    \end{align*}
    
    where we employed the fact that $l_0 + \sum_{i=1}^{d} a_i \le n$. Given that the number of possible choices of levels is $\binom{n}{d+1}$, we obtain the desired result.
\end{proof}

\end{document}